\newtheoremstyle{nospace}
{2pt}   				
{2pt}   				
{\itshape}  			
{} 		  		    	
{\bfseries} 			
{.}         			
{5pt plus 1pt minus 1pt}
{}          			
\theoremstyle{nospace} \newtheorem{theorem}{Theorem}
\theoremstyle{nospace} 
\theoremstyle{nospace} \newtheorem{lemma}[theorem]{Lemma}
\theoremstyle{nospace} \newtheorem{remark}{Remark}
\theoremstyle{nospace} 
\theoremstyle{nospace} \newtheorem{definition}{Definition}
\theoremstyle{nospace} 
\theoremstyle{nospace}
\newcommand{\argmin}{\operatornamewithlimits{argmin}}
\newcommand{\argmax}{\operatornamewithlimits{argmax}}
\newcommand{\softmin}{\operatornamewithlimits{softmin}}
\newcommand{\RR}{\mathbb{R}}
\newcommand{\W}{\mathcal{W}}
\newcommand{\D}{\mathcal{D}}
\newcommand{\Z}{\mathcal{Z}}
\newcommand{\PP}{\mathbb{P}}
\newcommand{\Pp}{\mathcal{P}}
\newcommand{\F}{\mathcal{F}}
\newcommand{\B}{\mathcal{B}}
\newcommand{\U}{\mathcal{U}}
\newcommand{\EE}{\mathbb{E}}
\newcommand{\I}{\mathcal{I}}
\newcommand{\J}{\mathcal{J}}
\newcommand{\N}{\mathcal{N}}
\newcommand*\Eval[3]{\left.#1\right\rvert_{#2}^{#3}}
\newcommand{\revision}[1]{{\color{black}{#1}}}
\begin{document}

\title{Risk-sensitive Inverse Reinforcement Learning \\ via Semi- and Non-Parametric Methods}
\date{}


%

\author[1]{Sumeet Singh}
\author[2]{Jonathan Lacotte}
\author[3]{Anirudha Majumdar}
\author[1]{Marco Pavone}
\affil[1]{Department of Aeronautics and Astronautics, Stanford University \thanks{\{ssingh19, pavone\}@stanford.edu}}
\affil[2]{Department of Electrical Engineering, Stanford University \thanks{lacotte@stanford.edu}}
\affil[3]{Department of Mechanical and Aerospace Engineering, Princeton University \thanks{ani.majumdar@princeton.edu}}



%

\maketitle

\begin{abstract}
The literature on Inverse Reinforcement Learning (IRL) typically  assumes that humans take actions in order to minimize the expected value of a cost function, i.e., that humans are {\em risk neutral}. Yet, in practice, humans are often far from being risk neutral.
To fill this gap, the objective of this paper is to devise a framework for {\em risk-sensitive} IRL in order to explicitly account for a human's risk sensitivity.  
To this end, we propose a flexible class of models based on \emph{coherent risk measures}, which allow us to capture an entire spectrum of risk preferences from risk-neutral to worst-case. We propose efficient non-parametric algorithms based on linear programming and semi-parametric algorithms based on maximum likelihood for inferring a human's underlying risk measure and cost function for a rich class of static and dynamic decision-making settings. The resulting approach is demonstrated on a simulated driving game with ten human participants. Our method is able to infer and mimic a wide range of qualitatively different driving styles from highly risk-averse to risk-neutral in a data-efficient manner. Moreover, comparisons of the Risk-Sensitive (RS) IRL approach with a risk-neutral model show that the RS-IRL framework more accurately captures observed participant behavior both qualitatively and quantitatively, especially in scenarios where catastrophic outcomes such as collisions can occur.
\end{abstract}


\section{Introduction}\label{sec:introduction}

Imagine a world where robots and humans coexist and work seamlessly together. In order to realize this vision, robots \revision{should, among other things,} be able to (1) accurately predict the actions of humans in their environment, (2) quickly learn the preferences of human agents in their proximity and act accordingly, and (3) learn how to accomplish new tasks from human demonstrations. Inverse Reinforcement Learning (IRL) \citep{Russell1998, NgRussell2000, AbbeelNg2005, LevineKoltun2012, RamachandranAmir2007, ZiebartMaasEtAl2008, EnglertToussaint2015}  is a powerful and flexible framework for tackling these challenges and has been previously used for a wide range of tasks, including modeling and mimicking human driver behavior \citep{AbbeelNg2004, KudererGulatiEtAl2015, SadighSastryEtAl2016},  pedestrian trajectory prediction \citep{ZiebartRatliffEtAl2009, MombaurTruongEtAl2010,KretzschmarSpiesEtAl2016}, and legged robot locomotion \citep{ZuckerBagnellEtAl2010, KolterAbbeelEtAl2007, ParkLevine2013}. More recently, the popular technique of Max-Entropy (MaxEnt) IRL, an inspiration for some of the techniques leveraged in this work, has been adopted in a deep learning framework~\citep{WulfmeierOndruskaEtAl2015}, and embedded within the guided policy optimization algorithm~\citep{FinnLevineEtAl2016}. The underlying assumption behind IRL is that humans act optimally with respect to an (unknown) cost function. The goal of IRL is then to infer this cost function from observed actions of the human. By learning the human's underlying preferences (in contrast to, e.g., directly learning a policy for a given task), IRL allows one to generalize one's predictions to novel scenarios and environments.


The prevalent modeling assumption made by existing IRL techniques is that humans take actions in order to minimize the \emph{expected value} of a random cost. Such a model, referred to as the expected value (EV) model, implies that humans are \emph{risk neutral} with respect to the random cost; yet, humans are often far from being risk neutral. A generalization of the EV model is represented by the expected utility (EU) theory in economics~\citep{NeumannMorgenstern1944}, whereby one assumes that a human is an optimizer of the expected value of a disutility function of a random cost. Despite the historical prominence of EU theory in modeling human behavior, a large body of literature from the theory of human decision making strongly suggests that humans behave in a manner that is \emph{inconsistent} with the EU model. At a high level, the EU model has two main limitations: (1) experimental evidence consistently confirms that this model is lacking in its ability to describe human behavior in risky scenarios~\citep{Allais1953,Ellsberg1961,KahnemanTversky1979}, and (2)  the EU model assumes that humans make no distinction between scenarios in which the probabilities of outcomes are known and ones in which they are unknown, which is often not the case. Consequently, a robot interacting with a human in a safety-critical setting (e.g., autonomous driving or navigation using shared autonomy), while leveraging such an inference model, could make incorrect assumptions about the human agent's behavior, potentially leading to catastrophic outcomes.

The known and unknown probability scenarios are referred to as \emph{risky} and \emph{ambiguous} respectively in the decision theory literature. An elegant illustration of the role of ambiguity is provided by the \emph{Ellsberg paradox}~\citep{Ellsberg1961}. Imagine an urn (Urn 1) containing 50 red and 50 black balls. Urn 2 also contains 100 red and black balls, but the relative composition of colors is unknown. Suppose that there is a payoff of $\$10$ if a red ball is drawn (and no payoff for black). In human experiments, subjects display an overwhelming preference towards having a ball drawn from Urn 1. However, now suppose the subject is told that a black ball has $\$10$ payoff (and no payoff for red). Humans \emph{still} prefer to draw from Urn 1. This is a {\em paradox}, since choosing to draw from Urn 1 in the first case (payoff for red) indicates that the human assesses the proportion of red in Urn 1 to be higher than in Urn 2, while choosing Urn 1 in the second case (payoff for black) indicates that the human assesses a lower proportion of red in Urn 1 than in Urn 2. Indeed, there is no utility function for the two outcomes that can resolve such a contradictory assessment of underlying probabilities since it stems from a subjective distortion of outcome \emph{probabilities} rather than \emph{rewards}. 

The limitations of EU theory in modeling human behavior has prompted substantial work on various alternative theories such as rank-dependent expected utility~\citep{Quiggin1982}, expected uncertain utility~\citep{GulPesendorfer2014}, dual theory of choice (distortion risk measures)~\citep{Yaari1987}, prospect theory~\citep{KahnemanTversky1979, Barberis2013}, and many more (see~\citep{MajumdarPavone2017} for a recent review of the various axiomatic underpinnings of these risk measures). Further, one way to interpret the Ellsberg paradox is that humans are not only risk averse, but are also \emph{ambiguity averse} -- an observation that has sparked an alternative set of literature in decision theory on ``ambiguity-averse'' modeling; see, e.g., the recent review~\citep{GilboaMarinacci2016}. It is clear that the assumptions made by EU theory thus represent significant restrictions from a modeling perspective in an IRL context since a human expert is likely to be both risk and ambiguity averse, especially in safety critical applications such as driving where outcomes are inherently ambiguous and can possibly incur very high cost. 

The key insight of this paper is to address these challenges by modeling humans as evaluating costs according to an (unknown) \emph{risk measure}. A risk measure is a function that maps an uncertain cost to a real number (the expected value is thus a particular risk measure and corresponds to risk neutrality). In particular, we will consider the class of \emph{coherent risk measures} (CRMs) \citep{ArtznerDelbaenEtAl1999, Shapiro2009, Ruszczynski2010}. CRMs were proposed within the operations research community and have played an influential role within the modern theory of risk in finance \citep{RockafellarUryasev2000, AcerbiTasche2002, Acerbi2002,Rockafellar2007}. This theory has also recently been adopted for risk-sensitive (RS) Model Predictive Control and decision making \citep{ChowPavone2014, ChowTamarEtAl2015}, and guiding autonomous robot exploration for maximizing information gain in time-varying environments \citep{AxelrodCarloneEtAl2016}. 

Coherent risk measures enjoy a number of useful properties that jointly provide key advantages over EV and EU theories in the context of IRL. First, they capture an entire spectrum of risk assessments from risk-neutral to worst-case and thus offer a significant degree of modeling flexibility. Second, they capture risk sensitivity in an \emph{axiomatically justified} manner; specifically, they formally capture a number of intuitive properties that one would expect any risk measure should satisfy (see Section \ref{sec:coherent risks}). Third, a representation theorem for CRMs (Section \ref{sec:coherent risks}) implies that they can be interpreted as computing the expected value of a cost function in a worst-case sense over a \emph{set} of probability distributions (referred to as the \emph{risk envelope}). Thus, CRMs capture both risk and ambiguity aversion within the \emph{same modeling framework} since the risk envelope can be interpreted as capturing uncertainty about the underlying probability distribution that generates outcomes in the world. Finally, they are tractable from a computational perspective; the representation theorem allows us to solve both the inverse and forward problems in a computationally tractable manner for a rich class of static and dynamic decision-making settings. 

\emph{Statement of contributions:} This paper presents an IRL algorithm that explicitly takes into account risk sensitivity under \emph{general} axiomatically-justified risk models that jointly capture risk and ambiguity within the same modeling framework.
To this end, this paper makes four primary contributions. First, we propose a flexible modeling framework for capturing risk sensitivity in humans by assuming that the human demonstrator (hereby referred to as the ``expert'') acts according to a CRM. This framework allows us to capture an entire spectrum of risk assessments from risk-neutral to worst-case. Second, we develop efficient algorithms based on Linear Programming (LP) for inferring an expert's underlying risk measure for a broad range of static (Section~\ref{sec:single period}) decision-making settings, including a proof of convergence of the predictive capability of the algorithm in the case where we only attempt to learn the risk measure. We additionally consider cases where both the cost and risk measure of the expert are unknown. Third, we develop a maximum likelihood based model for inferring the expert's risk measure and cost function for a rich class of dynamic decision-making settings (Section~\ref{sec:multi period}), generalizing our work in~\citep{MajumdarSinghEtAl2017}. Fourth,
we demonstrate our approach on a simulated driving game (visualized in Figure \ref{fig:visualization}) using a state-of-the-art commercial driving simulator and present results on ten human participants (Section~\ref{sec:results}). We show that our approach is able to infer and mimic qualitatively different driving styles ranging from highly risk-averse to risk-neutral using only a minute of training data from each participant. We also compare the predictions made by our risk-sensitive IRL (RS-IRL) approach with one that models the expert using expected value theory and demonstrate that the RS-IRL framework more accurately captures observed participant behavior both qualitatively and quantitatively, especially in scenarios involving significant risk to the participant-driven car.

\begin{figure}[h]
\centering
\begin{subfigure}[t]{0.3\textwidth}
	\includegraphics[width=\textwidth]{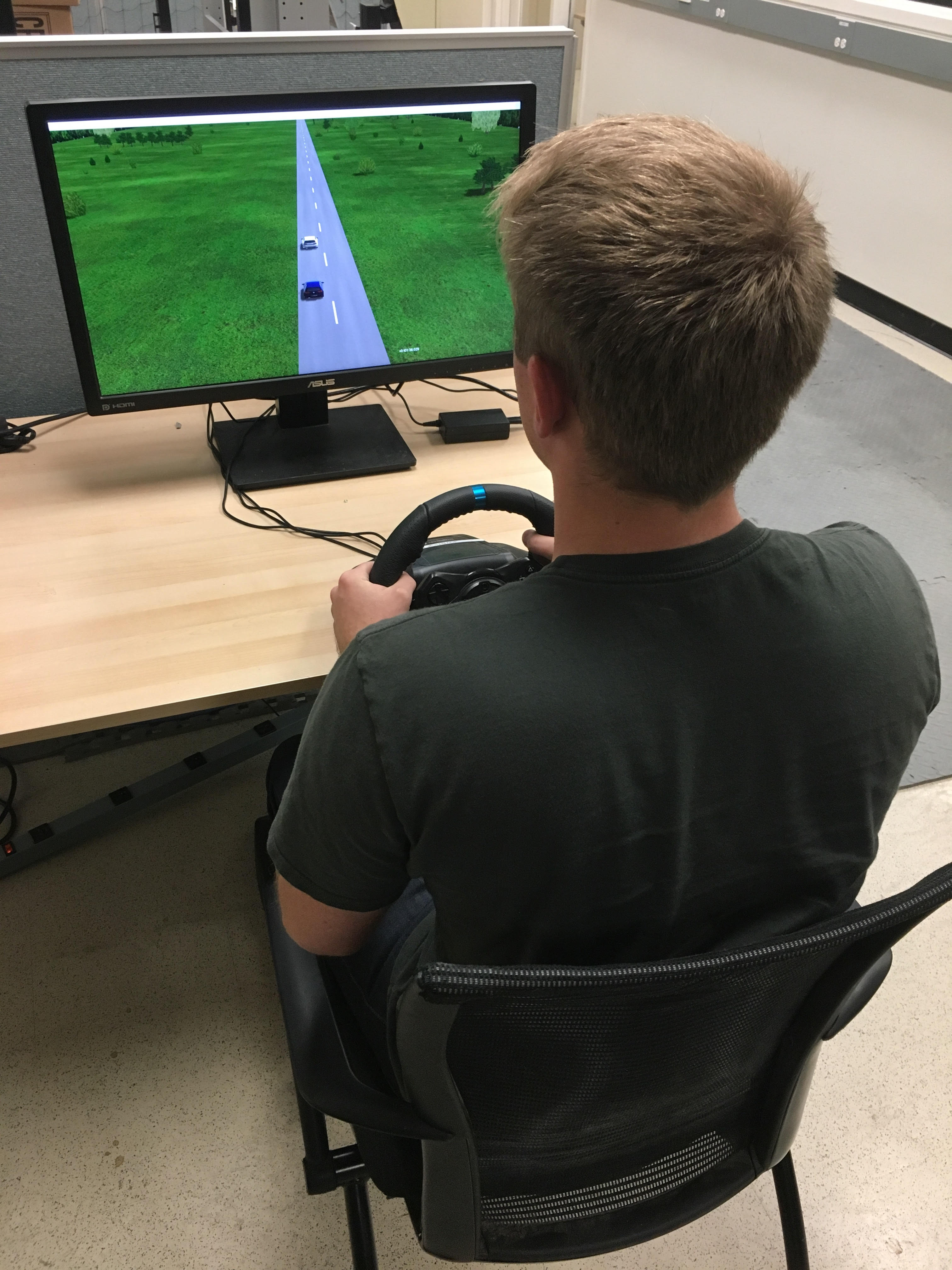}
	\caption{Visualization of simulator during the interactive game experiment as seen by participant.}
	\label{fig:visu_part} 
\end{subfigure}
\qquad
\begin{subfigure}[t]{0.3\textwidth}
	\includegraphics[width=\textwidth]{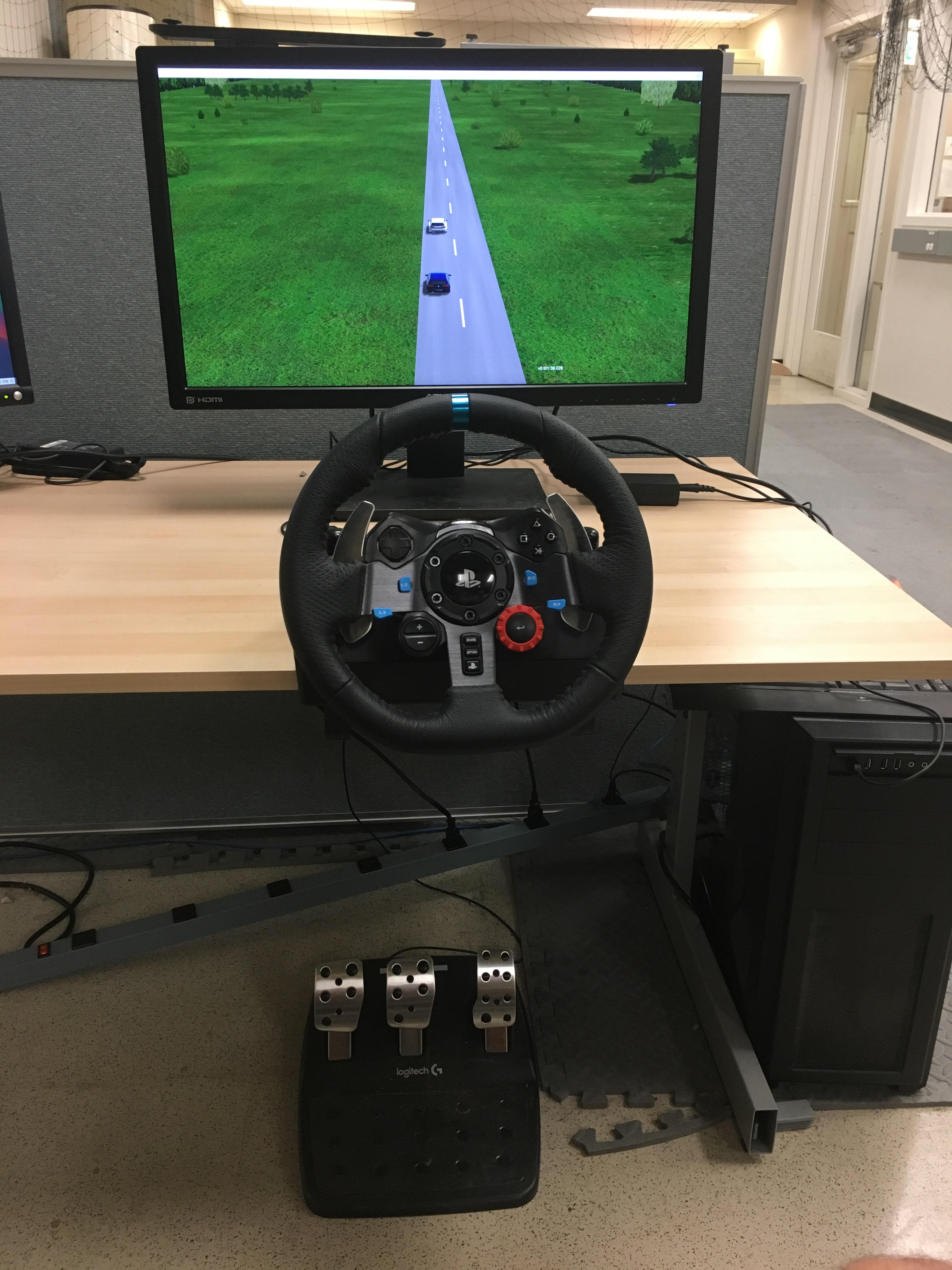}
	\caption{Logitech G29 game input hardware consists of a force-feedback steering wheel and accelerator and brake pedals.}
	\label{fig:hardware} 
\end{subfigure}
\caption{The simulated driving game considered in this paper. The human controls the follower car using a force-feedback steering wheel and two pedals and must follow the leader (an ``erratic driver") as closely as possible without colliding. We observed a wide range of behaviors from participants reflecting varying attitudes towards risk.}
\label{fig:visualization}
\end{figure}

\emph{Related Work:} Safety-critical control and decision making applications demand increased resilience to events of low probability and detrimental consequences (e.g., a UAV crashing due to unexpectedly large wind gusts or an autonomous car failing to accommodate for an erratic neighboring vehicle). Such problems have inspired the recent advancement of various restricted versions of the problems considered here. In particular, there is a large body of work on RS decision making. For instance, in~\citep{HowardMatheson1972} the authors leverage the exponential (or entropic) risk. This has historically been a very popular technique for parameterizing risk-attitudes in decision theory but suffers from the usual drawbacks of the EU framework such as the calibration theorem \citep{Rabin2000}. The latter states that very little risk aversion over moderate costs leads to unrealistically high degrees of risk aversion over large costs, which is undesirable from a modeling perspective. Other RS Markov Decision Process (MDP) formulations include Markowitz-inspired mean-variance~\citep{FilarKallenbergEtAl1989,TamarDiCastroEtAl2012}, percentile criteria on objectives~\citep{WuYuanlie1999} and constraints~\citep{GeibelWysotzki2005}, and cumulative prospect theory~\citep{PrashanthJieEtAl2016}. This has driven research in the design of learning-based solution algorithms, i.e., RS reinforcement learning~\citep{MihatschNeuneier2002,BaeuerleOtt2011,TamarDiCastroEtAl2012,PetrikSubramanian2012,ShenTobiaEtAl2014,TamarChowEtAl2016}. Ambiguity in MDPs is also well studied via the robust MDP framework, see, e.g.,~\citep{NilimElGhaoui2005,XuMannor2010}, as well as~\citep{Osogami2012,ChowTamarEtAl2015} where the risk and ambiguity duality of CRMs is exploited. The key difference between this literature and the present work is that we consider the \emph{inverse} reinforcement learning problem.

Results in the RS-IRL setting are more limited and have largely been pursued in the 
\emph{neuroeconomics} literature \citep{GlimcherFehr2014}. For example, \citep{HsuBhattEtAl2005} performed Functional Magnetic Resonance Imaging (FMRI) studies of humans making decisions in risky and ambiguous settings and modeled risk and ambiguity aversion using parametric utility and weighted probability models. In a similar vein, \citep{ShenTobiaEtAl2014} models risk aversion using utility based shortfalls (with utility functions fixed a priori) and presents FMRI studies on humans performing a sequential investment task. While this literature may be interpreted in the context of IRL, the models used to predict risk and ambiguity aversion are quite limited.  Risk in~\citep{SadighSastryEtAl2016b} is captured via a \emph{single} parameter to represent the aggressiveness of the expert driver -- a fairly limited model that additionally does not account for probabilistic uncertainty. More recently, the authors in~\citep{RatliffMazumdar2017} leverage the shortfall-risk model and associated $Q-$value decomposition introduced in~\citep{ShenTobiaEtAl2014} to devise a gradient-based RS-IRL algorithm. The model again assumes an a priori known risk measure and parameterized utility function and the learning loss function is taken to be the likelihood of the observed actions assuming the Boltzmann distribution fit to the optimal $Q-$values. There are two key limitations of this approach. First, learning is performed assuming a known utility function and risk measure -- both of which, in general, are difficult to fix \emph{a priori} for a given application. Second, computing gradients involves taking expectations with respect to the optimal policy as determined by the current value of the parameters. This must be determined by solving the fixed-point equations defining the ``forward'' RL problem -- a computationally demanding task for large or infinite domains. This limitation is not an artifact of RS-IRL but in fact a standard complexity issue in any MaxEnt IRL-based algorithm. In contrast, this work (1) harnesses the elegant dual representation results for CRMs to avoid having to assume a known risk measure, and (2) solves a significantly less complex forward problem by leveraging a receding-horizon planning model for the expert -- a technique used to great effect also in~\citep{SadighSastryEtAl2016}. 

A first version of this work was presented in~\citep{MajumdarSinghEtAl2017}. In this revised and extended edition, we include the following additional contributions: (1) a significant improvement in the multi-step RS-IRL model which now accounts for an expert planning over sequential disturbance modes (as opposed to the single-stage model in~\citep{MajumdarSinghEtAl2017}); (2) a formal proof of convergence guaranteeing that in the limit, the single-step RS-IRL model will exactly replicate the expert's behavior; (3) introduction of a new maximum likelihood based approach for inferring both the risk measure and cost function for the multi-step model without assuming any a priori functional form; (4) extensive experimental validation on a realistic driving simulator where we demonstrate a significant improvement in predictive performance enabled by the RS-IRL algorithm over the standard risk-neutral model.

\section{Problem Formulation}
\label{sec:problem formulation}

\subsection{Dynamics}

Consider the following discrete-time dynamical system:
\begin{equation}
\label{eq:dynamics}
x_{k+1} = f(x_k, u_k, w_k),
\end{equation}
where $k$ is the time index, $x_k \in \RR^n$ is the state, $u_k \in \RR^m$ is the control input, and $w_k \in \W$ is the disturbance. The control input is assumed to be bounded component-wise: $ u_k \in \mathcal{U}:= \{u : u^- \leq u  \leq u^+ \}$. We take $\W$ to be a finite set $\{w^{[1]},\dots,w^{[L]}\}$ with probability mass function (pmf) $p := [p(1),p(2),\dots,p(L)]$, where $\sum_{i=1}^L p(i) = 1$ and $p(i) > 0, \forall i \in \{1,\dots,L\}$. The time-sampling of the disturbance $w_k$ will be discussed in Section~\ref{sec:multi period}. We assume that we are given demonstrations from an \emph{expert} in the form of sequences of state-control pairs $\{(x_k^*,u_k^*)\}_k$
 and that the expert has knowledge of the underlying dynamics \eqref{eq:dynamics} and disturbance realizations $\W$, but not the disturbance pmf $p$. We will refer back to this assumption within the context of the experimental setting in Section~\ref{sec:results}.

\subsection{Model of the Expert}
\label{sec:coherent risks}

We model the expert as a \emph{risk-sensitive} decision-making agent acting according to a \emph{coherent risk measure} (defined formally below). We refer to such a model as a \emph{coherent risk model}. 

We assume that the expert has a cost function $C(x_k,u_k)$ that captures his/her preferences about outcomes. 
Let $Z$ denote the cumulative cost accrued by the agent \revision{when planning over some finite horizon into the future}. 
Since the process $\{x_k\}$ is stochastic, $Z$ is a random variable adapted to the sequence $\{x_k\}$. A \emph{risk measure} is a function $\rho(Z)$ that maps this uncertain cost to a real number. We will assume that the expert is assessing risks according to a \emph{coherent risk measure}, defined as,

\begin{definition}[Coherent Risk Measures]
\label{def:coherent}
Let $( {\Omega},  {\F},  {\PP})$ be a probability space and let $\Z$ be the space of random variables on $\Omega$. A coherent risk measure (CRM) is a mapping $\rho: \Z \rightarrow \RR$ that obeys the following four axioms. For all $Z,Z' \in \Z$:

{\bf A1. Monotonicity:} $Z \leq Z' \Rightarrow \rho(Z) \leq \rho(Z')$.
\vspace{0.25em} 

{\bf A2. Translation invariance:} $\forall a \in \RR$, $\rho(Z+a) = \rho(Z)+a$.
\vspace{0.25em} 

{\bf A3. Positive homogeneity:} $\forall \lambda  \geq 0$, $\rho(\lambda Z) = \lambda \rho(Z)$.
\vspace{0.25em} 

{\bf A4. Subadditivity:} $\rho(Z + Z') \leq \rho(Z) + \rho(Z')$.
\end{definition}
These axioms were originally proposed in \citep{ArtznerDelbaenEtAl1999} to ensure the ``rationality" of risk assessments. For example, A1 states that if a random cost $Z$ is less than or equal to a random cost $Z'$ \emph{regardless of the disturbance realizations}, then $Z$ must be considered less risky (one may think of the cost distributions $Z$ and $Z'$ stemming from different control policies). A4 reflects the intuition that a risk-averse agent should prefer to \emph{diversify}. We refer the reader to \citep{ArtznerDelbaenEtAl1999,MajumdarPavone2017} for a thorough justification of these axioms. \revision{We provide, below, a hallmark example of coherent risk measures, the Conditional Value-at-Risk (CVaR) at level $\alpha \in (0,1]$. 

For an integrable cost random variable $Z \in \Z$, let the quantity $v_{1-\alpha}(Z) := \inf\{z \in \mathbb{R} \,|\,\PP(Z \le z) \ge 1-\alpha\}$ denote its $(1-\alpha)$-quantile (also referred to as the Value-at-Risk, or VaR). For continuous distributions\footnote{More general definitions can be found in~\citep{RockafellarUryasev2002}.}, $\mathrm{CVaR}_{\alpha}(Z)$ is defined as:
\[
 	\mathrm{CVaR}_{\alpha}(Z) := \mathbb{E}[Z\,|\,Z \ge v_{1-\alpha}(Z)].
\]
That is, $\mathrm{CVaR}_{\alpha}(Z)$ is the expected value of the $\alpha$-tail distribution of $Z$ (see Figure~\ref{fig:cvar}). In particular, one can show that  when $\alpha = 1$, $\mathrm{CVaR}_{\alpha}(Z) $ reduces to the standard expected value $\mathbb{E}[Z]$. Thus, the expected value is a \emph{special case} of CVaR. See~\citep[Chapter 6]{ShapiroDentchevaEtAl2014} for additional examples within this rich class of risk measures, which include CVaR, mean absolute semi-deviation, spectral risk measures, optimized certainty equivalent, and the distributionally robust risk.

\begin{figure}[h]
\centering
	\includegraphics[width=0.5\textwidth]{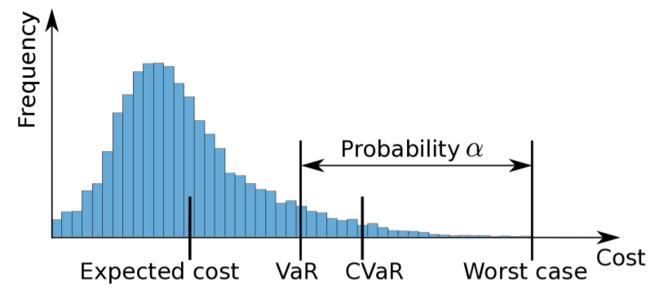}
	\caption{Illustration of the $\mathrm{CVaR}_{\alpha}$ CRM. $\mathrm{CVaR}_{\alpha}(Z)$ quantifies the mean of the $\alpha$-tail of the cost distribution of $Z$.}
\label{fig:cvar}
\end{figure}

}

An important characterization of CRMs is provided by the following representation theorem.

\begin{theorem}[Representation Theorem for Coherent Risk Measures \citep{ArtznerDelbaenEtAl1999}] \label{thm:CRM}
Let $( {\Omega},  {\F},  {\PP})$ be a probability space, where $ {\Omega}$ is a finite set with cardinality $| {\Omega}|$, $ {\F}$ is the $\sigma-$algebra over subsets in $\Omega$ (i.e., $ {\F} = 2^{ {\Omega}}$), probabilities are assigned according to $ {\PP} = ({p}(1),{p}(2),\dots,{p}(| {\Omega}|))$, and $ {\Z}$ is the space of random variables on $ {\Omega}$. Denote by $\mathcal{C}$ the set of probability densities:
\begin{equation}  
\mathcal{C} := \left \{  {\zeta} \in \RR^{| {\Omega}|} \ | \ \sum_{i=1}^{| {\Omega}|} {p}(i){\zeta}(i) = 1, \, {\zeta} \geq 0 \right \}. 
\end{equation}
Define ${q}_{ {\zeta}} \in \RR^{| {\Omega}|}$ where ${q}_{ {\zeta}}(i) = {p}(i) {\zeta}(i)$, $i = 1,\ldots,| {\Omega}|$. A risk measure $\rho:  {\Z} \rightarrow \RR$ with respect to the space $(\Omega, \mathcal{F}, \PP)$ is a CRM if and only if there exists a compact convex set $\B \subset \mathcal{C}$ such that for any $ {Z} \in  {\Z}$:
\begin{equation}  
\rho( {Z}) = \underset{ {\zeta} \in \B}{\max} \  \EE_{{q}_{ {\zeta}}} [ {Z}] = \underset{ {\zeta} \in \B}{\max} \ \sum_{i=1}^{| {\Omega}|} {p}(i) {\zeta}(i)  {Z}(i).
\label{rho_den_def}
\end{equation}
\end{theorem}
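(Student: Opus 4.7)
The plan is to establish the two directions of the equivalence separately, with the bulk of the work lying in the forward (``only if'') direction, which requires a duality argument to construct $\mathcal{B}$ from an abstract $\rho$.

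For the sufficiency direction, I would assume $\rho(Z) = \max_{\zeta \in \B} \mathbb{E}_{q_\zeta}[Z]$ for some compact convex $\B \subset \mathcal{C}$, and verify each axiom directly. Since $q_\zeta(i) = p(i)\zeta(i) \ge 0$ for every $\zeta \in \mathcal{C}$, pointwise domination $Z \le Z'$ gives $\mathbb{E}_{q_\zeta}[Z] \le \mathbb{E}_{q_\zeta}[Z']$ for each admissible $\zeta$, so A1 follows by taking maxima. For A2 I would use the defining constraint $\sum_i p(i)\zeta(i)=1$ of $\mathcal{C}$, which makes $q_\zeta$ a probability mass function and thus yields $\mathbb{E}_{q_\zeta}[Z+a]=\mathbb{E}_{q_\zeta}[Z]+a$ uniformly in $\zeta$. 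Homogeneity A3 is immediate because both expectation (in $Z$) and pointwise maximization are positively homogeneous, and A4 follows from $\max_{\zeta}(f(\zeta)+g(\zeta))\le\max_\zeta f+\max_\zeta g$ applied with $f(\zeta)=\mathbb{E}_{q_\zeta}[Z]$ and $g(\zeta)=\mathbb{E}_{q_\zeta}[Z']$.

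For the necessity direction, I would view $\rho$ as a real-valued function on the finite-dimensional Euclidean space $\mathbb{R}^{|\Omega|}$ equipped with the weighted inner product $\langle Z_1,Z_2\rangle_p := \sum_i p(i) Z_1(i) Z_2(i)$. Axioms A3 and A4 make $\rho$ positively homogeneous and subadditive, hence sublinear; A1 and A2 force $\rho$ to be finite and Lipschitz continuous. The natural candidate set is
\begin{equation*}
    \B := \bigl\{\zeta \in \mathbb{R}^{|\Omega|} \,:\, \langle \zeta, Z\rangle_p \le \rho(Z)\ \text{for all } Z \in \mathcal{Z}\bigr\},
\end{equation*}
which is closed and convex by construction. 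The representation $\rho(Z) = \max_{\zeta \in \B}\langle \zeta, Z\rangle_p$ then follows from the standard support-function characterization of closed sublinear functions, which in finite dimensions is a direct consequence of a separating-hyperplane (Hahn--Banach) argument applied to the epigraph of $\rho$ together with a point $(Z_0,\rho(Z_0)-\varepsilon)$ lying strictly below it. I expect this convex-duality step to be the main conceptual obstacle, though in finite dimensions it is entirely classical.

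It then remains to verify $\B \subset \mathcal{C}$ and compactness. For nonnegativity of $\zeta$, I would suppose $\zeta(i_0)<0$ for some $i_0$ and apply the defining inequality of $\B$ to $Z=-\mathbf{1}_{\{i_0\}}$: by A1 and A3 we have $\rho(-\mathbf{1}_{\{i_0\}})\le\rho(0)=0$, whereas $\langle \zeta,-\mathbf{1}_{\{i_0\}}\rangle_p = -p(i_0)\zeta(i_0)>0$, a contradiction. For the normalization $\sum_i p(i)\zeta(i)=1$, I would apply A2 and A3 to constants: $\rho(a)=\rho(0)+a=a$ for every $a \in \mathbb{R}$, so evaluating $\langle \zeta, a\mathbf{1}\rangle_p \le \rho(a\mathbf{1})$ at $a=+1$ and $a=-1$ pins $\langle \zeta,\mathbf{1}\rangle_p$ to equal $1$. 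Finally, compactness of $\B$ follows because it is a closed subset of the weighted simplex $\mathcal{C}$, which is bounded (each coordinate satisfies $\zeta(i)\le 1/p(i)$ since $p(i)>0$) and closed. This completes the construction, establishing both directions.
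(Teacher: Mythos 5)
The paper does not prove this theorem; it is quoted verbatim from Artzner et al.\ (1999) as a known result, so there is no ``paper proof'' to compare against. Your argument is the standard and correct one for the finite-dimensional case: the sufficiency direction is a routine axiom check (and is right as written), and the necessity direction is the classical support-function/biconjugate characterization of finite sublinear functions, with the polar-type set $\B = \{\zeta : \langle \zeta, Z\rangle_p \le \rho(Z)\ \forall Z\}$ recovered via a separating-hyperplane argument on the epigraph. Your verifications that $\B \subset \mathcal{C}$ (nonnegativity via $Z = -\mathbf{1}_{\{i_0\}}$ together with $\rho(0)=0$ from A3, normalization via $\rho(a\mathbf{1})=a$ from A2) and that $\B$ is compact (closed and contained in the bounded set $\mathcal{C}$, using $p(i)>0$) are all sound. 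Two small points worth making explicit in a final write-up: the support-function step also delivers that $\B$ is \emph{nonempty} (otherwise the supremum would be $-\infty$, contradicting finiteness of $\rho$), and the attainment of the maximum in \eqref{rho_den_def} --- as opposed to a mere supremum --- follows from compactness of $\B$ together with continuity of $\zeta \mapsto \langle \zeta, Z\rangle_p$.
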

This theorem is important for two reasons. Conceptually, it gives us an interpretation of CRMs as computing the worst-case expectation of the cost \revision{with respect to a \emph{set} of \emph{distorted} distributions $q_{\zeta} = p \cdot \zeta$}. Coherent risk measures thus allow us to consider risk and ambiguity (\revision{see} Section \ref{sec:introduction}) in a unified framework since one may interpret an agent acting according to a coherent risk model as being \emph{uncertain about the underlying probability density}. \revision{Practically, estimating this set of distributions provides us with an algorithmic handle for inferring the expert's risk preferences, and indeed will form the basis of our IRL methodology}. 

In this work, we will take the set $\B$ in~\eqref{rho_den_def} to be a polytope. We refer to such risk measures as \emph{polytopic risk measures}, which were also considered in~\citep{EichhornRoemisch2005}. Let $\Delta^{|\Omega|}$ denote the $|\Omega|-$dimensional probability simplex, defined as:
\[
 	\Delta^{| {\Omega}|} := \{q \in \RR^{| {\Omega}|} \ | \ \sum_{i=1}^{| {\Omega}|} q(i) = 1, \ q \geq 0 \}.
\]
By absorbing the density $\zeta$ into the pmf $p$ \revision{in eq.~\eqref{rho_den_def}}, we can represent (without loss of generality) a polytopic risk measure as:
\begin{equation} 
\rho(Z) = \underset{q \in \Pp}{\max} \  \EE_q [Z],
\label{rho_abs}
\end{equation}
where $\Pp$ is a polytopic subset of the probability simplex $\Delta^{| {\Omega}|}$:
\begin{equation}  
\Pp = \left \{q \in \Delta^{| {\Omega}|} \ | \ A_{\mathrm{ineq}}q \leq b_{\mathrm{ineq}} \right \},
\end{equation}
\revision{where the matrix $A_{\mathrm{ineq}} \in \RR^{d\times |\Omega|}$ and vector $b_{\mathrm{ineq}} \in \RR^{d}$ define a set of $d$ halfspace constraints. The polytope $\Pp$ is hereby referred to as the \emph{risk envelope}.} Polytopic risk measures constitute a rich class of risk measures, encompassing a spectrum ranging from risk neutrality ($\Pp = \{p\}$) to worst-case assessments ($\Pp = \Delta^{| {\Omega}|}$); see also~\citep{ChowPavone2014, ShapiroDentchevaEtAl2014}. We further note that the \emph{ambiguity} interpretation of CRMs is reminiscent of Gilboa \& Schmeidler's Minmax EU model for ambiguity-aversion~\citep{GilboaSchmeidler1989} which was shown to outperform various competing models in~\citep{HeyLotitoEtAl2010} for single-stage decision problems, albeit with more restrictions on the set $\B$.


\emph{Goal:} Given demonstrations from the expert in the form of state-control trajectories, the \revision{goal of this paper is to devise an algorithmic framework for risk-sensitive IRL whereby an expert's risk preferences will be estimated by finding an \emph{approximation} of their risk envelope $\Pp$.}

\section{Risk-sensitive IRL: Single Decision Period} \label{sec:single period}
In this section we consider the single step decision problem. \revision{That is, given a current (known) state $x_0$, the expert chooses a single control action $u_0$ to minimize a coherent risk assessment of a random cost $Z$, represented by a non-negative cost function $C(x_1, u_0)$ where $x_1 = f(x_0,u_0,w_0)$. Thus, the uncertain cost $Z$ is a random variable on the discrete probability space $(\W, 2^{\W}, p)$.}

\subsection{Known Cost Function} \label{sec:single period known cost}
We first consider the static decision-making setting where the expert's cost function is known but the risk measure is unknown. A coherent risk model then implies that the expert is solving the following optimization problem at state $x_0$ in order to compute an optimal action: 
\begin{align} 
\tau^* := & \min_{u_0 \in \mathcal{U}} \  \rho(C(x_1,u_0)) = \min_{u_0 \in \mathcal{U}} \max_{q \in \Pp} \  \EE_q[C(x_1,u_0)]  \label{eq:static problem}\\
:= & \min_{u_0 \in \mathcal{U}} \max_{q \in \Pp} \  g(x_0,u_0)^T q, \label{eq:static_g0}
\end{align}
where $g(x_0,u_0)(j)$ is the cost when the disturbance $w_0 = w^{[j]} \in \W$ is realized, and $\rho(\cdot)$ is a CRM with respect to the space $(\W, 2^{\W}, p)$ with risk envelope $\Pp$ being a subset of the probability simplex $\Delta^{L}$. Since the inner maximization problem is linear in $p$, the optimal value is achieved at a vertex of the polytope $\Pp$. Let $\text{vert}(\Pp) = \{v_i\}$ denote the set of vertices of $\Pp$ and let $N_V$ be the cardinality of this set. Then, we can rewrite problem \eqref{eq:static problem} as: 
\begin{alignat}{3}
\label{eq:static problem tau}
\min_{u_0 \in \U, \tau}& & \quad &\tau    \\
s.t.& & &\tau \geq g(x_0,u_0)^Tv_i, \quad  i \in \{1,\dots,N_V\}. & & \nonumber 
\end{alignat}
If the cost function $C(\cdot,\cdot)$ is convex in the control input $u$, the resulting optimization problem is convex.
Given a dataset $\D = \{(x^{*,d},u^{*,d})\}_{d=1}^{D}$ of state-control pairs of the expert taking action $u^{*,d}$ at state $x^{*,d}$, our goal is to deduce an approximation $\Pp_{o}$ of $\Pp$ from the given data. The key idea of our technical approach is to examine the Karush-Kuhn-Tucker (KKT) conditions for Problem \eqref{eq:static problem tau}. The use of KKT conditions for Inverse Optimal Control is a technique also adopted in~\citep{EnglertToussaint2015}. The KKT conditions are necessary for optimality in general and are also sufficient in the case of convex problems. We can thus use the KKT conditions along with the dataset $\D$ to \emph{constrain the constraints} of problem\eqref{eq:static problem tau}. In other words, the KKT conditions will allow us to constrain where the vertices of $\Pp$ must lie in order to be consistent with the fact that the state-control pairs represent optimal solutions to problem \eqref{eq:static problem tau}. Importantly, we will \emph{not} assume access to the number of vertices $N_V$ of $\Pp$. 

\revision{Specifically}, let $(x^*,u^*)$ be an optimal state-control pair and let $\J^+$ and $\J^-$ denote the sets of components of the control input $u^*$ that are saturated above and below respectively (i.e., $u^*(j) = u^+(j)$ \revision{for all} $j \in \J^+$ and $u^*(j) = u^-(j)$ \revision{for all} $j \in \J^-$).

\begin{theorem}[KKT-Based Inference]
\label{thm:bound polytope}
Consider the following optimization problem:

\begin{alignat}{3}
 \label{eq:tau bound known cost}
\max\limits_{\substack{v \in \Delta^L \\ \sigma_+, \sigma_- \geq 0}}& & \quad &g(x^*,u^*)^Tv    \\
s.t.& & &0 = \Eval{\nabla_{u(j)} g(x,u)^Tv}{x^*,u^*}{} + \sigma_+(j), \forall j \in \J^+  & & \nonumber \\
& & &0 = \Eval{\nabla_{u(j)} g(x,u)^Tv}{x^*,u^*}{} - \sigma_-(j), \forall j \in \J^- \nonumber \\
& & &0 = \Eval{\nabla_{u(j)} g(x,u)^Tv}{x^*,u^*}{},  \forall j  \notin \J^+, j \notin \J^- \nonumber  \\
& & & \revision{\sigma_{+}(j) = 0 , \  \sigma_{-} (j) = 0, \quad  \forall j \notin \J^+, j \notin \J^-}  \nonumber
\end{alignat}
Denote the optimal value of this problem by $\tau'$ and define the halfspace:
\begin{equation}
\mathcal{H}_{(x^*,u^*)} := \{v \in \RR^L  \ | \ \tau' \geq g(x^*,u^*)^T v \}.
\end{equation}
Then, the risk envelope $\Pp$ satisfies $\Pp \subset (\mathcal{H}_{(x^*,u^*)} \cap \Delta^L).$
\end{theorem}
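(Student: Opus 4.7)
The plan is to derive the halfspace constraint by writing down the KKT conditions for the expert's optimization problem \eqref{eq:static problem tau} at the observed optimum $(x^*, u^*, \tau^*)$, and then showing that the multipliers naturally yield a point $v^\star$ that is feasible for the auxiliary program \eqref{eq:tau bound known cost} with objective value $\tau^*$. The bound $\tau' \geq \tau^*$ then combines with the representation $\rho(C(x_1^*,u^*)) = \max_{v \in \Pp} g(x^*,u^*)^T v = \tau^*$ to give the inclusion.

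Concretely, first I would form the Lagrangian of problem \eqref{eq:static problem tau} with multipliers $\lambda_i \geq 0$ attached to the vertex constraints $g(x_0,u)^T v_i - \tau \leq 0$, and multipliers $\sigma_+(j), \sigma_-(j) \geq 0$ attached to the upper and lower box constraints on $u(j)$. Stationarity in $\tau$ immediately yields $\sum_i \lambda_i = 1$, so $\lambda = (\lambda_i)$ is a probability vector on $\mathrm{vert}(\Pp)$; stationarity in each $u(j)$ reads $\sum_i \lambda_i \nabla_{u(j)} g(x^*, u^*)^T v_i + \sigma_+(j) - \sigma_-(j) = 0$. Defining $v^\star := \sum_i \lambda_i v_i$ gives a convex combination of vertices of $\Pp$, hence $v^\star \in \Pp \subset \Delta^L$, and the stationarity condition becomes $\nabla_{u(j)} g(x^*,u^*)^T v^\star + \sigma_+(j) - \sigma_-(j) = 0$. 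Complementary slackness on the box constraints forces $\sigma_+(j) = 0$ for $j \notin \J^+$ and $\sigma_-(j) = 0$ for $j \notin \J^-$, which is exactly the sign/support structure demanded in problem \eqref{eq:tau bound known cost}. Thus $(v^\star, \sigma_+, \sigma_-)$ is a feasible triple for \eqref{eq:tau bound known cost}.

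Next, complementary slackness on the vertex constraints ($\lambda_i (g(x^*,u^*)^T v_i - \tau^*) = 0$) implies $g(x^*,u^*)^T v^\star = \sum_i \lambda_i g(x^*,u^*)^T v_i = \tau^* \sum_i \lambda_i = \tau^*$. Hence the feasible point $v^\star$ attains objective value $\tau^*$ in \eqref{eq:tau bound known cost}, which gives $\tau' \geq \tau^*$. Finally, by Theorem~\ref{thm:CRM} and the outer maximization in \eqref{eq:static problem}, $\tau^* = \max_{v \in \Pp} g(x^*,u^*)^T v$, so every $v \in \Pp$ satisfies $g(x^*,u^*)^T v \leq \tau^* \leq \tau'$. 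Combined with $\Pp \subset \Delta^L$, this gives $\Pp \subset \mathcal{H}_{(x^*,u^*)} \cap \Delta^L$.

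The main obstacle I expect is justifying the use of KKT as \emph{necessary} conditions at $(x^*, u^*, \tau^*)$: problem \eqref{eq:static problem tau} has only affine constraints in $(u,\tau)$ (linear in $\tau$ and in $v_i$, with box bounds on $u$), so a constraint qualification such as LICQ or Abadie's holds automatically and KKT is necessary, without needing convexity of $g$ in $u$. Convexity is therefore not required for the direction of the inclusion proved here, though it would be needed for the sufficiency side, i.e., ensuring the KKT-based auxiliary problem is not too loose. A minor bookkeeping point is the assumed disjointness of $\J^+$ and $\J^-$ (which holds whenever $u^+(j) > u^-(j)$); otherwise the two nonnegativity conditions on $\sigma_\pm(j)$ at the same coordinate need separate treatment, but the argument above goes through unchanged.
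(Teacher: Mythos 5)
Your proposal is correct and follows essentially the same route as the paper's proof: write the KKT conditions of problem \eqref{eq:static problem tau}, form the convex combination $\bar v=\sum_i\lambda_i v_i$ of (active) vertices, verify it is feasible for \eqref{eq:tau bound known cost} with objective value $\tau^*$, and conclude $\tau'\geq\tau^*\geq g(x^*,u^*)^Tv$ for every $v\in\Pp$. Your added remark that KKT necessity here needs no convexity of $g$ (the constraints are affine in $(u,\tau)$ only after fixing the vertex set, so strictly one should invoke a constraint qualification for the box constraints on $u$, which holds) is a useful clarification the paper leaves implicit.
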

\begin{proof}
The KKT conditions for Problem \eqref{eq:static problem tau} are:

\begin{flalign}
&1 = \sum_{i=1}^{N_V} \lambda_i, \\
&0= \lambda_i [g(x^*,u^*)^T v_i - \tau], \ i = 1,\dots,N_V, \\
&\textrm{and for} \ j = 1,\dots,m: \nonumber \\ 
&0 = \sigma_+(j) - \sigma_-(j) + \sum_{i=1}^{N_V} \lambda_i \Eval{\nabla_{u(j)} g(x,u)^T v_i}{x^*,u^*}{}, \\
&0 = \sigma_+(j)[u^*(j) - u^+(j)], \ 0 = \sigma_-(j)[u^-(j) - u^*(j)],
\end{flalign}
where $\lambda_i, \sigma_+(j), \sigma_-(j) \geq 0$ are multipliers. Now, suppose there are multiple optimal vertices $\{v_i\}_{i \in \I}$ for Problem \eqref{eq:static problem tau} in the sense that $\tau^* = g(x^{*}, u^{*})^{T}v_i$, for all $i \in \I$. Defining $\bar{v} := \sum_{i \in \I} \lambda_i v_i$, we see that $\bar{v}$ satisfies:
\begin{equation} 
0 = \nabla_{u(j)} g(x^*,u^*(j))^T \bar{v} + \sigma_+(j) - \sigma_-(j), \quad j = 1,\dots,m,
\end{equation}
and $\tau^* = g(x^*,u^{*})^{T}\bar{v}$ since $\sum_{i\in \I}\lambda_i = 1$. Now, since $\bar{v}$ satisfies the constraints of Problem \eqref{eq:tau bound known cost} (which are implied by the KKT conditions), it follows that $\tau' \geq \tau^*$. From problem \eqref{eq:static problem tau}, we see that $\tau' \geq \tau^* \geq g(x^*,u^*)^T v_i$  for all $v_i \in \text{vert}(\Pp)$ and thus $\Pp \subset (\mathcal{H}_{(x^*,u^*)} \cap \Delta^L)$.
\end{proof}
Problem \eqref{eq:tau bound known cost} is a \emph{Linear Program (LP)} and can thus be solved efficiently.
For each demonstration $(x^{*,d},u^{*,d}) \in \D$, Theorem \ref{thm:bound polytope} provides a halfspace constraint on the risk envelope $\Pp$. By aggregating these constraints, we obtain a \emph{polytopic} outer approximation $\Pp_o$ of $\Pp$. This is summarized in Algorithm \ref{a:bound polytope}. Note that Algorithm \ref{a:bound polytope} operates sequentially through the data $\D$ and is thus directly applicable in \emph{online} settings. \revision{An illustration of the sequential pruning process in Algorithm~\ref{a:bound polytope} is provided in Figure~\ref{fig:single_cartoon}.

\begin{algorithm}[h]
  \caption{{Sequential Halfspace Pruning}}
  \label{a:bound polytope}
  \begin{algorithmic}[1]
    \STATE Initialize $\Pp_o = \Delta^L$
    \FOR{$d = 1,\dots,D$}
    	\STATE Solve Linear Program \eqref{eq:tau bound known cost} with $(x^{*,d},u^{*,d})$ to obtain a hyperplane $\mathcal{H} _{(x^{*,d},u^{*,d})}$
	\STATE Update $\Pp_o \leftarrow \Pp_o \cap \mathcal{H} _{(x^{*,d},u^{*,d})}$
    \ENDFOR
    \STATE Return $\Pp_o$    
  \end{algorithmic}
\end{algorithm}

\begin{figure}[h]
\centering
	\includegraphics[width=1\textwidth]{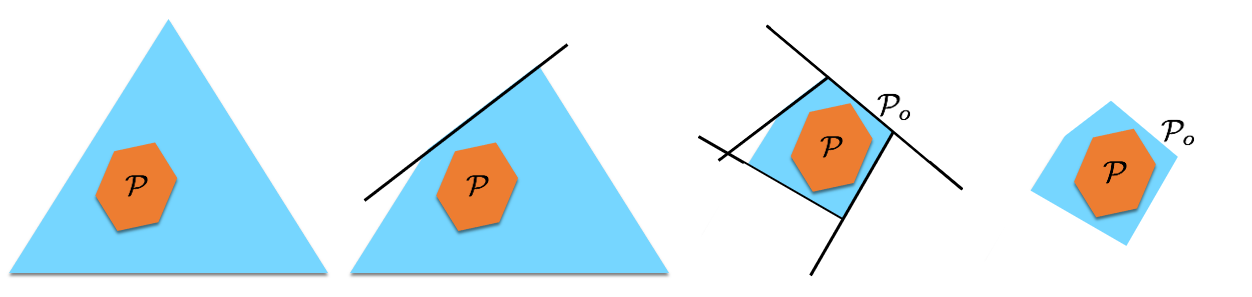}
\caption{Schematic illustration of Algorithm~\ref{a:bound polytope}. Probability simplex (3 scenarios) is shown in blue while the true (unknown) risk envelope is shown in orange. Algorithm~\ref{a:bound polytope} sequentially prunes portions of probability simplex that are inconsistent with the observed actions by leveraging the \emph{necessary} conditions for optimality (KKT conditions) for problem~\eqref{eq:static problem}. The end result is an outer approximation $\Pp_o$ of the true risk envelope $\Pp$. }
\label{fig:single_cartoon}
\end{figure}
}

\begin{remark}
Algorithm~\ref{a:bound polytope} is a \emph{non-parametric} algorithm for inferring the expert's risk measure; i.e., we are not fitting parameters for an a priori chosen risk measure. Instead, by leveraging the dual representation of CRMs as provided by Theorem~\ref{thm:CRM} and reasoning directly over the risk envelope $\Pp$, Algorithm~\ref{a:bound polytope} can recover any risk measure within the class of CRMs, that best explains the expert's demonstrations. 
\end{remark}

\begin{remark}
As we collect more half-space constraints in Algorithm \ref{a:bound polytope}, the constraint $v \in \Delta^L$ in Problem \eqref{eq:tau bound known cost} above can be replaced by $v \in \Pp_o$, where $\Pp_o$ is the current outer approximation of the risk envelope. It is easily verified that the results of Theorem \ref{thm:bound polytope} still hold. This allows us to obtain a tighter (i.e., lower) upper bound $\tau'$ for $\tau^*$, thus resulting in tighter halfspace constraints for each new demonstration processed by the algorithm. 
\end{remark}

Denote \revision{by} $\mathcal{P}_D$ the output of Algorithm \ref{a:bound polytope} after processing sequentially the first $D$ demonstrations $\{(x^{*,d}, u^{*,d})\}_{d=1}^D$. Observe that for all $D \ge 1$, $\mathcal{P}_{D+1} \subseteq \mathcal{P}_D$. We can then define the limiting set as $\Pp_{\infty} := \bigcap_{d=1}^{\infty} \mathcal{P}_d$. An important consideration for this algorithm is whether it is possible to recover, at least from an imitation perspective, the risk envelope $\mathcal{P}$ from sufficiently many optimal demonstrations. In other words, we are specifically interested in the question of whether the limiting set $\Pp_{\infty}$ (whenever such a limit exists) allows one to {\em exactly} predict the actions of a decision maker that operates under a risk model characterized by the set $\mathcal{P}$. In the following theorem we establish, under \revision{some restrictive technical conditions}, that this is indeed possible. The proof is provided in Appendix~\ref{app:theoretical}.

\begin{theorem}[Convergence of Algorithm~\ref{a:bound polytope}]
\label{thm: consistency algorithm}
Let $\mathcal{S} \subseteq \mathbb{R}^n$ be a convex, compact subset of the state space. Let $\{(x^{*,d}, u^{*,d})\}_{d=1}^{\infty}$ be a set of infinitely many optimal demonstrations such that the sequence $\{x^{*,d}\}$ is dense in $\mathcal{S}$. Assume that the following technical conditions hold:
\begin{enumerate}[label=\textbf{A.\arabic*}]
\item \revision{The expert's cost vector $g(x,u)$ is strictly convex with respect to the control input $u$ and continuous with respect to the state variable $x$.} 
\item For all $j \in \{1,\hdots, L\}$ and any state $x \in \mathcal{S}$, the cost function associated with the $j$-th disturbance $u \mapsto g(x,u)(j)$ has bounded level sets.
\end{enumerate}
Finally, for any risk envelope $\mathcal{P}^{\prime} \subseteq \Delta^L$ and any state $x \in \mathcal{S}$, define
\[
	u(\mathcal{P}^{\prime},x) := \argmin_{u \in \mathcal{U}} \max_{v \in \mathcal{P}^{\prime}} v^T g(x,u),
\]
as the (unique) optimal control action of an expert with risk envelope $\mathcal{P}^{\prime}$ at state $x$. Then, for any state $x \in \mathcal{S}$, 
\begin{equation}
\label{consistency}
u(\Pp_{\infty},x) = u(\mathcal{P},x).
\end{equation}
That is, for any state $x \in \mathcal{S}$, the optimal action predicted using the limiting envelope $\Pp_{\infty}$ matches that computed using the true expert polytope $\Pp$.
\end{theorem}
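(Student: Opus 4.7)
The plan is to decompose the identity $u(\Pp_{\infty}, x) = u(\Pp, x)$ into two claims and prove each in turn. The first claim is pointwise agreement on the demonstration set: $u(\Pp_{\infty}, x^{*,d}) = u^{*,d}$ for every $d$. The second claim is that this pointwise agreement, established on a dense subset of $\mathcal{S}$, propagates to all of $\mathcal{S}$ by continuity of the argmin map. Note that $\Pp \subseteq \Pp_{\infty}$ by iterated application of Theorem~\ref{thm:bound polytope}, so in particular $u(\Pp, x^{*,d}) = u^{*,d}$ is automatic, and the content of the first claim is that enlarging the risk envelope from $\Pp$ to $\Pp_{\infty}$ does not change the optimal action at a demonstration state.

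For the first claim, I would argue via KKT sufficiency for the convex $\Pp_{\infty}$-problem. The pointwise-max objective $h_{\Pp'}(u) := \max_{v \in \Pp'} g(x^{*,d}, u)^T v$ is strictly convex in $u$ (a nonnegative weighted maximum of strictly convex functions, by A.1), so its minimizer over $\mathcal{U}$ is unique for any nonempty compact $\Pp'$. It therefore suffices to exhibit a distribution $\bar v \in \argmax_{v \in \Pp_{\infty}} g(x^{*,d}, u^{*,d})^T v$ together with control-bound multipliers satisfying stationarity. The natural candidate is the convex combination $\bar v := \sum_{i \in \mathcal{I}} \lambda_i v_i$ of active vertices of $\Pp$ constructed in the proof of Theorem~\ref{thm:bound polytope}: it lies in $\Pp \subseteq \Pp_{\infty}$, it already satisfies stationarity with the expert's original multipliers, and it attains the true risk value $g(x^{*,d}, u^{*,d})^T \bar v = \tau^*$. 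One then needs to verify that $\bar v$ actually attains the maximum over $\Pp_{\infty}$, i.e., that no $v \in \Pp_{\infty}$ yields $g(x^{*,d}, u^{*,d})^T v > \tau^*$; once this is in hand, KKT sufficiency delivers $u(\Pp_{\infty}, x^{*,d}) = u^{*,d}$.

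For the second claim, I would invoke Berge's maximum theorem to conclude that $x \mapsto u(\Pp', x)$ is continuous on $\mathcal{S}$ for any nonempty compact $\Pp' \subseteq \Delta^L$. The required ingredients are: joint continuity of $g$ in $(x,u)$ (continuity in $x$ by A.1; continuity in $u$ on the convex box $\mathcal{U}$ from convexity); strict convexity of $h_{\Pp'}$ in $u$, yielding a single-valued argmin; and coercivity of $h_{\Pp'}$ in $u$, obtained by averaging the bounded level-set hypothesis A.2 against any $v \geq 0$ with $\sum_j v(j) = 1$. Applying this with $\Pp' = \Pp$ and $\Pp' = \Pp_{\infty}$ produces two continuous maps on $\mathcal{S}$ that coincide on the dense set $\{x^{*,d}\}$, and hence on all of $\mathcal{S}$.

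The main obstacle is the sharpness step embedded in the first claim: showing that the upper bound on $g(x^{*,d}, u^{*,d})^T v$ enforced by membership in $\Pp_{\infty}$ in fact equals $\tau^*$. The raw LP bound $\tau'$ in Theorem~\ref{thm:bound polytope} can strictly exceed $\tau^*$ because the LP feasible set is defined only by the KKT necessary conditions at a single demonstration and may contain spurious distributions outside $\Pp$. The mechanism for eliminating such spurious distributions must come from \emph{other} demonstrations whose halfspaces slice them away; density of $\{x^{*,d}\}$ in $\mathcal{S}$, combined with the sharpened LP variant from the Remark following Algorithm~\ref{a:bound polytope} (iteratively tightening the LP ambient set from $\Delta^L$ to the running $\Pp_o$), should accomplish this. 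Quantifying this step — for instance by a contradiction argument showing that any $v^{\sharp} \in \Pp_{\infty} \setminus \Pp$ would, at some state arbitrarily close to a demonstration $x^{*,d}$ via continuity of $h$, induce an action different from the expert's and contradict the first claim there — is where the real technical work of the proof lies.
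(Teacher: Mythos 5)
Your overall architecture (agreement at demonstration states, then propagation by continuity over the dense set) matches the paper's, and your continuity/coercivity ingredients for the second step are sound. But the first step contains a genuine gap, and it is not a minor one: the ``sharpness step'' you defer is the entire content of the theorem, and the target you set for it is both stronger than necessary and, as far as the construction guarantees, unattainable. You propose to certify $u(\Pp_{\infty},x^{*,d})=u^{*,d}$ by showing that $\max_{v\in\Pp_{\infty}} g(x^{*,d},u^{*,d})^Tv=\tau^*$, so that the expert's own $\bar v\in\Pp$ is an argmax over $\Pp_{\infty}$ and KKT sufficiency applies. Nothing in Algorithm~\ref{a:bound polytope} forces this: the halfspace added at step $d$ caps $v^Tg(x^{*,d},u^{*,d})$ only at the LP optimum $\tau'\geq\tau^*$, and cuts generated at \emph{other} states slice along different cost vectors $g(x^{*,d'},u^{*,d'})$, so there is no mechanism by which the residual gap $\tau'-\tau^*$ in the direction $g(x^{*,d},u^{*,d})$ is ever closed. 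Your fallback (a contradiction argument using nearby demonstrations) is circular, since it presupposes the very agreement of actions you are trying to establish.

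The paper's proof sidesteps value-matching entirely via a weak-duality sandwich at the level of the finite-stage polytope $\Pp_d$ rather than $\Pp_{\infty}$. By construction the newly added halfspace makes the LP optimizer $\bar v$ an argmax of $v\mapsto v^Tg(x^{*,d},u^{*,d})$ over $\Pp_d$, and $\bar v$'s feasibility in the LP means it satisfies the stationarity conditions, i.e.\ $u^{*,d}=\argmin_{u\in\U}\bar v^Tg(x^{*,d},u)$. Chaining
$\max_{v\in\Pp_d}v^Tg(x^{*,d},u^{*,d})=\min_{u}\bar v^Tg(x^{*,d},u)\leq\min_u\max_{v\in\Pp_d}v^Tg(x^{*,d},u)\leq\max_{v\in\Pp_d}v^Tg(x^{*,d},u^{*,d})$
collapses all inequalities, so $u(\Pp_d,x^{*,d})=u^{*,d}$ even though the optimal value over $\Pp_d$ may strictly exceed $\tau^*$: the argmin is preserved, the value is not. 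Note also that your Claim~1 for the intermediate set $\Pp_{\infty}$ does not follow from the sandwich $\Pp\subseteq\Pp_{\infty}\subseteq\Pp_d$ (a pointwise-intermediate max-type function can have a different minimizer than the two extremes), so it is only recoverable as a corollary of the full theorem. This is why the paper replaces your ``fixed polytope, dense agreement'' step with a joint limit $u(\Pp_d,x^{*,d})\to u(\Pp_{\infty},x^*)$ along $x^{*,d}\to x^*$, using uniform convergence of convex functions on compacta (Lemmas~\ref{thm: compact uniform convergence} and~\ref{thm: argmin convergence}) with both the polytope and the state varying simultaneously.
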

\revision{
\begin{remark}
The technical condition \textbf{A.1} assumes convexity of the cost vector with respect to the control input, and the proof of Theorem~\ref{thm: consistency algorithm} heavily relies on this assumption. Finding conditions under which Algorithm~\ref{a:bound polytope} is guaranteed to be consistent for the general case of non-convex cost functions is an open problem left for future research; we re-emphasize, though, that Algorithm~\ref{a:bound polytope} is guaranteed to provide a conservative outer approximation regardless of the convexity of the cost vectors. 
\end{remark}
} 
Once we have recovered an approximation $\Pp_o$ of $\Pp$,  we can solve the ``forward" problem (i.e., compute actions at a given state $x$) by solving the optimization problem \eqref{eq:static problem} with $\Pp_o$ as the risk envelope. 

\subsubsection{Example: Linear-Quadratic System}
\label{sec:linear quadratic known cost}

As a simple illustrative example to gain intuition for the convergence properties of Algorithm \ref{a:bound polytope}, consider a linear dynamical system with multiplicative uncertainty of the form $f(x_k,u_k,w_k) = A(w_k) x_k + B(w_k) u_k$. We consider the one-step decision-making process with a quadratic cost on state and action: $C := u_0^T R u_0 + x_1^T Q x_1$, where $x_1 =  A(w_0)x_0 + B(w_0)u_0$. Here, $R \succ 0$ and $Q \succeq 0$. We consider a $10$-dimensional state space with a $5$-dimensional control input space. The number of realizations is taken to be $L = 3$ for ease of visualization. The $L$ different $A(w_k)$ and $B(w_k)$ matrices corresponding to each realization are generated randomly by independently sampling elements of the matrices from the standard normal distribution $\N(0,1)$. The cost matrix $Q$ is a randomly generated positive semi-definite matrix and $R$ is the identity. The initial states $x^*$ were drawn randomly from the standard normal distribution $\N(0,I)$ where $I$ denotes the identity matrix. The true envelope was generated by taking the convex hull of a set of random samples in the probability simplex $\Delta^L$.

Figure \ref{fig:lq_polytopes} shows the outer approximations of the risk envelope obtained using Algorithm \ref{a:bound polytope}. We observe rapid convergence (approximately 20 sampled states $x^*$) of the outer approximations $\Pp_o$ (red) to the true risk envelope $\Pp$ (green). Figure \ref{fig:predicted_actions} shows the mean squared error (on an independent test set with 30 demonstrations) between actions predicted using the sequentially refined polytope approximations generated by Algorithm~\ref{a:bound polytope} and the expert's true actions, as a function of the number of training demonstrations. One can observe rapid convergence in prediction performance after just 10 demonstration samples, further highlighting the data efficiency of the proposed algorithm.

\begin{figure}[h]
\centering
\begin{subfigure}[b]{0.35\textwidth}
	\includegraphics[width=1\textwidth]{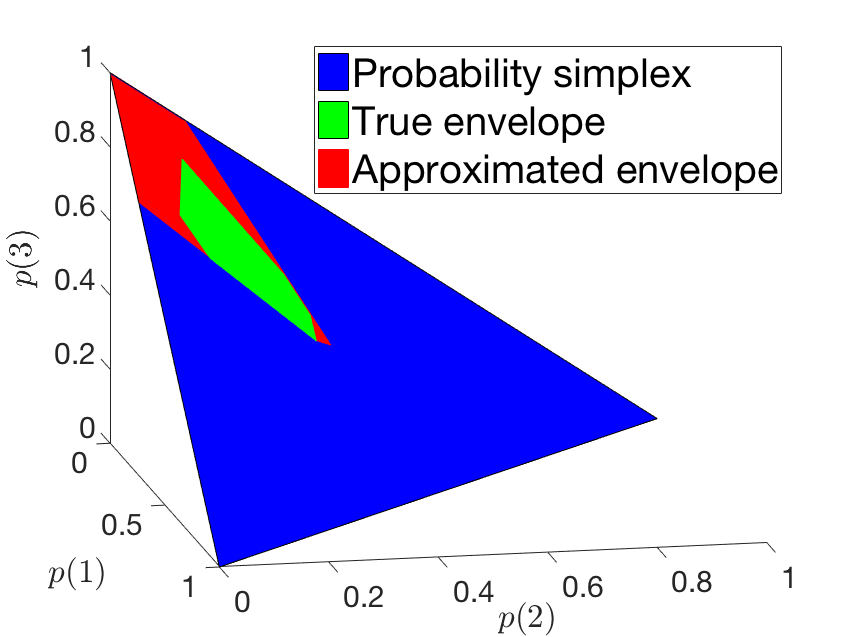}
	\caption{5 data points}
	\label{fig:poly_5} 
\end{subfigure}
\begin{subfigure}[b]{0.35\textwidth}
	\includegraphics[width=1\textwidth]{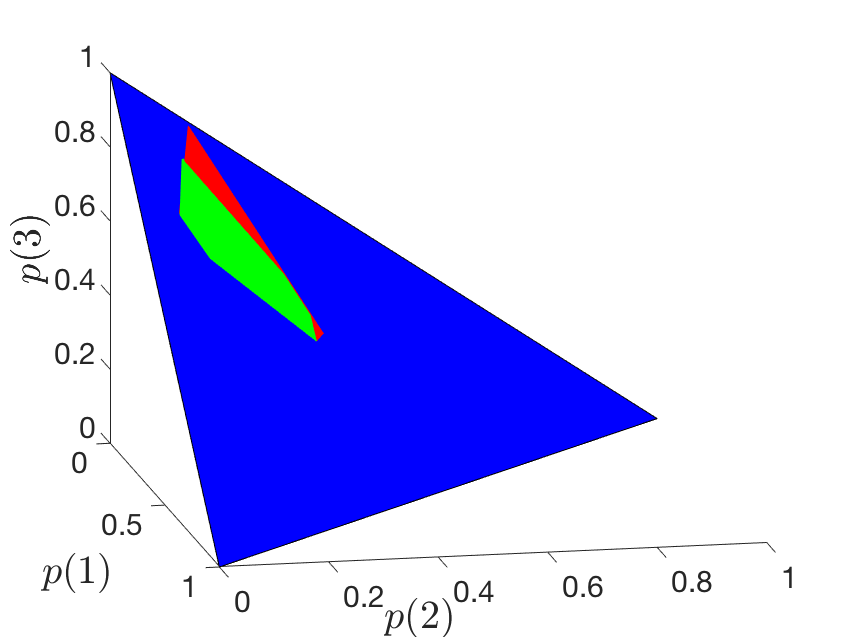}
	\caption{10 data points}
	\label{fig:poly_10} 
\end{subfigure}
\begin{subfigure}[b]{0.35\textwidth}
	\includegraphics[width=1\textwidth]{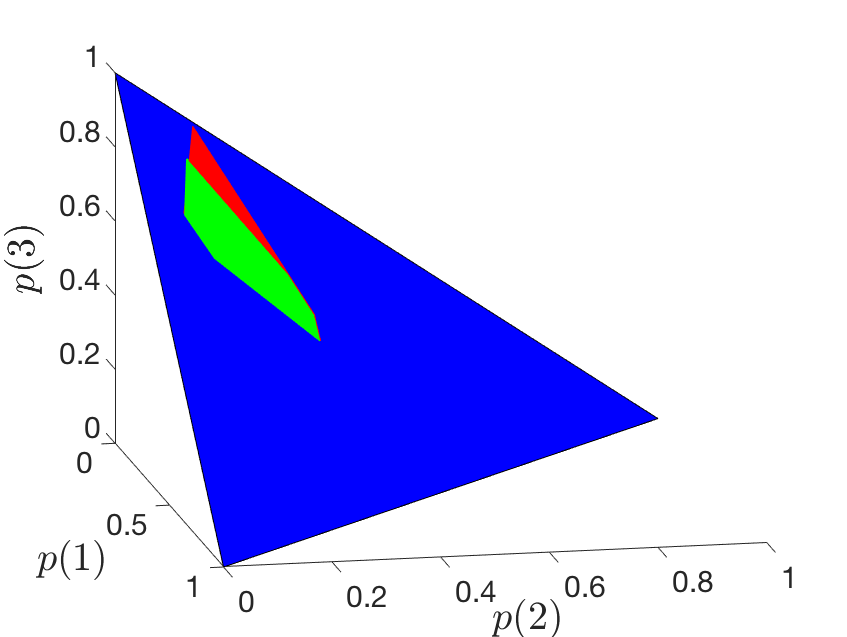}
	\caption{15 data points}
	\label{fig:poly_15} 
\end{subfigure}
\begin{subfigure}[b]{0.35\textwidth}
	\includegraphics[width=1\textwidth]{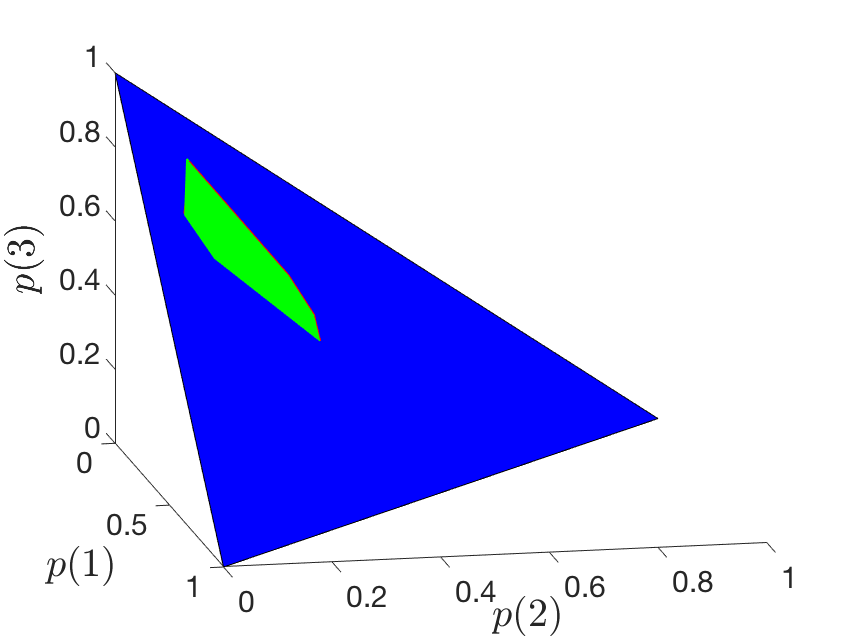}
	\caption{20 data points}
	\label{fig:poly_20} 
\end{subfigure}
\caption{Rapid convergence of the outer approximation of the risk envelope.}
\label{fig:lq_polytopes}
\end{figure}

\begin{figure}[H]
\centering
\includegraphics[trim = 0mm 0mm 5mm 5mm, clip, width=0.5\textwidth]{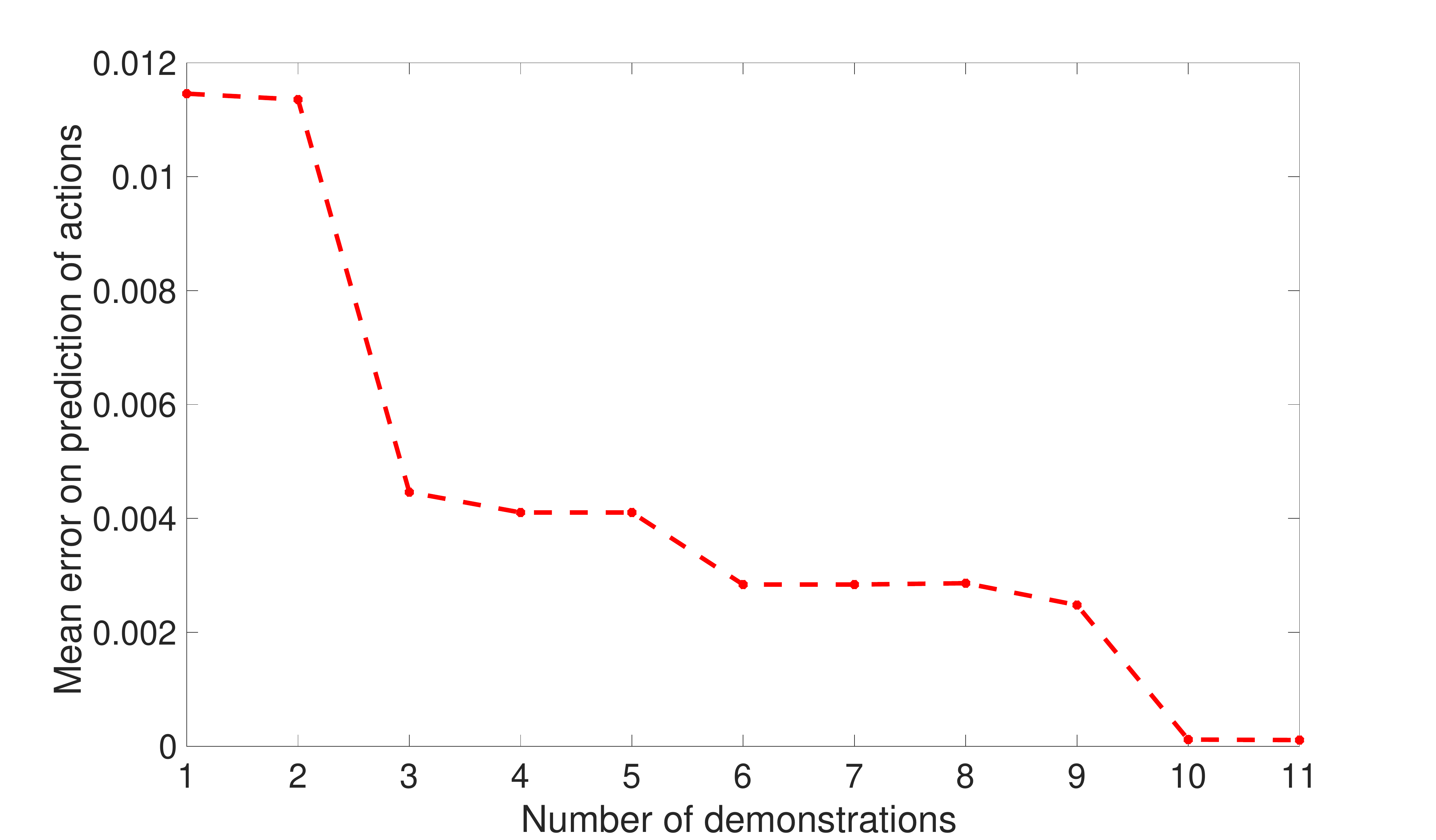}
\caption[]{Rapid decrease of the mean squared error between predicted and expert's actions \revision{on an independent test set}, as a function of the number of \revision{training} demonstrations.
\label{fig:predicted_actions}}
\end{figure}

\subsection{Unknown Cost Function}
\label{sec:single period unknown cost}

We \revision{next} consider the more general case where both the expert's cost function $C(x_1,u_0)$ and risk measure are unknown. We parameterize the cost function as a linearly weighted combination of cost features, \revision{i.e., 
\[
	C(x_1, u_0) = c ^T \phi(x_1,u_0),
\]
where $c \in \RR^{H}_{\geq 0}$ is a vector of \emph{unknown} weights and $\phi: \RR^n \times \U \rightarrow \RR^{H}$ denotes the cost feature mapping from state and control input to an $H-$dimensional real vector. Since the solution of problem \eqref{eq:static problem} solved by the expert is invariant to (i) constant shifts (Axiom A2 in Definition~\ref{def:coherent}) and one can thus absorb negative signs into the cost features, and (ii) positive scalings (Axiom A3 in Definition~\ref{def:coherent}), one can assume without loss of generality that the feature weights $c$ are nonnegative and sum to one. Extending the notation in~\eqref{eq:static_g0}, let  $\phi^{[j]}$ denote the feature vector when disturbance $w_0 = w^{[j]}$ is realized so that 
\begin{equation} 
g(x_0,u_0)(j) =  c ^T \phi^{[j]} (x_0,u_0), \quad  j = 1,\ldots,L.
\label{cost_features}
\end{equation}
Thus, problem~\eqref{eq:static problem tau} now takes the form:
\begin{alignat*}{3}
\min_{u_0 \in \U, \tau}& & \quad &\tau    \\
s.t.& & &\tau \geq \sum_{j=1}^{L} \sum_{h=1}^{H} v_i(j) c(h) \, \phi^{[j]}_h (x_0,u_0), \quad  i \in \{1,\dots,N_V\}.
\end{alignat*}
}
With this cost structure, we see that the KKT conditions derived in Section \ref{sec:single period known cost} now involve \emph{products} of the feature weights $c$ and the vertices $v_i$ of $\Pp$. Thus, an analogous version of optimization problem \eqref{eq:tau bound known cost} can be used to bound the optimal value. This problem will now contain products of the feature weights $c$ and \revision{probability vector} $v$. The key idea here is to introduce new \emph{matrix} decision variables $z$ that replace each product $v(j) c(h)$ by a new variable $z_{jh}$ which allows us to re-write problem~\eqref{eq:tau bound known cost} as an LP in $(z, \sigma_{+}, \sigma_{-})$, with the addition of the following two simple constraints: $0 \leq z_{jh} \leq 1$, for all $j,h$, and $\sum_{j,h} z_{jh} = 1.$
In a manner analogous to Theorem \ref{thm:bound polytope}, this optimization problem allows us to obtain bounding hyperplanes \emph{in the space of product variables} $z$ which can then be aggregated as in Algorithm \ref{a:bound polytope}. 
Denoting this polytope as $\Pp_z$, we can then proceed to solve the ``forward" problem (i.e., computing actions at a given state $x$) by solving the following optimization problem:
\begin{equation}
\label{eq:forward problem}
\min_{u_0 \in \U} \max_{z \in \Pp_z} \ \sum_{j,h} z_{jh} \phi^{[j]}_{h}(x_0,u_0).
\end{equation}
This problem can be solved by enumerating the vertices of the polytope $\Pp_z$ in a manner similar to problem \eqref{eq:static problem tau}. 
Similar to the case where the cost function is known, this provides us with a way to conservatively approximate the expert's decision-making process (in the sense that we are considering a larger risk envelope).


\subsubsection{Approximate Recovery of Cost and Risk Measure}

While the procedure described above operates in the space of product variables $z$ and does not require explicitly recovering the cost function and risk envelope separately, it may nevertheless be useful to do so for two reasons. First, the number of vertices of $\Pp_z$ may be quite large (since the space of product variables may be high dimensional) and thus solving the forward problem \eqref{eq:forward problem} may be computationally expensive. Recovering the cost and risk envelope separately allows us to solve a smaller optimization problem (since the risk envelope is lower dimensional in this case). Second, recovering the cost and risk measure separately may provide additional intuition and insights into the expert's decision-making process and may also allow us to make useful predictions in novel settings (e.g., where we expect the expert's risk measure to be the same but not the cost function or vice versa). 

Here we describe a procedure for approximately recovering the feature weights and the risk envelope from the polytope $\Pp_z$. The key observation that makes this possible is to note that the matrix $z$ containing the variables $z_{jh}$ is equal to the outer product $v c^T$ by definition. Hence, for $h=1,\dots,H$, we have:
\begin{equation} 
\label{eq:recover weights}
\sum_{j=1}^L z_{jh} = \sum_{j=1}^L v(j) c(h) = c(h) \sum_ {j=1}^Lv(j) = c(h).
\end{equation}
The last equality follows from the fact $v$ is a probability vector and sums to 1. Similarly, for $j=1,\dots,L$, we have:
\begin{equation} 
\label{eq:recover probability}
\sum_{h=1}^H z_{jh} = \sum_{h=1}^H v(j) c(h) = v(j) \sum_ {h=1}^H c(h) = v(j).
\end{equation}
The last equality follows from the fact that we assumed without loss of generality that the feature weights sum to 1.

Let $\{\hat{z}_i\}$ be the set of (matrix-valued) vertices of the polytope $\Pp_z$. Then, by applying equations \eqref{eq:recover weights} and \eqref{eq:recover probability} to each vertex $\hat{z}_i$, \revision{we obtain a set of estimates of the feature weight vector $c$ and a set of vectors in the probability simplex $\Delta^L$, the convex hull of which gives an approximation of the risk envelope $\Pp$. If we have exactly recovered the polytope $\Pp_z$ in the space of product variables and the vertices $\hat{z}_i$ each have rank one, then it follows from problem~\eqref{eq:forward problem} that the feature weight estimates will coincide and the convex hull of the probability vectors extracted from the vertices $\hat{z}_i$ will match the true risk envelope $\Pp$. In general however, this will not be the case since (i) there is no guarantee of exactly recovering the product polytope $\Pp_z$ (similar to how there is no guarantee of recovering the true risk envelope $\Pp$ in Algorithm~\ref{a:bound polytope}), and (ii)  $z = v c^T$ is a non-convex rank constraint that is not enforced in the KKT-based LP.}

In light of these limitations, it is important to be able to gauge the quality of the estimates we obtain from the procedure above. We can do this in two ways. First, if the estimates of the weight vector are tightly clustered, this is a good indication that we have an accurate recovery. Second, if each vertex $\hat{z}_i$ of the polytope is close to a rank one matrix, then this is again a good indication (since the true product variables $z$ equal $vc^T$). 

\subsubsection{Example: Linear-Quadratic System}
 \label{sec:linear quadratic unknown cost}
 
Consider the same system as \revision{in Section} \ref{sec:linear quadratic known cost}, but now we assume that the cost function is unknown. We take the cost function as the weighted sum of three quadratic features (i.e., $H = 3$). The quadratic features are generated randomly by taking them to be equal to $SS^T$, where the elements of $S$ are sampled from the standard normal distribution. The corresponding weights are drawn uniformly between 0 and 1 and are normalized to sum to 1. 

Figure~\ref{fig:lq_unknown} a) illustrates the tightness of the approximate envelope as compared with the true polytope, while Figure~\ref{fig:lq_unknown} b) is a scatter plot of the first two feature weights (the third is uniquely determined given the first two) as recovered from applying eq.~\eqref{eq:recover weights} to each vertex of the compound polytope $\Pp_z$. Notice that the cost feature weight estimates are tightly clustered \revision{near the true weights}. 

\begin{figure}[H]
\centering
\begin{subfigure}[b]{0.4\textwidth}
	\includegraphics[width=1\textwidth]{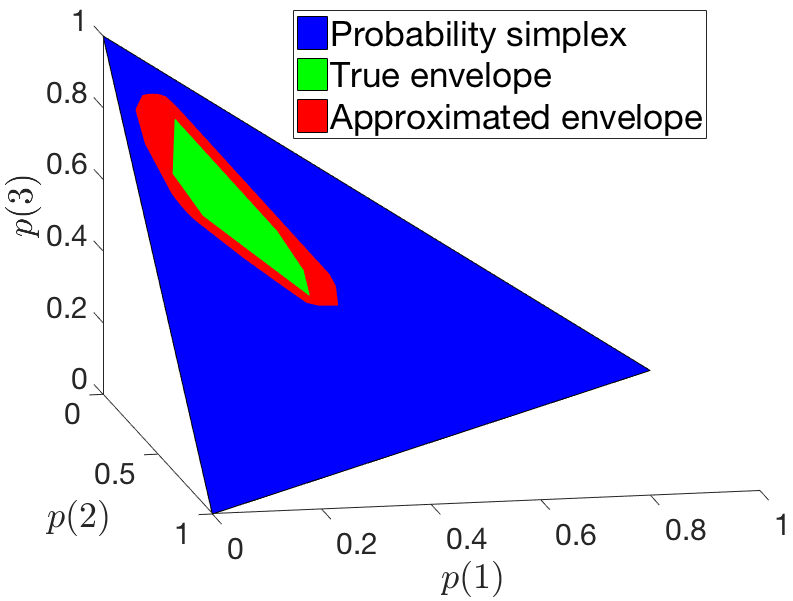}
	\caption{Polytope estimate.}
	\label{fig:poly_200} 
\end{subfigure}
\begin{subfigure}[b]{0.4\textwidth}
	\includegraphics[width=1\textwidth]{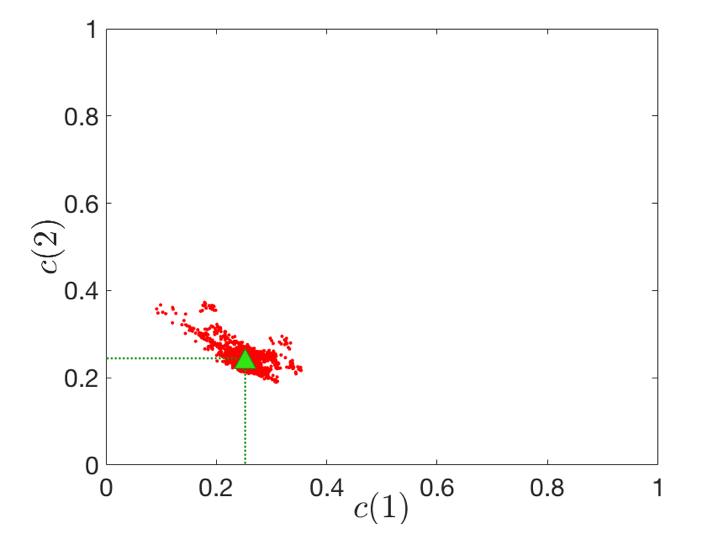}
	\caption{Feature weight estimates \revision{(red), true weights marked by green triangle.}}
	\label{fig:weights_200} 
\end{subfigure}
\caption{Approximated risk envelope and cost feature weights from 200 state-control pair demonstrations. }
\label{fig:lq_unknown}
\end{figure}

\section{Risk-sensitive IRL: Multi-step case}
\label{sec:multi period}

\revision{We now} generalize the one-step decision problem to the multi-step setting. We consider a model where the disturbance $w_k$ is sampled every $N>1$ time-steps and held constant in the interim. Such a model generalizes settings where disturbances are sampled i.i.d. at every time-step (corresponding to $N=1$ in our model) and it allows us to model delays in the expert's reaction to changing disturbances. We model the expert as planning in a \emph{receding horizon} manner by looking ahead for a finite horizon \revision{longer than $N$ steps, executing the computed policy for $N$ steps, and iterating}. Owing to the need to account for future disturbances, the multi-step finite-horizon problem is a search over control \emph{policies} (i.e., the executed control inputs depend on which disturbance is realized). 

\subsection{Prepare-React Model: Preliminaries}
In this section we reprise the ``prepare'' -- ``react'' model introduced in~\citep{MajumdarSinghEtAl2017}, and depicted below in Figure~\ref{fig:multi_look}. 
\begin{figure}[h]
\centering
	\includegraphics[width=0.4\textwidth]{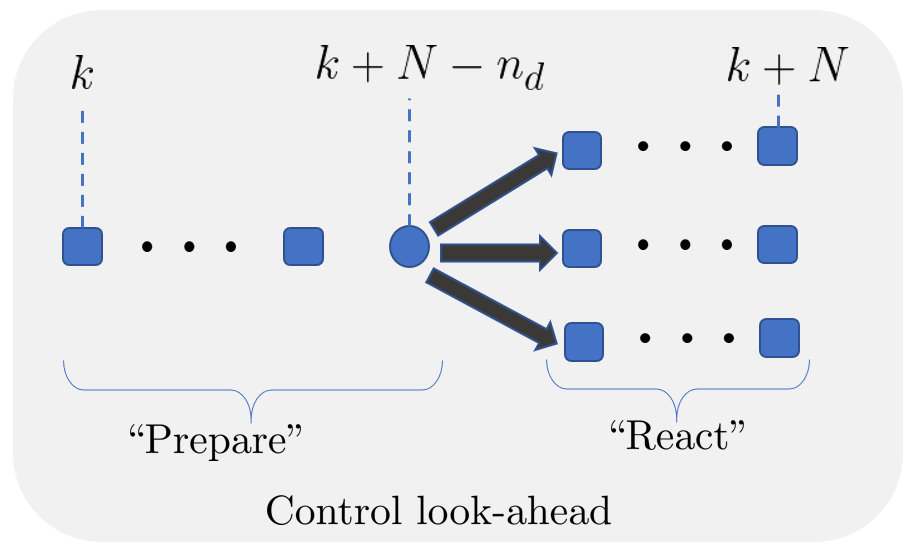}
		\caption{\revision{Scenario tree centered on a disturbance sampled at time $k+N-n_d$, where $N>n_d$. The ``prepare'' phase precedes each disturbance realization by $N-n_d$ steps, while the ``react'' phase follows it by $n_d$ steps; in total, an individual planning stage last $N$ steps. Note that during the prepare and react phases the dynamics are deterministic, as we assume that the disturbances stay constant for $N$ steps in between sampling times. As we model the expert as a receding horizon planner, the expert's planning problem at time $k$ would account for nested re-planning stages at time $k + N, k + 2N, \ldots$ up to some look-ahead horizon. The expert would then execute their policy for the first $N$ steps and then resolve the planning problem at time $k+N$ with a receded horizon.}}
	\label{fig:multi_look} 
\end{figure}
The expert's policy is decomposed into two phases (shown in Figure~\ref{fig:multi_look}), referred to as ``prepare" and ``react." Intuitively, this model captures the idea that in the period preceding a disturbance (i.e., the ``prepare" phase) the expert controls the system to a state from which he/she can recover well (in the ``react" phase) once a disturbance is realized. Studies showing that humans have a relatively short look-ahead horizon in uncertain decision-making settings lend credence to such a model~\citep{CartonNitschEtAl2016}. As in~\citep{MajumdarSinghEtAl2017}, the delay parameter $n_d$ would be learned directly from the demonstrations.
\revision{To account for nested re-planning stages, we need to define a notion} of \emph{dynamic} risk measures, used to assess risk over sequential realizations of uncertainty.

\subsection{Dynamic Risk Measures}

Consider a discrete-time stochastic cost sequence $\{Z_t\}$, where  $Z_t \in \Z_t$ the space of real-valued random variables at stage $t$. Let $\Z_{t:t'} := \Z_t \times \cdots \times \Z_{t'}$ where $ t < t'$. A \emph{dynamic risk measure} is a \emph{sequence} of risk measures $\rho_{t:t'} : \Z_{t:t'} \rightarrow \Z_t, t= 0,\ldots, t'$, each mapping a future stream of random costs into a risk assessment at stage $t$ and satisfying the monotonicity property $\rho_{t:t'}(Z_{t:t'}) \leq \rho_{t:t'}(Y_{t:t'})$ for all $Z_{t:t'},Y_{t:t'} \in \Z_{t:t'}$ such that $Z_{t:t'} \leq Y_{t:t'}$. The monotonicity property is an intuitive extension of the monotonicity property for single-step risk assessments, and an arguably defensible axiom for all risk assessments.

To give dynamic risk measures a concrete functional form, we need to generalize the CRM axioms presented in Definition~\ref{def:coherent} to the dynamic case. 

\begin{definition}[Coherent One-Step Conditional Risk Measures]
\label{def:cond_coherent}
 A \emph{coherent one-step conditional risk measure} is a mapping $\rho_t : \Z_{t+1} \rightarrow \Z_{t}$, for all $t \in \mathbb{N}$, that obeys the following four axioms. For all $Z_{t+1},Y_{t+1} \in \Z_{t+1}$ and $Z_t \in \Z_{t}$:

{\bf A1. Monotonicity:} $Z_{t+1} \leq Y_{t+1} \Rightarrow \rho_t(Z_{t+1}) \leq \rho_t(Y_{t+1})$.
\vspace{0.25em}

{\bf A2. Translation invariance:} $\rho_t(Z_{t+1} + Z_t) = \rho_t(Z_{t+1})+Z_t$.
\vspace{0.25em}

{\bf A3. Positive homogeneity:} $\forall \lambda  \geq 0$, $\rho_t(\lambda Z_{t+1}) = \lambda \rho_t(Z_{t+1})$.
\vspace{0.25em}

{\bf A4. Subadditivity:} $\rho_t(Z_{t+1} + Y_{t+1}) \leq \rho_t(Z_{t+1}) + \rho_t(Y_{t+1})$.
\end{definition}
Note that each $\rho_t$ is a random variable on the space $\Z_t$ and given the discrete underlying probability space, each component of $\rho_t$ is uniquely identified by the sequence of disturbances preceding stage $t$ (hence the term \emph{conditional}). Furthermore, it is readily observed that a mapping $\rho_t: \Z_{t+1} \rightarrow \Z_{t}$ is a coherent one-step conditional risk measure if and only if each component of $\rho_t$ is a CRM. 

As investigated in~\citep{Ruszczynski2010}, in order for dynamic risk assessments to satisfy the intuitive monotonicity condition and to ensure rationality of evaluations over time, a dynamic risk measure must have \revision{a} compositional form:
\begin{equation}
\begin{split}
	\rho_{t:t'} (Z_{t:t'}) :=& Z_t + \rho_t\left(Z_{t+1} + \rho_{t+1}\left(Z_{t+2} + \ldots + \rho_{t'-1}(Z_{t'}) \cdots \right) \right) \\
					=& \rho_t \circ \cdots \circ \rho_{t'-1} (Z_t + \cdots + Z_{t'}),
\end{split}	
\label{dyn_risk}
\end{equation}
where each $\rho_t$ is a coherent one-step conditional risk measure, \revision{and the second equality follows by the translational invariance property}. Figure~\ref{fig:dyn_metric} provides a helpful visualization of \revision{such a} compounded functional form. 
\begin{figure}[h]
\centering
	\includegraphics[width=0.4\textwidth]{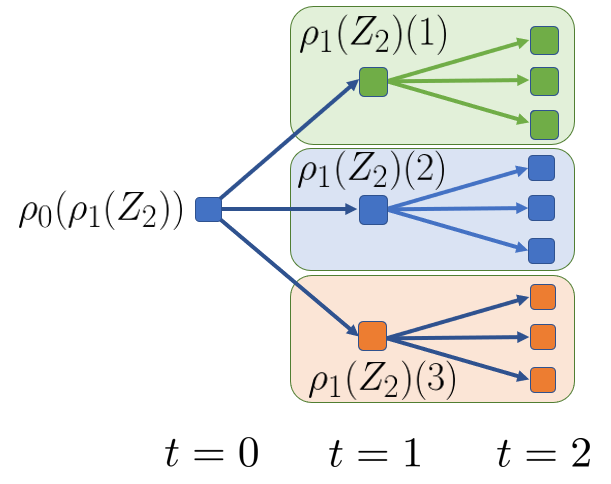}
		\caption{A scenario tree with three uncertain outcomes at each stage. The one-step risk mapping $\rho_1(Z_2) \in \Z_1$ maps the random cost $Z_2 \in \Z_2$ to a risk assessment at stage 1, i.e., is a random variable on $\Z_1$ and is thus isomorphic to the space $\RR_{\geq 0}^3$. Here, each component $j$ of $\rho_1(Z_2)$, i.e., $\rho_1(Z_2)(j)$, associated with node $j$ at stage 1 \revision{(e.g., for $j=1$, the green node)}, is a CRM over the children of node $j$ at stage 2. The mapping $\rho_0(\rho_1(Z_2))$ subsequently maps the risk-assessments at stage 1 (i.e., $\rho_1(\cdot)$) back to stage 0. }
	\label{fig:dyn_metric}
\end{figure}

\subsection{Prepare-React Model: Formal Definition}

We are now ready to \revision{formally} define the expert's multi-step problem, \revision{from the perspective} of time-step $k$, with look-ahead horizon $T\, N$ steps, where $T \in \mathbb{N}_{\geq 1}$ denotes the number of {\em branching events}  \revision{(i.e., disturbance samples)}  within the prediction horizon. \revision{According to} the prepare-react model introduced earlier, we assume that the disturbance mode for the first $N-n_d$ steps starting at time-step $k$ corresponds to $w_{k-n_d}$ \revision{(i.e., the disturbance mode realized at the last sampling event)}, following which the disturbance is re-sampled every $N$ steps. \revision{An illustration of the nested prepare-react model with a  look-ahead horizon $T=2$} is provided in Figure~\ref{fig:multi_pr_look}.
\begin{figure}[h]
\centering
	\includegraphics[width=0.6\textwidth]{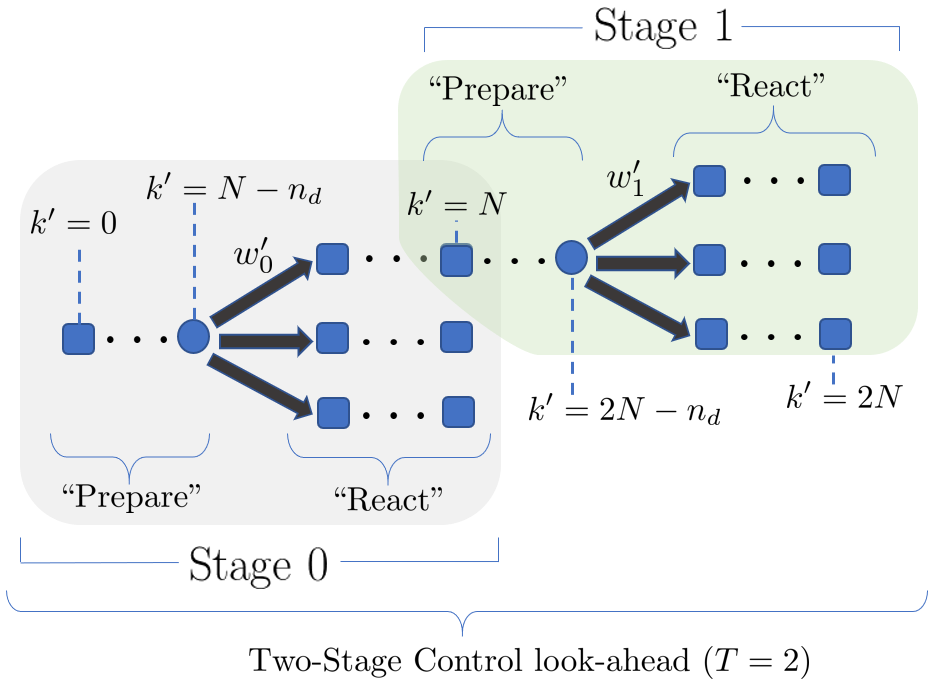}
		\caption{Multi-stage scenario tree for the prepare-react multi-step problem at time $k$ (indexed internally using $k'$ for simplicity). The disturbance is sampled every $N$ steps. The control look-ahead consists of multiple nested branches of ``prepare" and ``react" sequences \revision{(indexed as ``stages'' in the figure above)}; shaded green in the figure above is one such nested branch corresponding to $w_0' = w^{[1]}$. To evaluate costs over stage 1, it is assumed that the expert is \revision{using} the conditional CRM $\rho_1(\cdot)$ where for each realization of $x_{N|k}$ (identified uniquely by the observed disturbance branch at stage $0$), the expert uses the static CRM $\rho(\cdot)$ over the nested outcomes (shown in green for one possible realization of $x_{N|k}$). The observed control sequence is the beginning ``prepare'' -- ``react'' sequence corresponding to the actual realized disturbance $w^*_{0|k}$. }	\label{fig:multi_pr_look} 
\end{figure}

Let $x_{k'|k}$ denote the predicted state for time-step $k+k'$, where $k' \in [0,\, TN-1]$, as predicted at time-step $k$ within the expert's multi-step optimization problem. Similarly, let $\{w'_t\}, t \in [0,T-1]$ represent the predicted disturbance sequence, \revision{where each $w_{t}' \in \{1,\ldots,L\}$}. \revision{Let $\hat{\pi}_t(\bm{\omega}_{t-1}, w'_{t}),\ t\in [0,T-1]$ denote the expert's {\em ``prepare'' -- ``react'' control policy} for stage $t$ (corresponding to time-steps $k' \in [tN, \, (t+1)N-1]$), \revision{where we make explicit the dependency on} the partial predicted disturbance history $\bm{\omega}_{t-1} := \{w_{-1}', w_0', \ldots, w_{t-1}' \}$ and the next predicted disturbance mode $w'_{t}$, while we omit in the interest of brevity the \revision{dependency} on $x_k$ and partial policy history $\{\hat{\pi}_0,\ldots,\hat{\pi}_{t-1}\}$}. We take $w'_{-1} =: w^*_{-1|k}$ to represent the actual disturbance mode in progress at the time of solving the multi-stage optimization problem at time-step $k$. Note that, by causality, only the ``react'' portion of $\hat{\pi}_t$ may be a function of $w_{t}'$ but not the ``prepare'' portion. Finally, \revision{let} $C_{tN:(t+1)N-1}(x_{tN|k},\hat{\pi}_t(\cdot))$ \revision{denote} the accumulated cost \revision{(i.e., a random variable adapted to the filtration generated by the sequence $\{w_{t}'\}$)} over time-steps $k' \in [tN, (t+1)N-1]$ given the ``prepare'' -- ``react'' control policy $\hat{\pi}_t$ \revision{at stage $t$}, \revision{that is, 
\[
	C_{tN:(t+1)N-1}(x_{tN|k},\hat{\pi}_t(\cdot)) = \sum_{k'=tN}^{(t+1)N-1} C( x_{k'|k}, \hat{\pi}_t(x_{k'|k})).
\]
As in Section~\ref{sec:single period unknown cost}, we assume that the time-step cost $C(x,u)$ is represented by a linear combination $c^T \phi(x,u)$ of features $\phi(x,u)$.} The expert's multi-step optimization problem is then given as: 
\begin{equation}
\min\limits_{\substack{\hat{\pi}_t \\ t \in [0,T-1]}}  \rho_0\bigg(C_{0:N-1}(\cdot,\hat{\pi}_0) + \rho_1 \big( C_{N:2N-1}(\cdot, \hat{\pi}_1) + \cdots + \rho_{T-1}( C_{(T-1)N:TN-1}( \cdot,\hat{\pi}_{T-1}) ) \big) \cdots \bigg),
\label{dyn_cost_pr}
\end{equation}
where each $\rho_t,\ t = 0,\ldots,T-1$ is a coherent one-step \emph{conditional} risk measure such that each \emph{component} of $\rho_t$ is a CRM $\rho(\cdot)$ with respect to the probability space $(\W, 2^{\W}, p)$ and characterized by \emph{the fixed risk envelope} $\Pp \subseteq \Delta^L$. Leveraging the translational invariance property, the objective may be equivalently re-written as 
\[\small
\begin{split}
& \varrho(C_{0:TN-1}) := \rho_0 \circ \cdots \circ \rho_{T-1} ( C_{0:TN-1} ) \\
= &C_{0:N-n_d} + \rho_0\bigg( C_{N-n_d+1:N-1} + C_{N:2N-n_d} + \rho_1 \big( C_{2N-n_d+1:2N-1} + \cdots + \rho_{T-1}\left( C_{TN-n_d+1:TN-1} \right) \cdots \big) \bigg).
 \end{split}
\]
One should notice that (1) for each $t \in \{0,\ldots, T-1\}$, the cost sequence $C_{tN:(t+1)N-1}$ is split across the risk operator $\rho_t$ due to the ``prepare''--``react'' structure and the translational invariance property, (2) the risk mapping is over accumulated costs, \revision{i.e., in the notation of eq.~\eqref{dyn_risk}, the stage $t$ random cost $Z_t$ corresponds to the accumulated cost $C_{tN:(t+1)N-1}$} since disturbances are sampled every $N$ steps, \revision{and (3) since problem~\eqref{dyn_cost_pr} is solved in receding horizon fashion and thus $x_k$ is \emph{known} at time $k$, $\varrho(\cdot)$ is a real valued \emph{function}}. The observed input from the expert is the \revision{first stage optimal} ``prepare'' -- ``react'' control policy $\hat{\pi}^*_0 (w^*_{-1|k}, w^*_{0|k})$ where $w^*_{0|k}$ represents the actual disturbance mode sampled after time-step $k$, following which the expert re-solves the problem at time $k+N$, \revision{with a receded horizon up to time-step $k+N+TN$}.

Notice that by setting $T=1$, one recovers the single-stage ``prepare -- react" model presented in~\citep{MajumdarSinghEtAl2017}. \revision{The strategy in~\citep{MajumdarSinghEtAl2017} is to reduce} the multi-step inference problem \revision{to a mathematically equivalent  single-step problem} by \revision{estimating} the (un-observed) control policies of the human agent, corresponding to the \emph{un-realized} disturbance branches. \revision{Specifically}, consider the scenario tree decomposition in Figure~\ref{fig:multi_look}. If disturbance $w^{[3]}$ is realized at time-step $k+N-n_d$, then we only observe the ``react'' control sequence corresponding to the third branch. The algorithm in~\citep{MajumdarSinghEtAl2017} proceeded by first inferring the ``react'' control sequences for the un-observed branches and then constructing a bounding hyperplane using a similar version of problem~\eqref{eq:tau bound known cost}. In a multiple-stage setting, however, it is exceedingly difficult to exactly infer (or approximate) the unobserved control policies as each of these policies involves an \emph{unobserved} nested optimization over future branching events. Consequently, the optimality conditions of an observed control policy are defined by equalities that are non-linear in the unobserved variables. Therefore, extending the use of KKT conditions to infer an outer approximation of the risk envelope in the style of Theorem~\ref{thm:bound polytope} leads to an \emph{intractable} non-convex optimization problem. To address this fundamental observability issue, we introduce a \emph{semi-parametric} representation of the CRM, discussed next. 

\subsubsection{Semi-Parametric CRM}
Fix a set of $M$ normal vectors $a_j \in \RR^{L}, j = 1,\ldots, M$. \revision{Let $\Pp_r$ denote the polytope defined by the halfspace constraints}
\begin{equation}
	\Pp_r := \{ v \in \Delta^L  \ | \  a_j^T v \leq b(j) - r(j),\quad  j = 1,\ldots,M\},
\label{P_param}
\end{equation}
where for each $j$, $b(j):= \max_{v\in \Delta^L} a_j^T v$, \revision{and $r$ is a parameter vector in $\RR^{M}$}. The CRM \revision{with risk envelope $\Pp_r$ is denoted as $\rho^r(\cdot)$; explicitly,}
\begin{equation}
	\rho^r(Z) = \max_{v\in  \Pp_r} \mathbb{E}_v [Z],
\label{rho_sp}
\end{equation}
where $Z \in \RR^L$ is a discrete random variable with $L$ possible realizations (see Figure~\ref{fig:semi_CRM}). 
\begin{figure}[h]
\centering
	\includegraphics[width=0.5\textwidth]{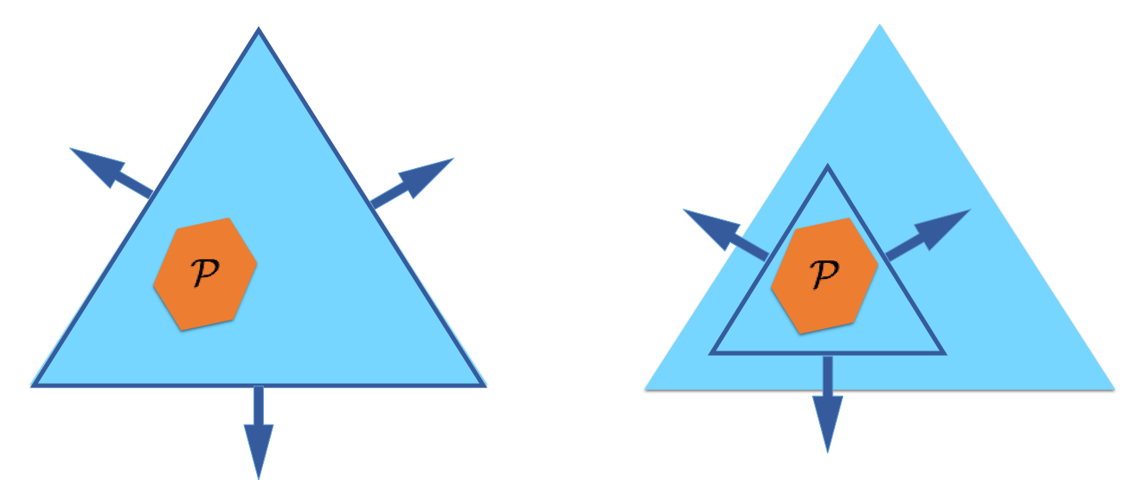}
	\caption{Schematic illustration of a semi-parametric CRM for a 3-scenario outcome space. The true risk envelope $\Pp$ is shown in orange; the \revision{boundary of the approximation polytope $\Pp_r$} is shown in dark blue. Left: the polytope $\Pp_r$ with $r = 0$. Right: the polytope $\Pp_r$ for some $r \in \RR_{>0}^M$. The arrows denote the a priori fixed normal vectors $\{a_j\}_{j=1}^3$. }
\label{fig:semi_CRM}
\end{figure}

This induced CRM is termed \emph{semi}-parametric since unlike methods where one seeks to find the parameters defining a \emph{fixed} disutility function (e.g., \cite{ShenTobiaEtAl2014,RatliffMazumdar2017}), here we \emph{do not} assume a fixed chosen risk measure. Instead, by parameterizing the risk measure in the dual space (via its risk envelope characterization), we retain the generality to recover any polytopic CRM, given a sufficient number of normal vectors $a_j$. A potential method to choose the normal vectors $a_j$ is to take the halfplane normals from the multi-step KKT method described in~\citep{MajumdarSinghEtAl2017}.

\revision{In order to ensure that the polytope $\Pp_r$ defined in eq.~\eqref{P_param} is non-empty, we} define the extended polytope
\[
	\tilde{\Pp}_r := \{ (v,r) \in \Delta^L \times \RR^{M}  \ | \   a^T_j v  + r(j) \leq b(j), \quad j = 1,\ldots,M\}.
\]
Define $\mathcal{R} := \mathrm{proj}_{r} \tilde{\Pp}_r$ as the projection of \revision{polytope $\tilde{\Pp}_r$}  along the $r$ variables. Then, $r\in \mathcal{R}$ ensures that the polytope $\Pp_r$ is non-empty. It is readily observed that the set $\mathcal{R}$ is also a polytope.

\subsubsection{Constrained Maximum Likelihood}

Given the semi-parametric representation of the CRM in~\eqref{rho_sp}, the RS-IRL problem reduces to inference over the offset vector $r$, and cost feature weight vector $c$. We will perform this inference using a constrained maximum likelihood model. Consider, first, \revision{a likelihood model inspired by the MaxEnt IRL framework~\citep{ZiebartMaasEtAl2008} where we assume
\begin{equation}
	\mathrm{Pr} (\hat{\pi}_0 (w^{*}_{-1|k}, w^{*}_{0|k})) \propto \exp\left( - \tau [w^*_{-1|k}, w^*_{0|k}] \right),
\label{like_cont}
\end{equation}
where, $\tau [w^*_{-1|k}, w^*_{0|k}] $ is the optimal value of~\eqref{dyn_cost_pr}}, computed using the semi-parametric CRM defined in~\eqref{rho_sp} and  \emph{conditioned} on $w'_0 = w^*_{0|k}$ and $\hat{\pi}_0 = \hat{\pi}_0 (w^{*}_{-1|k}, w^*_{0|k})$ \revision{(see Appendix~\ref{app:full_like} for a detailed derivation of this distribution)}. While the original MaxEnt IRL model is motivated by finding the maximum entropy distribution subject to an expected feature matching constraint, the robust performance of MaxEnt IRL even in the absence of such a statistical motivation has been extensively observed and leveraged in the IRL literature, \revision{particularly in the context of noisy or suboptimal demonstrations}.

A key limitation of the MaxEnt model, however, \revision{lies in the complexity of estimating the partition function (normalization factor for the distribution in eq.~\eqref{like_cont})} and its gradients. \revision{The likelihood model in eq.~\eqref{like_cont} represents} a distribution over all possible $N$-length \emph{policies}. This makes sampling-based approximations intractable, as similarly observed in~\citep{KretzschmarSpiesEtAl2016}, and Laplace integral-based approximations as used in~\citep{LevineKoltun2012} too imprecise. 

In order to construct a tractable algorithm, we employ the simplification whereby at the beginning of any ``prepare'' stage (see Figure~\ref{fig:multi_pr_look}), the expert \revision{can} only choose an \emph{open-loop} control \emph{trajectory} $\hat{u}$ of length $N$ from a finite set of such trajectories $\prod$, thereby eliminating the notion of a ``react'' \emph{policy} and replacing it with an open-loop sequence spanning the entire ``prepare'' -- ``react'' stage.  The set $\prod$ of these trajectories can be chosen for instance by running the K-Means clustering algorithm on the raw input trajectories. This simplification allows us to interpret problem~\eqref{dyn_cost_pr} as a game between the expert with action set $\prod$ and nature with action set $\W$, and uniquely identify any predicted state $x_{tN|k}, t \in [1,T-1]$ using the predicted game history, i.e., disturbance history $\bm{\omega}_{t-1}$ and control history $\bm{u}_{t-1} := \{\hat{u}_0,\ldots,\hat{u}_{t-1}\}$.  Leveraging such a discrete representation and dynamic programming, \revision{one can then} construct the optimal solution to the expert's multi-stage optimization problem using a ``risk-sensitive" Bellman recursion, defined below.
\vspace{\baselineskip}

\noindent \emph{Terminal Stage:} For all possible game histories at stage $T-1$, define
\begin{align*}
	\tau[\bm{u}_{T-2}, \bm{\omega}_{T-2}](\hat{u}) &:= \rho \left(C_{(T-1)N:TN-1}(x_{(T-1)N|k}, \hat{u}) \right)  \\ 
	\hat{\pi}^*_{T-1} [\bm{u}_{T-2}, \bm{\omega}_{T-2}] &:= \argmin_{\hat{u} \in \prod} \tau[\bm{u}_{T-2}, \bm{\omega}_{T-2}](\hat{u}).
\end{align*}

\noindent \emph{Recursion:} For all possible game histories at stage $t$, for $t = T-2, \ldots, 1$:
\begin{align*}
	\tau[\bm{u}_{t-1}, \bm{\omega}_{t-1}](\hat{u}) &:= \rho \left(C_{tN:(t+1)N-1}(x_{tN|k},\hat{u}) +  \min_{\hat{u}'\in \prod} \tau [\{\bm{u}_{t-1},\hat{u}\}, \{\bm{\omega}_{t-1}, w_{t}' \}](\hat{u}') \right) \\ 
	\hat{\pi}^*_{t} [\bm{u}_{t-1}, \bm{\omega}_{t-1}]  &:= \argmin_{\hat{u} \in \prod} \tau[\bm{u}_{t-1}, \bm{\omega}_{t-1}](\hat{u}).
\end{align*}

\noindent \emph{First Stage:}
\begin{align*}
	\tau[w^*_{-1|k}](\hat{u}) &:= \rho\left(C_{0:N-1}(x_{k},\hat{u}) + \min_{\hat{u}' \in \prod} \tau [\{\hat{u}\}, \{w^*_{-1|k}, w_{0}' \}](\hat{u}') \right) \\ 
	\hat{\pi}^*_{0} [w^*_{-1|k}]  &:= \argmin_{\hat{u} \in \prod} \tau[w^*_{-1|k}]](\hat{u}).  
\end{align*}
The value $\min_{\hat{u} \in \prod} \tau[w^*_{-1|k}](\hat{u})$ is the optimal value of problem~\eqref{dyn_cost_pr}. In the equations above, it is understood that for each $t \in [0,T-1]$, the accumulated cost $C_{tN:(t+1)N-1}$ is evaluated based on the previous disturbance mode $w_{t-1}'$ for the first $N-n_d$ steps, followed by $w_{t}'$ for the remaining $n_d$ steps. 

Given the structure of the optimal solution of problem~\eqref{dyn_cost_pr}, presented in Bellman form above using the true CRM $\rho(\cdot)$, we now construct a \emph{computationally tractable} likelihood model for the parameters $r$ and $c$ by defining the \emph{soft} risk-sensitive Bellman recursion using the semi-parametric CRM $\rho^r(\cdot)$. For the terminal stage, define
\begin{equation}
	\tilde{\tau}[\bm{u}_{T-2}, \bm{\omega}_{T-2}](\hat{u}) := \rho^r \left(C_{(T-1)N:TN-1}(x_{(T-1)N|k}, \hat{u}) \right);
\label{tau_term_soft}
\end{equation}
for all $t = T-2,\ldots,1$, \revision{define}:
\begin{equation}
	\tilde{\tau}[\bm{u}_{t-1}, \bm{\omega}_{t-1}](\hat{u}) := \rho^r \left(C_{tN:(t+1)N-1}(x_{tN|k},\hat{u}) +  \softmin_{\hat{u}' \in \prod} \tilde{\tau} [\{\bm{u}_{t-1},\hat{u}\}, \{\bm{\omega}_{t-1}, w_{t}' \}](\hat{u}') \right);\label{tau_int_soft} 
\end{equation}
finally, for the first stage, \revision{define}:
\begin{equation}
	\tilde{\tau}[w^*_{-1|k}](\hat{u}) := \rho^r \left(C_{0:N-1}(x_{k},\hat{u}) + \softmin_{\hat{u}' \in \prod} \tilde{\tau} [\{\hat{u}\}, \{ w^*_{-1|k}, w_{0}' \}](\hat{u}') \right), \label{tau_soft} 
\end{equation}
where $\softmin_{x} f(x):= -\log \sum_x \exp(-f(x) )$. 

Let $\hat{u}^*_t$ be the closest (in $\mathcal{L}_2$ norm) trajectory in $\prod$ to the observed control sequence over time-steps $[tN,(t+1)N-1]$\footnote{We use $t$ here for notational consistency between the stage-wise decomposition of the multi-step problem and demonstrated action trajectories.}. Similar to the MaxEnt IRL approach, we allow for imperfect human demonstrations by postulating that lower risk-sensitive cost actions (i.e., $\tilde{\tau}[w_{-1|tN}^*](\hat{u})$) are exponentially preferred, i.e., 
\begin{equation}
	\mathrm{Pr} (\hat{u}) \propto \exp\left( -\beta \tilde{\tau}[w_{-1|tN}^*](\hat{u}) \right),
\label{boltz_def}
\end{equation}
where $\beta >0$ is an inverse temperature parameter. Thus, the likelihood of parameters $r,c$ is given by:
\begin{equation}
	l(r,c | \hat{u}^*_t) := \dfrac{\exp \left(-\beta \tilde{\tau}[w_{-1|tN}^*] (\hat{u}^*_t)\right)}{\sum_{\hat{u}} \exp\left(-\beta  \tilde{\tau}[w_{-1|tN}^*] (\hat{u})\right)}.
\label{like_discrete}
\end{equation}
As the expert is assumed to solve the problem in a receding horizon fashion, we may treat each $(w^*_{-1|tN}, \hat{u}^*_t)$ tuple in the demonstrated trajectory $\mathcal{T}^*$ independently. Consequently, the log likelihood given the entire trajectory is simply
\begin{equation}
	l(r,c | \mathcal{T}^*) = \dfrac{1}{|\mathcal{T}^*|} \sum_{\hat{u}^*_t \in \mathcal{T}^*} -\beta\tilde{\tau}[w_{-1|tN}^*] (\hat{u}^*_t) + \softmin_{\hat{u}} \beta \tilde{\tau}[w_{-1|tN}^*](\hat{u}),
\label{like_traj}
\end{equation}
where $|\mathcal{T}^*|$ is the number of $N$-step demonstrations in the trajectory $\mathcal{T}^*$. The inference problem is \revision{then}
\begin{equation}
	\{r^*, c^*\} := \argmax\limits_{c \in \Delta^H,\  r \in \mathcal{R}} \ l(r,c | \mathcal{T}^*),
\label{ML_prob}
\end{equation}
where, as before, we assume that the cost weights are non-negative and sum to one (and thus lie in the simplex $\Delta^H$). We solve the problem using projected gradient descent on $r$ and entropic mirror descent on $c$. The gradient formulas are derived by propagating gradients of the $\tilde{\tau}$ variables in recursive fashion from the terminal to the first stage (similar to \revision{the} computation of $\tilde{\tau}$ itself) and leveraging LP sensitivity results. For ease of exposition, we provide these recursive formulas in Appendix~\ref{app:gradients}.

\begin{remark}
While we lose the outer-approximation of the risk envelope and convergence guarantees associated with the KKT method, in its place we obtain a tractable algorithm that enables us to accommodate a substantially larger class of dynamic decision-making inference problems. Experimental results, as discussed in Section~\ref{sec:results}, confirm that the method works well in approximating a wide range of risk profiles.
\end{remark}

\section{Example: Driving Game Scenario}
\label{sec:results}

\revision{In this section} we apply our RS-IRL framework on a simulated driving game (Figure \ref{fig:visualization})  with ten human participants to demonstrate that our approach is able to infer individuals' varying attitudes toward risk and mimic the resulting driving styles. In particular, we note that the experimental setting here constitutes a significantly more challenging and dynamic testbed than typical benchmark examples such as grid-world or sequential investment tasks.

\subsection{Experimental Setting}

The setting consists of a leader car and a follower car, simulated in the commercial driving simulator Vires VTD~\citep{VIRESSGH}. Participants controlled the follower car with the Logitech G29 control suite, consisting of a steering wheel and pedals (Figure \ref{fig:visualization}). The follower car is modeled using the simple car model with states: $x_f$ (along-track position), $y_f$ (lateral position), $v_f$ (speed), $\theta_f$ (yaw angle) and $\delta_f$ (steering angle). The dynamics are given by:
\begin{equation}
\label{eq: follower's dynamics}
\dot{x}_f = v_f \cos(\theta_f), \quad \dot{y}_f = v_f\sin(\theta_f), \quad \dot{v}_f = u_a, \quad \dot{\theta}_f = -\frac{v_f}{l} \tan(\delta_f), \quad \dot{\delta_f} = u_s,
\end{equation}
where $u_a$ and $u_s$ are, respectively, the longitudinal acceleration and the steering rate inputs, and $l = 3.476$ m. The leader car plays the role of an ``erratic driver" and is modeled with double integrator dynamics along-track and triple integrator dynamics in the lateral direction to mimic continuous steering inputs. The state of the leader's car is described by $x_l$ (along-track position), $y_l$ (lateral position), $v_{x,l}$ (forward speed), $v_{y,l}$ (lateral speed) and $a_y$ (lateral acceleration). The dynamics are given by:
\begin{equation}
\label{eq: leader's dynamics}
\dot{x} = v_x, \quad \dot{v}_x = w_x, \quad \dot{y} = v_y, \quad \dot{v}_y = a_y, \quad \dot{a}_y = w_y,
\end{equation}
where $[w_x,w_y]^T$ is the leader's control input. We simulate this system in discrete time at $60$ Hz and analyze the data with a time step $\Delta t = 0.1$ s. 

In this setting, a disturbance $w^{[i]}$ corresponds to a sequence of control inputs executed by the leader car $w^{[i]} := \{ (w_x, w_y)^{[i]}_k \}_{k=1}^N$ over $N$ time steps. Each disturbance is sampled from a finite set $\W = \{w^{[1]},\ldots,w^{[L]}\}$ with $L = 4$. These ``disturbance" realizations correspond to different maneuvers for the leader (doing nothing, accelerating, decelerating, and swapping lanes) and are generated randomly according to the pmf $p = [0.3, 0.3, 0.3, 0.1]$.

The disturbance is sampled every $N = 15$ time steps. Thus, the leader car can be viewed as executing a random \emph{maneuver} every 1.5 seconds. The whole system is described by the state: 
\begin{equation}
\label{eq: system's state}
\xi := \left[ x_f, y_f, \theta_f, v_f, \delta_f, x_l, v_{x,l}, y_l, v_{y,l}, a_{y,l}\right]^T.
\end{equation}
Participants in the study were informed that their goal was to follow the leader car (described as an ``erratic driver"), as closely as possible in the $x$ and $y$ directions, while staying behind the leader and avoiding any collision. The leader car's four maneuvers were described to participants, along with the fact that these sequences of actions are generated every 1.5 s, \emph{independent} of (as opposed to an interactive game)  the participant's actions and position. 

The experimental protocol for each participant consisted of three phases. The first phase ($\sim 1$ minute) was meant for the participant to familiarize themselves with the simulator. The second and third phases (one minute each) involved the leader car acting according to the model described above (with actions being sampled according to the pmf $p$). The data collected during the second phase was used to train the model and data collected during the third phase was used to test it (the second and third phase disturbance sequences were kept same for all participants). 

Note that the pmf $p$ is \emph{not} shared with the participants. This experimental setting may thus be considered \emph{ambiguous}. However, since participants are exposed to a training phase where they may build a mental model of disturbances, the setting may also be interpreted as one involving risk.

While the ``game'' setting is identical to the one introduced in our earlier work in~\citep{MajumdarSinghEtAl2017}, the use of non-linear dynamics and a realistic driving simulator as opposed to the first-order integrator MATLAB game in~\citep{MajumdarSinghEtAl2017} lends the experiment more realism. All data, algorithm, and plotting code is made available at \url{https://github.com/StanfordASL/RSIRL}.

\subsection{Modeling and Implementation}

We modeled participants' behavior using the multi-stage ``prepare''--``react" framework presented in Section \ref{sec:multi period} with the ``prepare"  phase starting 0.7 seconds before the leader's action is sampled. The ``react" phase thus extends to 0.8 seconds after the disturbance. This parameter was chosen as being roughly reflective of observed participant behavior during the \emph{training} phase. We use $T=2$ decision stages to model the participants. Hence, the planning horizon is $NT=30$, which involves planning over two \revision{disturbance branching events} in a receding horizon fashion.

We represent the cost function as a linear combination of the following features (with unknown weights):
\begin{itemize}
\item $\phi_1 = \textbf{1}_{x_{\text{rel}} < 2.5}[\log(1+e^{-r_1 (x_{\text{rel}}-2.5)}) - \log(2)]$, 
\item $\phi_2 = \textbf{1}_{x_{\text{rel}} > 2.5} [\log(1+e^{r_2 (x_{\mathrm{rel}}-2.5)}) - \log(2)]$, 
\item $\phi_3 = \log(1+e^{r_3 |v_{x,\text{rel}}|}) - \log(2)$, 
\item $\phi_4 = r_4 \sum_{k=2}^N (u_{a,k} - u_{a,k-1})^2$, 
\item $\phi_5 = \log(1 + e^{r_5 |y_{\mathrm{rel}}|}) - \log(2)$,
\item $\phi_6 = \textbf{1}_{y_{f} > 2} [\log(1+e^{r_6 (y_{f}-2)}) - \log(2)] +  \textbf{1}_{y_{f} < -2} [\log(1+e^{-r_6 (y_{f}+2)}) - \log(2)]$,
\end{itemize}
where $x_{\text{rel}}$, $y_{\text{rel}}$, and $v_{x,\text{rel}}$ are respectively the relative along-track position, lateral position, and along-track velocity between the leader and the follower. Hence, the first feature translates the instruction of staying behind the leader; the second, third, and fifth features penalize the relative distance and velocity between the leader and follower; the fourth feature penalizes change in longitudinal acceleration (effectively jerk of the trajectory); the sixth feature penalizes crossing the road boundaries. We use $r_1=1$, $r_2=0.05$, $r_3=.1$, $r_4 = 1.0$, $r_5 = 0.1$, and $r_6 = 0.5$. These values were chosen to ensure that the costs were well conditioned over the usual range of relative states observed during the experiments.

In order to optimize the model by maximum likelihood estimation as described in Section \ref{sec:multi period}, we discretized the control space $[u_a, u_s]$ to generate the participant action space. For each participant, we ran the K-Means clustering algorithm on the training control inputs, and chose $15$ control trajectories for the first decision stage and $5$ trajectories for the second stage. Since we model each participant as planning over a receding horizon with two decision stages, it is reasonable to assume that the plan for the second stage is not as fine-grained as over the first one. In addition to reducing the computational burden, this concept of mixing coarse-fine planning is a feature also described in~\citep{CartonNitschEtAl2016} to model human locomotion. We observe that using $15$ control trajectories for the first stage and $5$ trajectories for the second stage was sufficient to generate a diverse expert action set in terms of accelerations (Figure \ref{fig:kmeans 15 acceleration}, \ref{fig:kmeans 5 acceleration}) and steering rates (Figure \ref{fig:kmeans 15 steering rate}, \ref{fig:kmeans 5 steering rate}); the span of all $x/y$ traces resulting from the combination of these first and second stage control trajectories is shown in Figure~\ref{fig:traj_spread}.

\begin{figure}[H]
\centering
\begin{subfigure}[t]{0.45\textwidth}
	\includegraphics[width=1\textwidth]{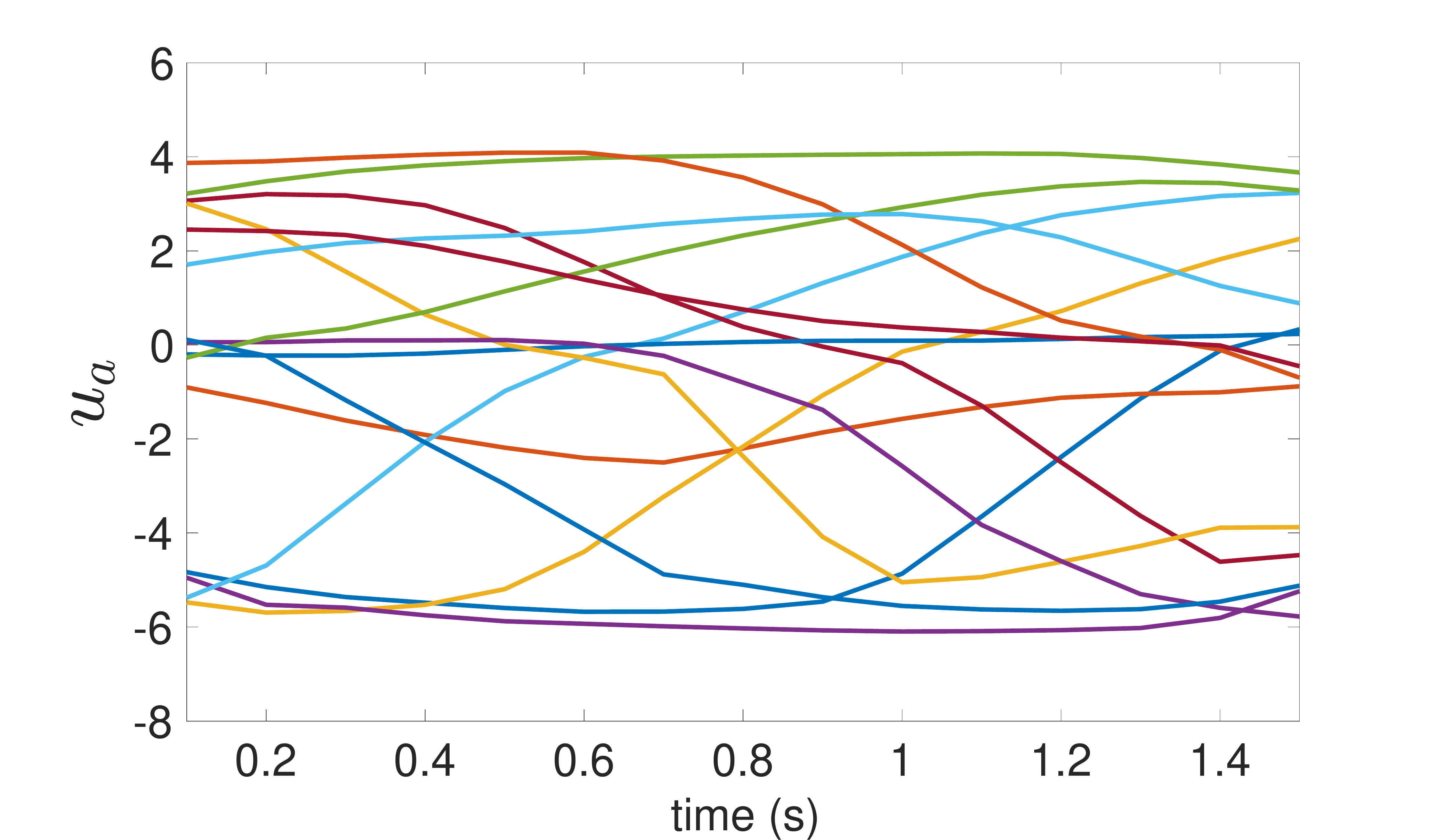}
	\caption{Acceleration}
	\label{fig:kmeans 15 acceleration} 
\end{subfigure}
\begin{subfigure}[t]{0.45\textwidth}
	\includegraphics[width=1\textwidth]{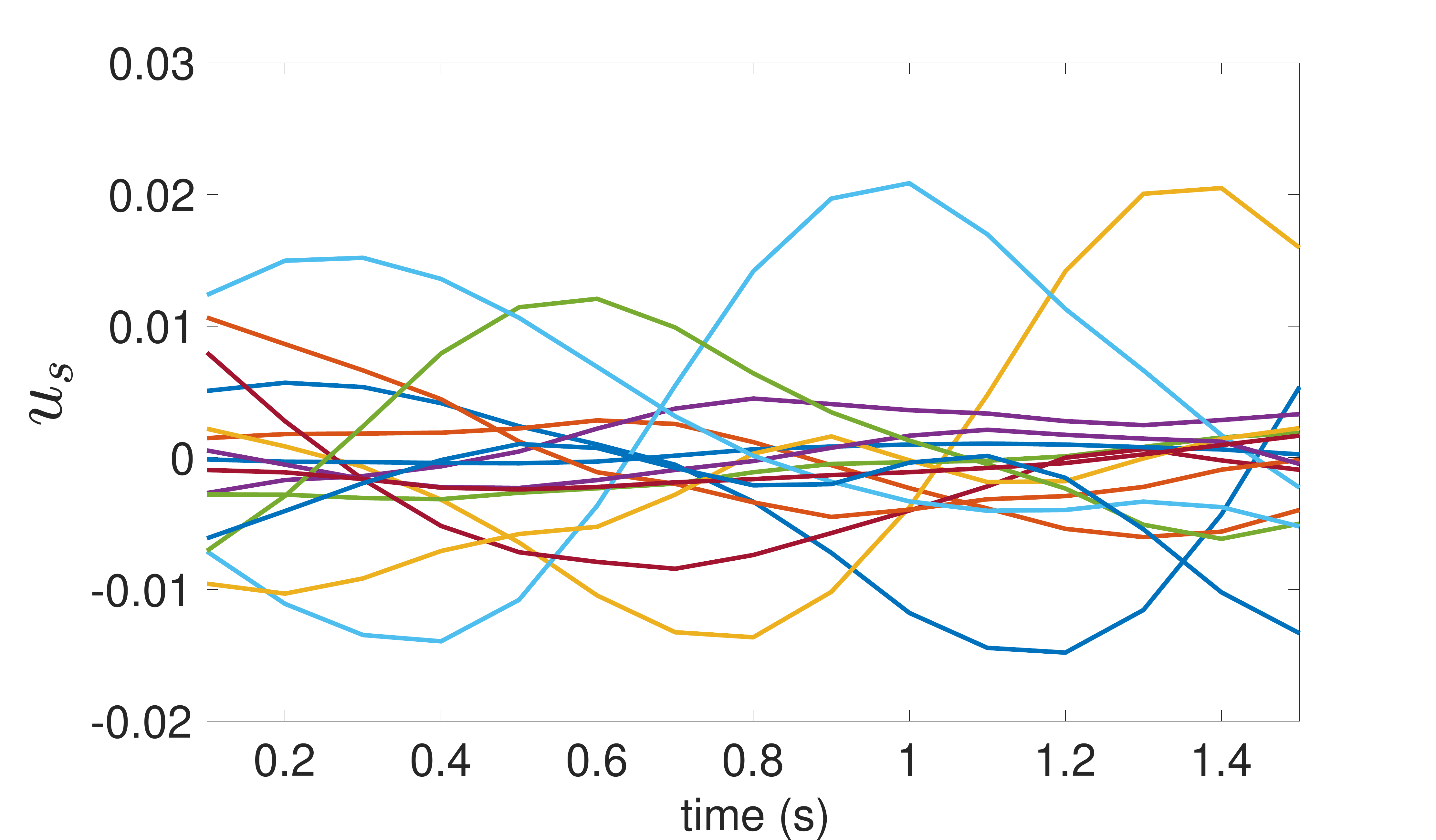}
	\caption{Steering rate}
	\label{fig:kmeans 15 steering rate} 
\end{subfigure}
\begin{subfigure}[t]{0.5\textwidth}
	\includegraphics[width=1\textwidth]{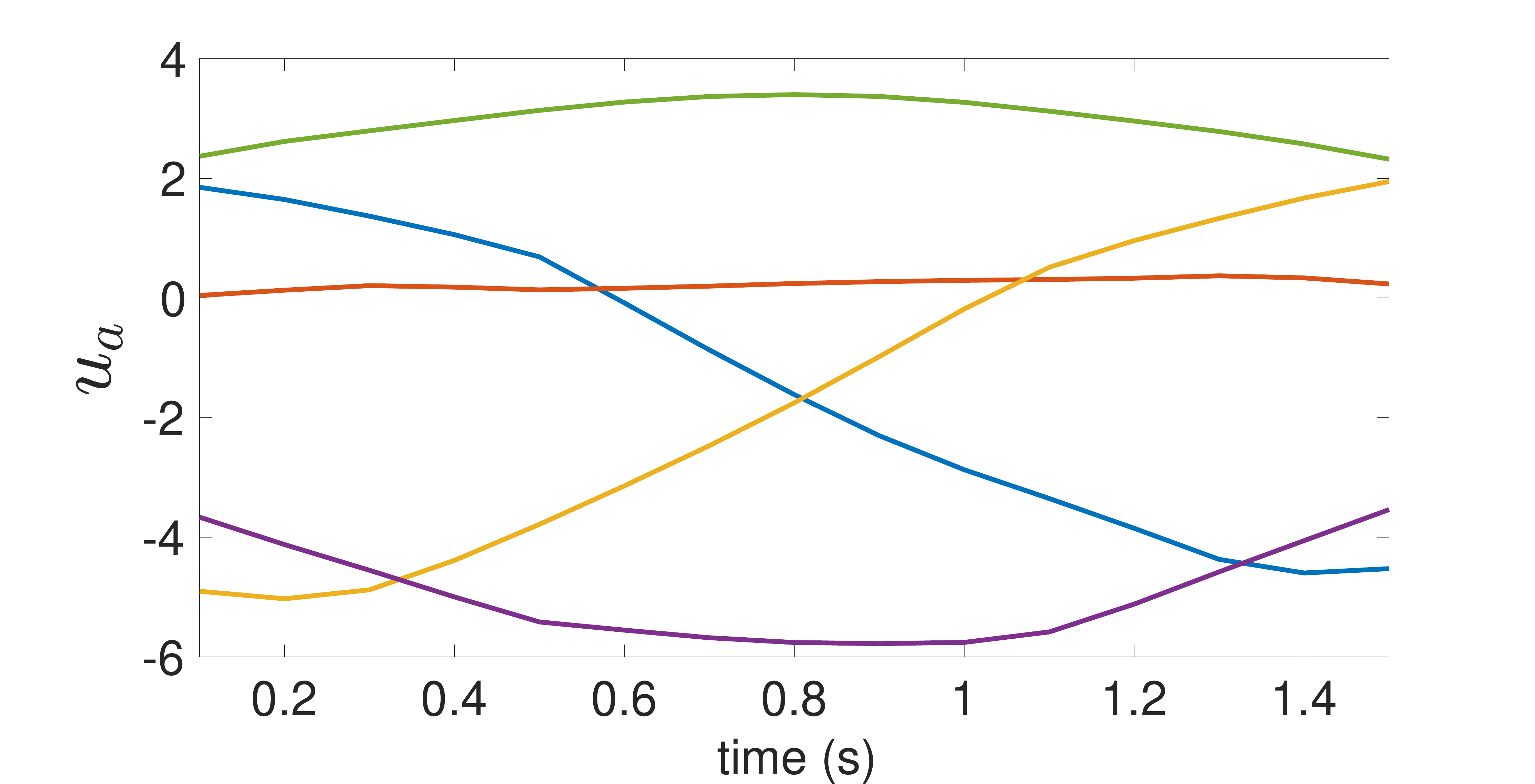}
	\caption{Acceleration}
	\label{fig:kmeans 5 acceleration} 
\end{subfigure}
\begin{subfigure}[t]{0.45\textwidth}
	\includegraphics[width=1\textwidth]{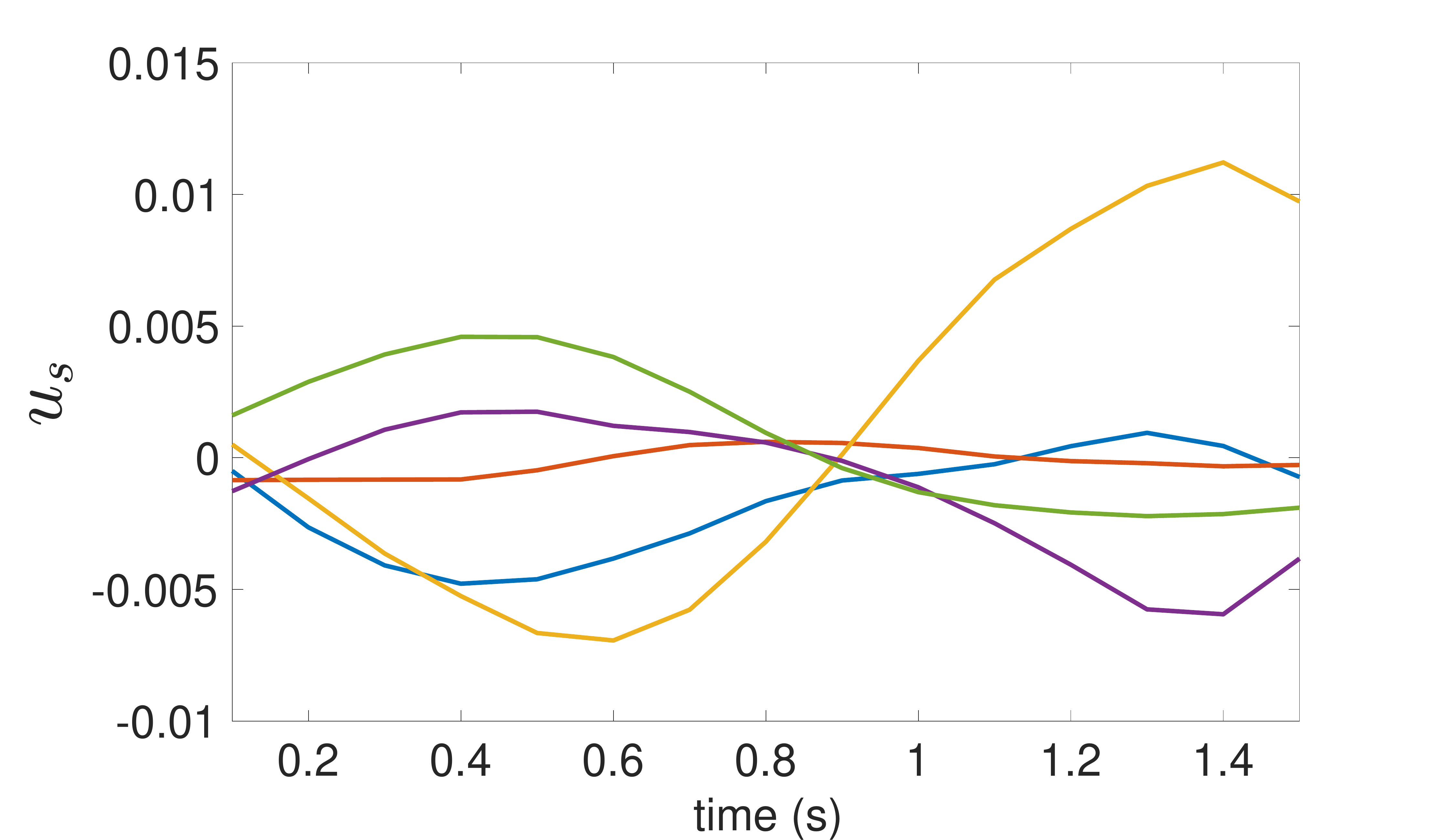}
	\caption{Steering rate}
	\label{fig:kmeans 5 steering rate} 
\end{subfigure}
\caption{Example of control trajectories computed by the K-Means algorithm using 15 centroids ((a), (b)) and 5 centroids ((c), (d)). Acceleration in m/s$^2$ and steering rate in rad/s.}
 \label{fig:kmeans second stage} 
\end{figure}

\begin{figure}[H]
\centering
	\includegraphics[width=0.8\textwidth]{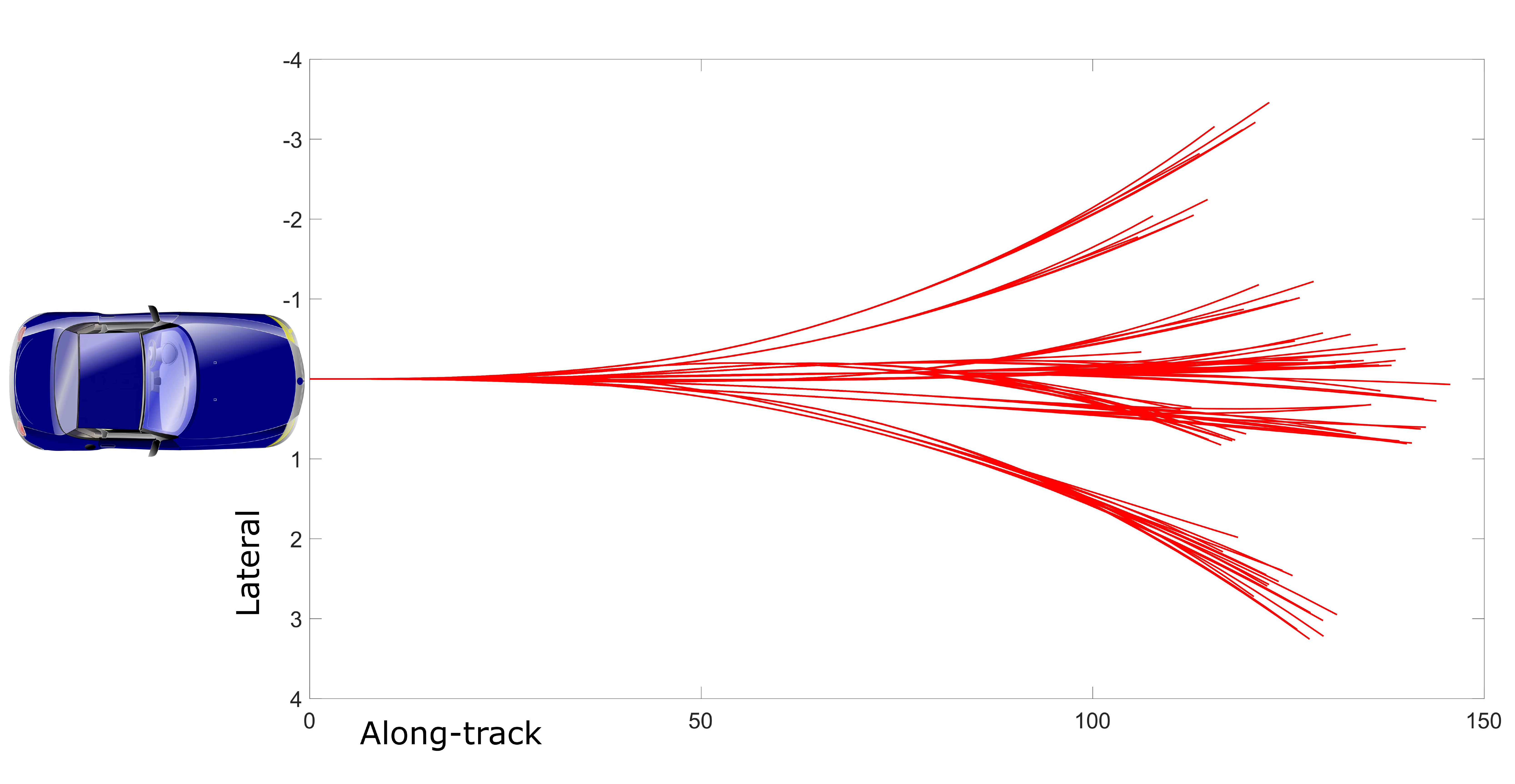}
\caption{Span of all possible along-track/lateral ($x/y$)  3 second trajectories encapsulated within a single 2-stage optimization problem with the $15/5$ discrete control trajectory space. All lengths in meters.}
 \label{fig:traj_spread} 
\end{figure}

The polytope $\mathcal{P}_r$ (alternatively, the semi-parameterized CRM $\rho^r(\cdot)$) was parametrized with 8 normal vectors $\{a_j\}_{j=1}^8$ corresponding to the positive and negative standard basis vectors in $\RR^4$ (i.e., $\{(e_i, -e_i)\}_{i=1}^{4}$). Our MATLAB implementation uses the parser YALMIP~\citep{Loefberg2004} and the solver Mosek~\citep{ApS2017}.

\subsection{Results}

Interestingly, our simulated driving scenario was rich enough to elicit a wide variety of qualitative behaviors from the ten participants. In particular, we observed two extreme policies. One extreme involved the driver following the leader very closely with a small separation (Figure~\ref{fig:full_x_ed}). Another extreme was to follow the leader with a distance large enough to avoid any collision, often decelerating preemptively to avoid such an event (Figure~\ref{fig:full_x_nina}). These two extremes can be interpreted as reflecting varying attitudes towards risk. The first policy corresponds to risk-neutral behavior, where the perceived (as captured by the inferred risk measure) probability of collision is lower than for highly risk-averse participants who were more sensitive to the worst-case eventuality (leader slowing down). We also observed a range of behaviors that lie between these two extremes. 

We compare the RS-IRL approach with one where the expert is modeled as minimizing the expected value of his/her cost function computed with respect to the pmf $p$, in \revision{a receding horizon fashion with} two decision stages. Similar to eq.~\eqref{boltz_def}, we assume that the risk-neutral stochastic policy is given by the Boltzmann distribution induced by the risk-neutral costs, thereby coinciding with the standard, MaxEnt IRL model and representing an important benchmark for comparison. We refer to this approach as risk-neutral IRL (RN-IRL). The analog of the recursion equations~\eqref{tau_term_soft}--\eqref{tau_soft} for the risk-neutral model are obtained by simply replacing the conditional risk measures with the expected value with respect to the pmf $p$. 

Since the expert is assumed to plan his/her decisions every 1.5 s in a receding horizon fashion, we evaluate RS-IRL and RN-IRL predictions based on their errors with respect to each 1.5 s observed demonstration in the test trajectory. In particular, we define the following error metric for predictions in $x_{\text{rel}} = x_l - x_f$:
\begin{equation}
	\Delta x_{\text{rel}, t} := \mathbb{E}\left[\sqrt{\sum_{k} (x_{\text{rel},k|t}^{\mathrm{predicted}} - x_{\text{rel},k|t}^{\mathrm{human}})^2 }\right],
	\label{err_metric}
\end{equation}
where $x_{\text{rel},k|t}^{\mathrm{predicted}}$ and $x_{\text{rel},k|t}^{\mathrm{human}}$ are, respectively, the predicted and actual $x_{\text{rel}}$ trajectories at time $k \in [tN,(t+1)N]$ corresponding to the $t^{\text{th}}$ 1.5 s segment in the demonstrated trajectory. The expectation is taken with respect to the stochastic policy (i.e., Boltzmann distribution) induced by the RS or RN costs (see eq.~\eqref{boltz_def} and~\eqref{boltz}). The errors $\Delta y_{\text{rel}}$, $\Delta v_{x,\text{rel}}$ and $\Delta v_{y,\text{rel}}$ are computed similarly. As RN-IRL consistently performed better with $T=2$ decision stages, we only present comparison results between RS-IRL and RN-IRL for $T=2$. To get a scale for the values reported in this section, the figure below illustrates the two cars almost colliding ($x_{\text{rel}} \approx 2.5$ m) and when they are $5$ m apart.

\begin{figure}[H]
\centering
\begin{subfigure}[t]{0.25\textwidth}
	\includegraphics[width=1\textwidth]{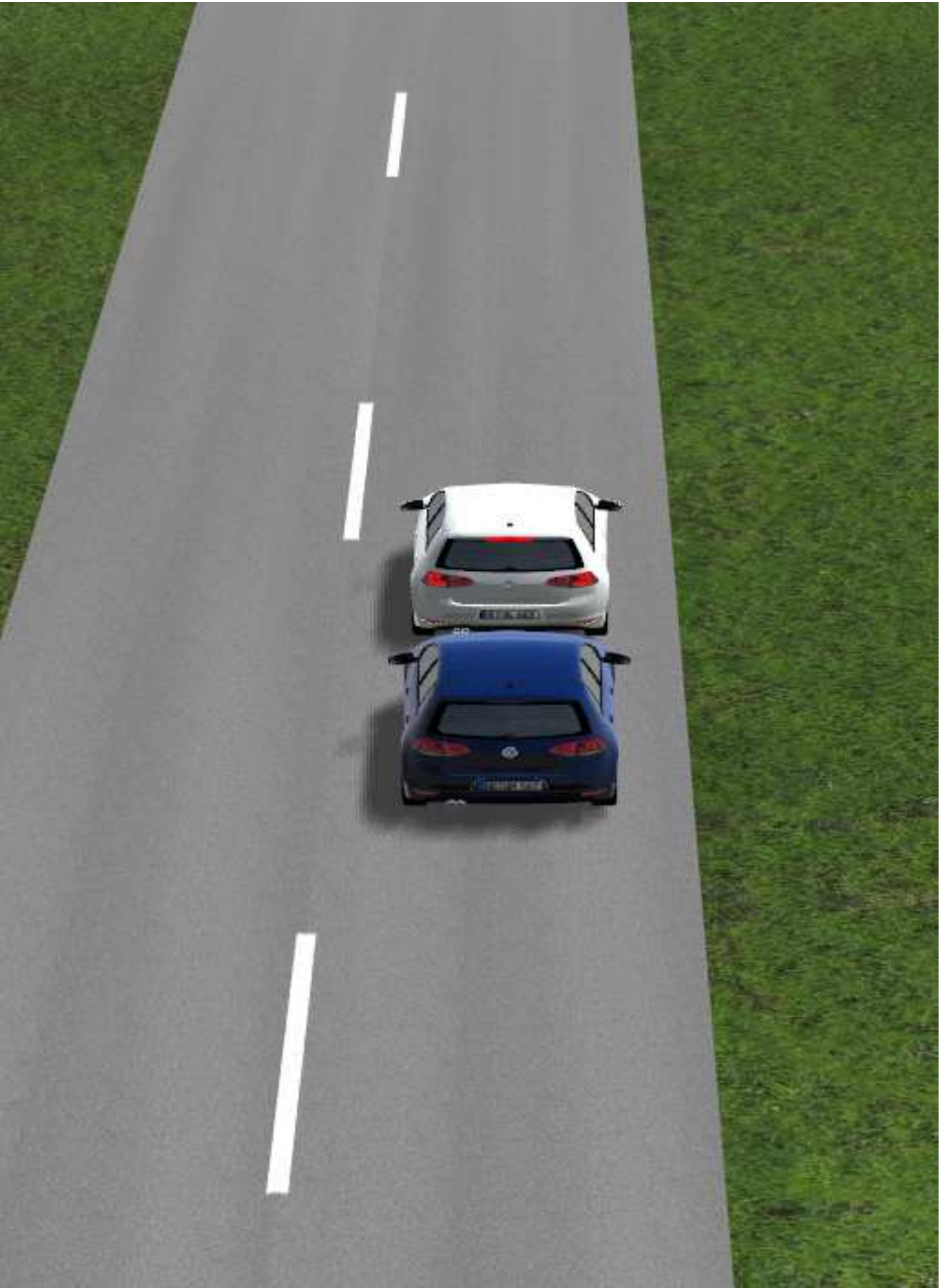}
		\label{fig:collision}
\end{subfigure}
\begin{subfigure}[t]{0.25\textwidth}
	\includegraphics[width=1\textwidth]{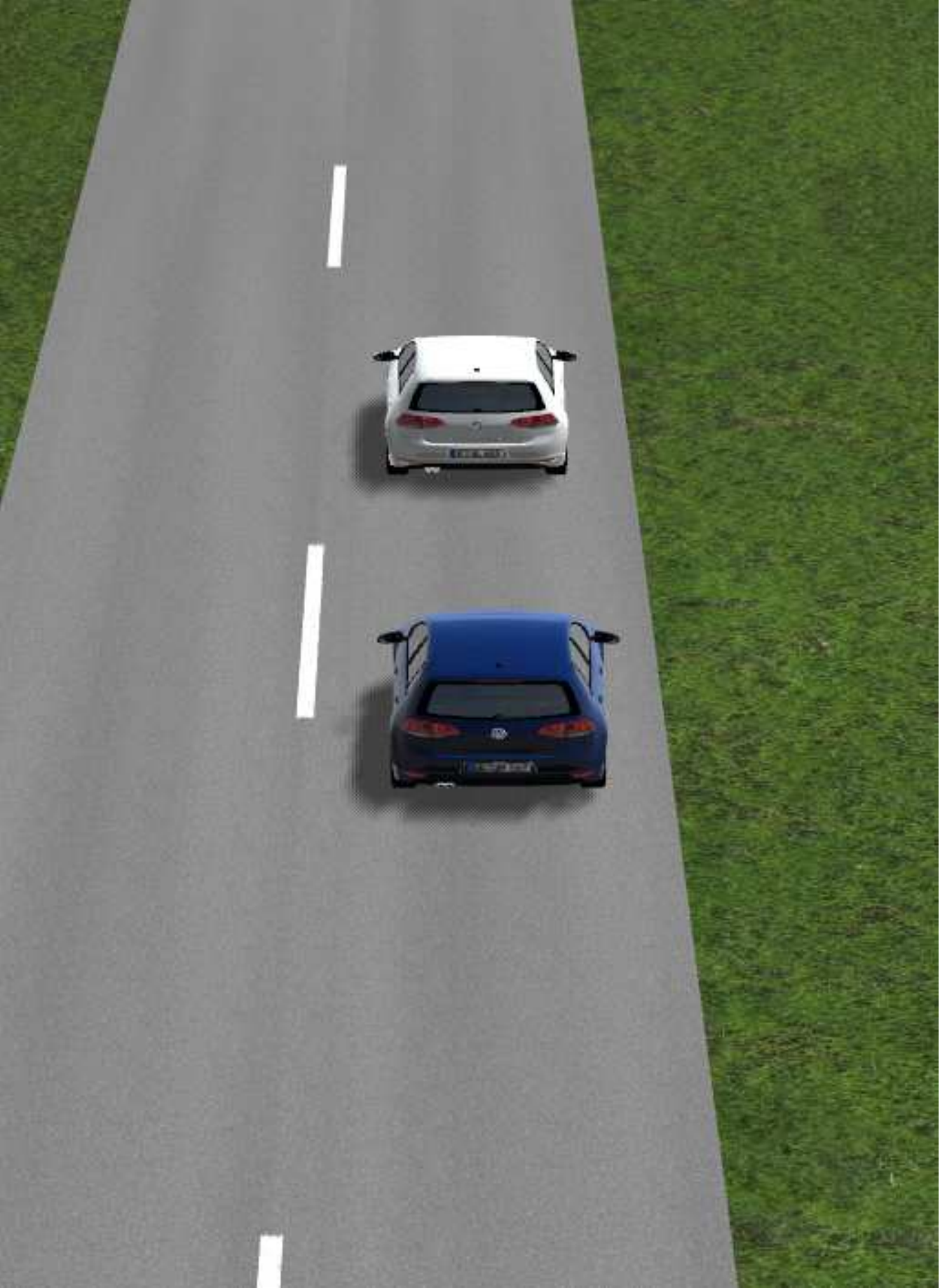}
	\label{fig:risk-averse typical distance}
\end{subfigure}
\caption{Left: Simulator visual when cars  are almost at collision distance ($x_{\text{rel}} \approx 2.5$ m); Right: Simulator visual of a participant driving 5 meters behind the leader. Lane-width: 3 m. }
\label{fig:risk averse participant}
\end{figure}

\subsubsection{Case Study \# 1: Risk-Averse Participant}

Figure~\ref{fig:full_x_nina} plots the $x_{\text{rel}}$ trajectory (normalized by car length $\approx 4.2$ m) during the third (test) phase for a \revision{participant inferred to be highly risk-averse}. On average, the along-track relative distance is quite large ($\approx 6$ car-lengths). The expected prediction errors $\Delta x_{\text{rel},t}$ (normalized by car-length) from RS-IRL and RN-IRL for each of the 51 1.5 s demonstrations comprising the test phase are plotted in Figure~\ref{fig:absolute_x_nina} as absolute errors, and in Figure~\ref{fig:difference_x_nina} as percentage differences with positive values indicating an improvement of RS-IRL over RN-IRL. A similar error plot is shown in Figure~\ref{fig:absolute_vx_nina} and~\ref{fig:difference_vx_nina} for along-track velocity error $\Delta v_{x,\text{rel},t}$.  

\begin{figure}[H] 
\centering
\includegraphics[width=0.55\textwidth]{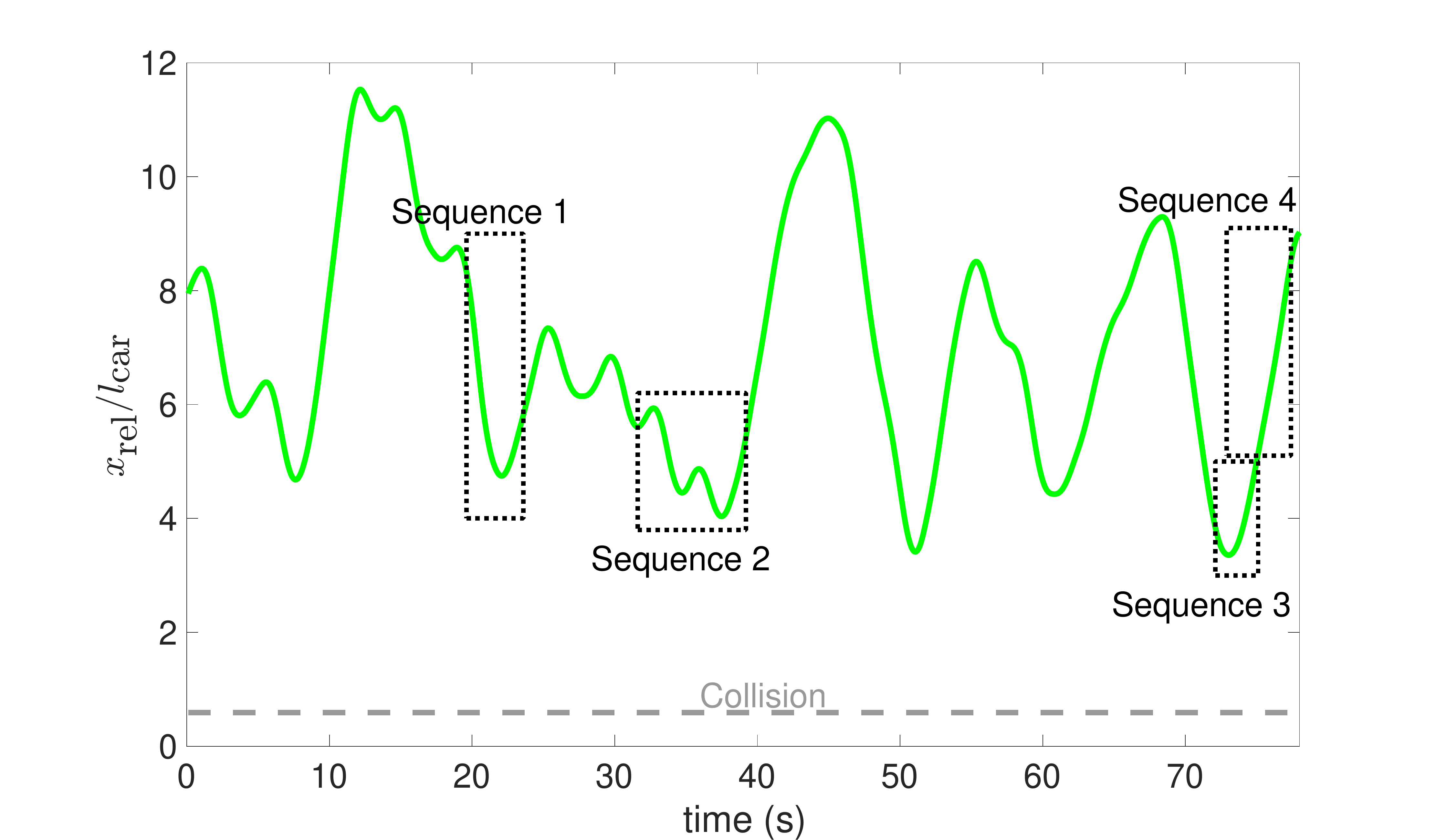}
\caption{Full $x_{\text{rel}}$ (longitudinal distance) trajectory (normalized by car length) for a highly risk-averse participant. On average, the relative distance is quite large ($\approx 6$ car-lengths). The boxed sections are discussed in further detail below.}
\label{fig:full_x_nina}
\end{figure}

\begin{figure}[H]
\centering
\begin{subfigure}[t]{0.49\textwidth}
\includegraphics[width=1\textwidth]{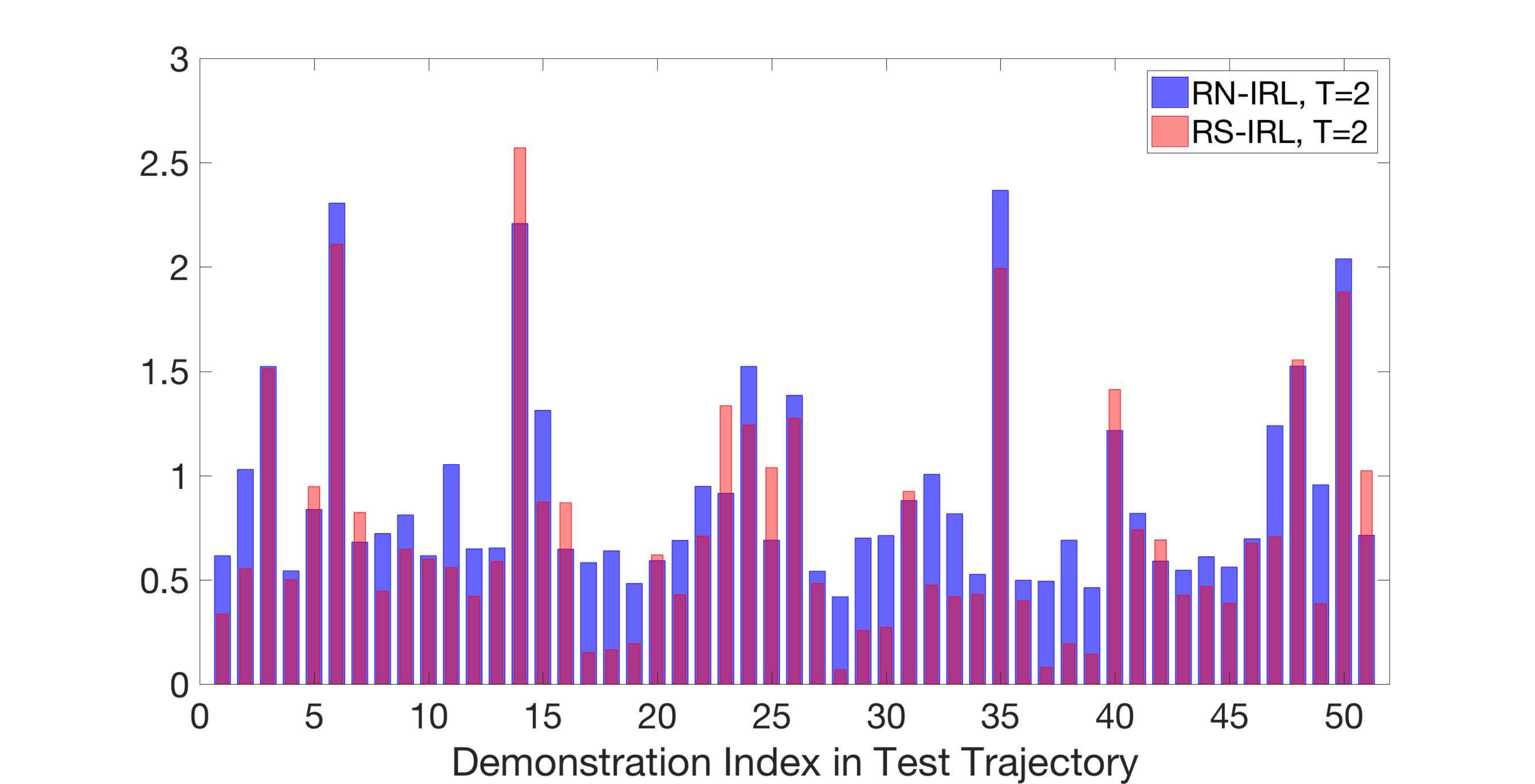}
\caption{Expected (w.r.t. stochastic policy) prediction errors $\Delta x_{\text{rel},t}$ from RS-IRL and RN-IRL for each 1.5 s trajectory segment.}
\label{fig:absolute_x_nina}
\end{subfigure}\ \ 
\begin{subfigure}[t]{0.49\textwidth}
\includegraphics[width=1\textwidth]{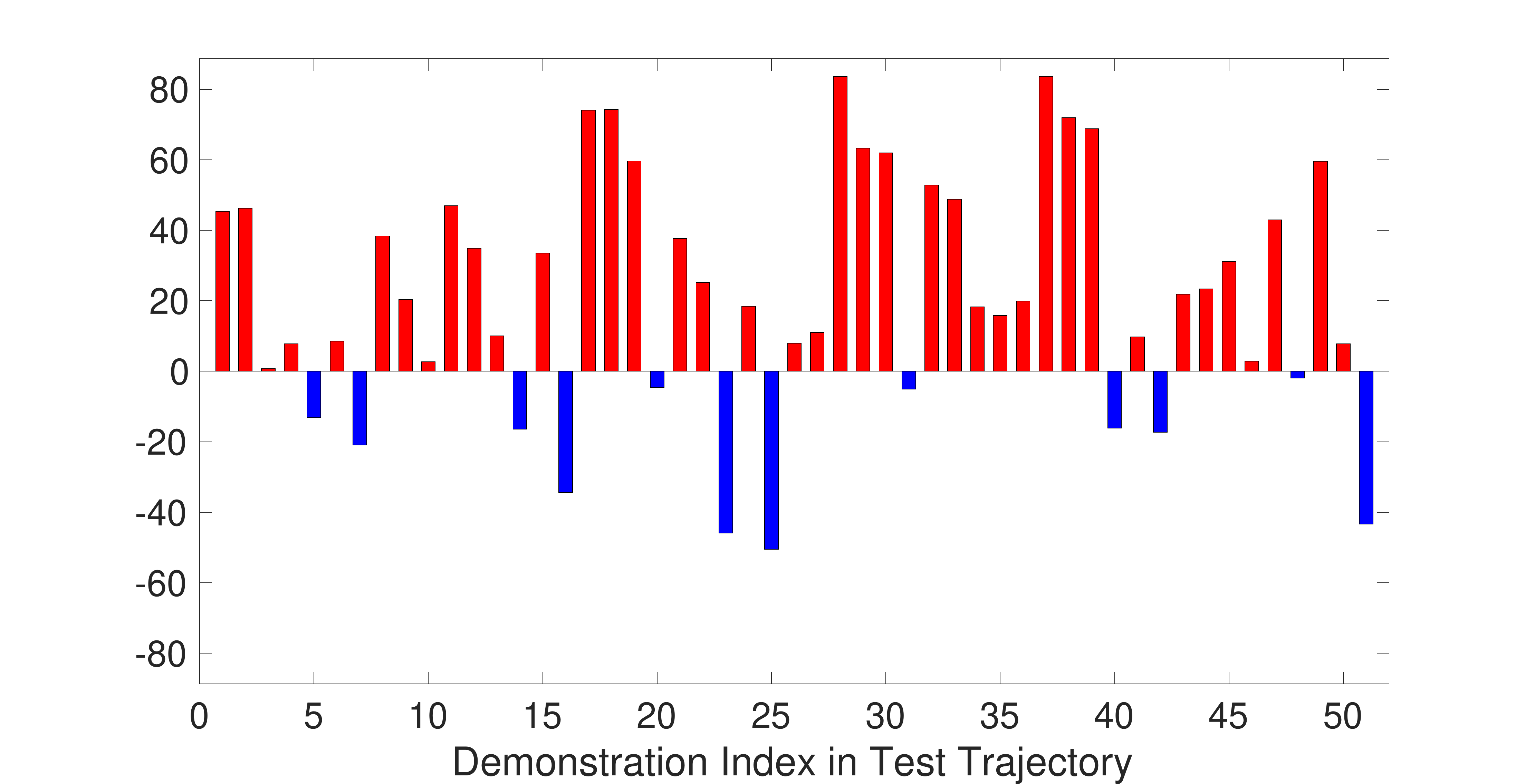}
\caption{Percentage improvement in $\Delta x_{\text{rel},t}$ for the RS-IRL model over RN-IRL for each 1.5 s trajectory segment. }
\label{fig:difference_x_nina}
\end{subfigure}
\begin{subfigure}[t]{0.48\textwidth}
\includegraphics[width=1\textwidth]{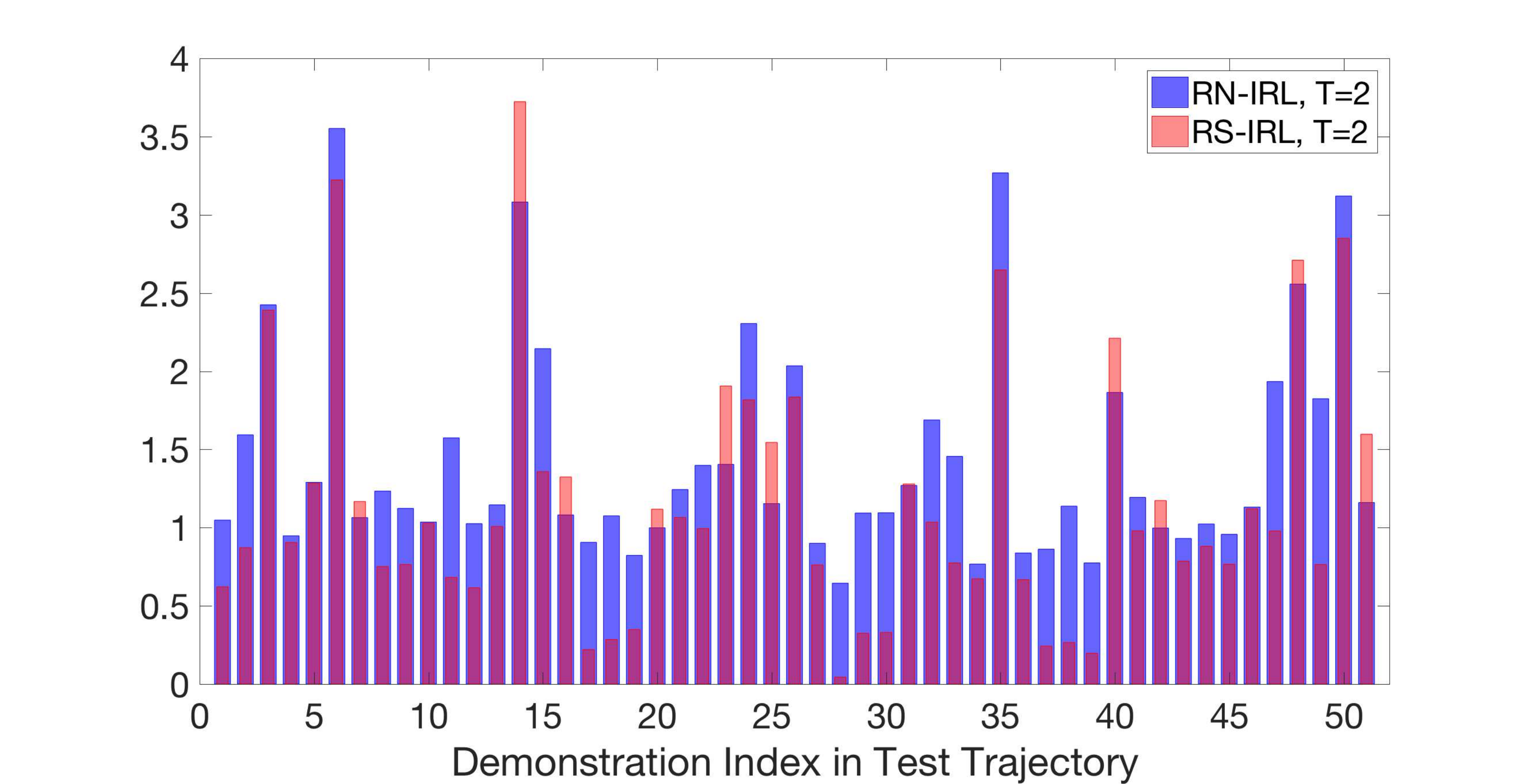}
\caption{Expected (w.r.t. stochastic policy) prediction errors $\Delta v_{x,\text{rel},t}$ from RS-IRL and RN-IRL for each 1.5 s trajectory segment.}
\label{fig:absolute_vx_nina}
\end{subfigure}\ \ 
\begin{subfigure}[t]{0.48\textwidth}
\includegraphics[width=1\textwidth]{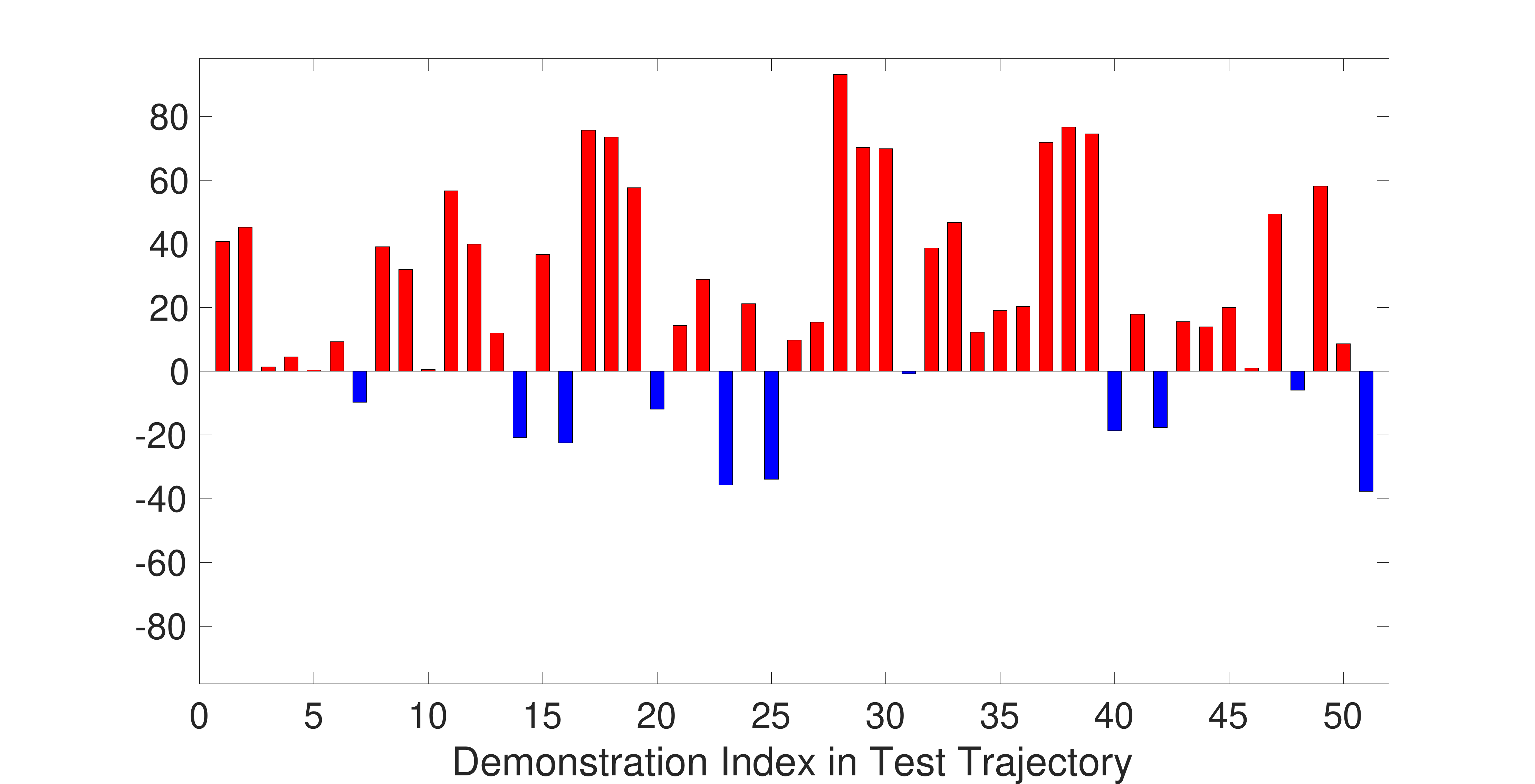}
\caption{Percentage improvement in $\Delta v_{x,\text{rel},t}$ for the RS-IRL model over RN-IRL for each 1.5 s trajectory segment.}
\label{fig:difference_vx_nina}
\end{subfigure}
\caption{Comparison of the $\Delta x_{\text{rel},t}$ and $\Delta v_{x,\text{rel},t}$ prediction errors (normalized by car length) for the RS-IRL and RN-IRL models for a highly risk-averse participant. The RS-IRL model almost always outperforms RN-IRL, on average providing about 22\% improvement. }
\end{figure}
\begin{figure}[H] 
\centering
\begin{subfigure}[t]{0.47\textwidth}
\includegraphics[width=\textwidth]{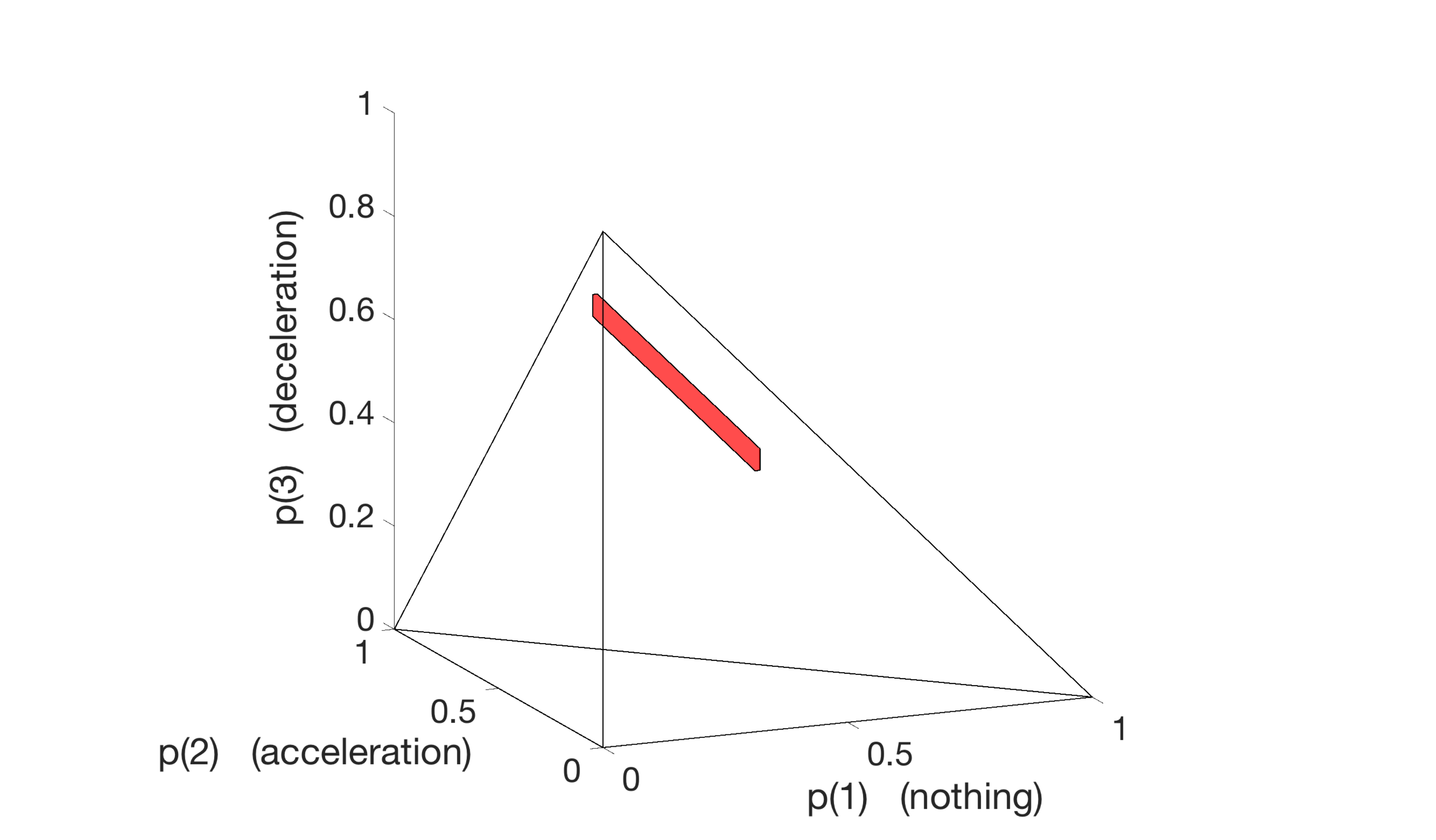}
\caption{Projection of the risk-averse participant's risk envelope along the first three dimensions, corresponding to the $\{ \text{nothing, accelerate, decelerate} \}$ maneuvers by the leader. }
\label{fig:polytope_2_nina}
\end{subfigure}\ \ 
\begin{subfigure}[t]{0.47\textwidth}
\includegraphics[width=\textwidth]{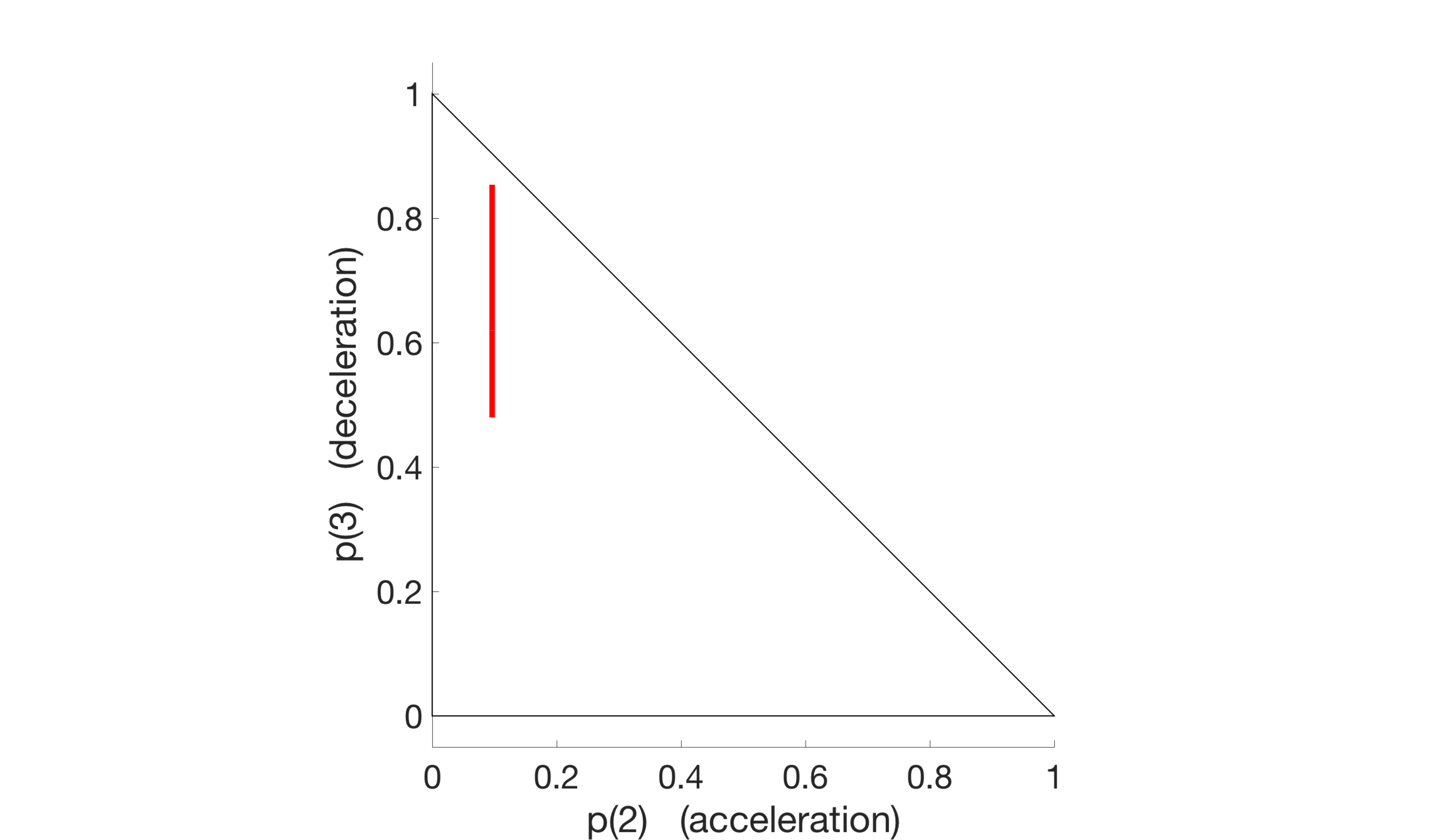}
\caption{Projection of the risk-averse participant's risk envelope along the second two dimensions, corresponding to the $\{\text{accelerate, decelerate} \}$ maneuvers by the leader.}
\label{fig:polytope_1_nina}
\end{subfigure}
\caption{Inferred risk envelope for risk-averse participant. Notice the overweighting (and ambiguity therein) of probabilities associated with the leader's deceleration maneuver and the underweighting of the leader's acceleration. The wire-frame pyramid is the projection of the probability simplex $\Delta^4$ onto the first three dimensions.}
\label{fig:polytope risk averse participant}
\end{figure}
It can be observed that the RS-IRL model almost always outperforms RN-IRL, on average providing an  improvement of about 22\%. Notice however, the four prominent negative peaks in Figure~\ref{fig:difference_x_nina} corresponding to the RN-IRL model outperforming RS-IRL on these instances. All four peaks can be explained by considering the inferred risk envelope $\Pp_r$, whose projections are plotted in Figures~\ref{fig:polytope_2_nina} and~\ref{fig:polytope_1_nina}. Notice how the participant's polytope is significantly biased towards high probabilities in the third dimension (corresponding to the decelerate maneuver for the leader), and furthermore, spans a wide range in probabilities ($\approx [0.5,0.85]$) for this event. Thus, not only does the participant overweight the true probability of this outcome (0.3), he/she is also \emph{ambiguous} about the true probability. On the other hand, the envelope suggests a very low and narrow range of probabilities for the leader's accelerate maneuver. These two properties are what result in the four negative peaks in Figure~\ref{fig:difference_x_nina}. 

Consider the first negative peak (corresponding to Sequence 1 in Figure~\ref{fig:full_x_nina}). At this decision stage, the true distance decreases due to the participant accelerating. As the learned RS-IRL model assumes significant ambiguity in the leader's deceleration yet small probability in acceleration, the most probable trajectories from the RS-IRL model prefer weaker acceleration. Thus, this particular artifact may be attributed to a slight overfitting of the parameterized polytope offsets $r$. This overfitting manifests itself again at the very end (Sequence 4) where the cars are far apart (6 car-lengths) and the RS-IRL model predicts a weaker deceleration than that observed. Essentially the high bias associated with the leader's deceleration as encoded in the inferred polytope along with the already large separation between the cars (recall that the second feature penalizes large separations) leads to a less aggressive deceleration prediction. The remaining peaks (Sequence 2) are shown in detail in Figure~\ref{fig:close_up_nina_2}. At both these decision stages, the RS-IRL model \emph{preempts}, by about one decision stage, a deceleration maneuver for the participant. Again this is a result of the deceleration ambiguity implied by the RS-IRL model and suggests that additional training data is needed to further refine (decrease) this ambiguity. This would naturally also alleviate the prediction errors in Sequences 1 and 4.  

Notice however that all of the spurious predictions by the RS-IRL model as discussed above occur when the cars are quite far apart, on the order of 5--6 car-lengths. In contrast, when the two cars are quite close (e.g., less than 4 car-lengths during Sequence 3), there is now a significant collision risk in the event that the leader decelerates. Figure~\ref{fig:close_up_nina_1} illustrates the RS-IRL model correctly predicting the participant braking to increase the relative separation, and with high probability ($\approx 0.89$). In contrast, the RN-IRL induced stochastic policy is not only of high entropy but also suggests quite absurd trajectories that lead to a further decrease in the relative separation. Thus, these results suggest that when RS-IRL underperforms RN-IRL, it does so at non-critical stages characterized by low risk. 

\begin{figure}[H]
\begin{subfigure}[t]{0.5\textwidth}
\includegraphics[width=1\textwidth]{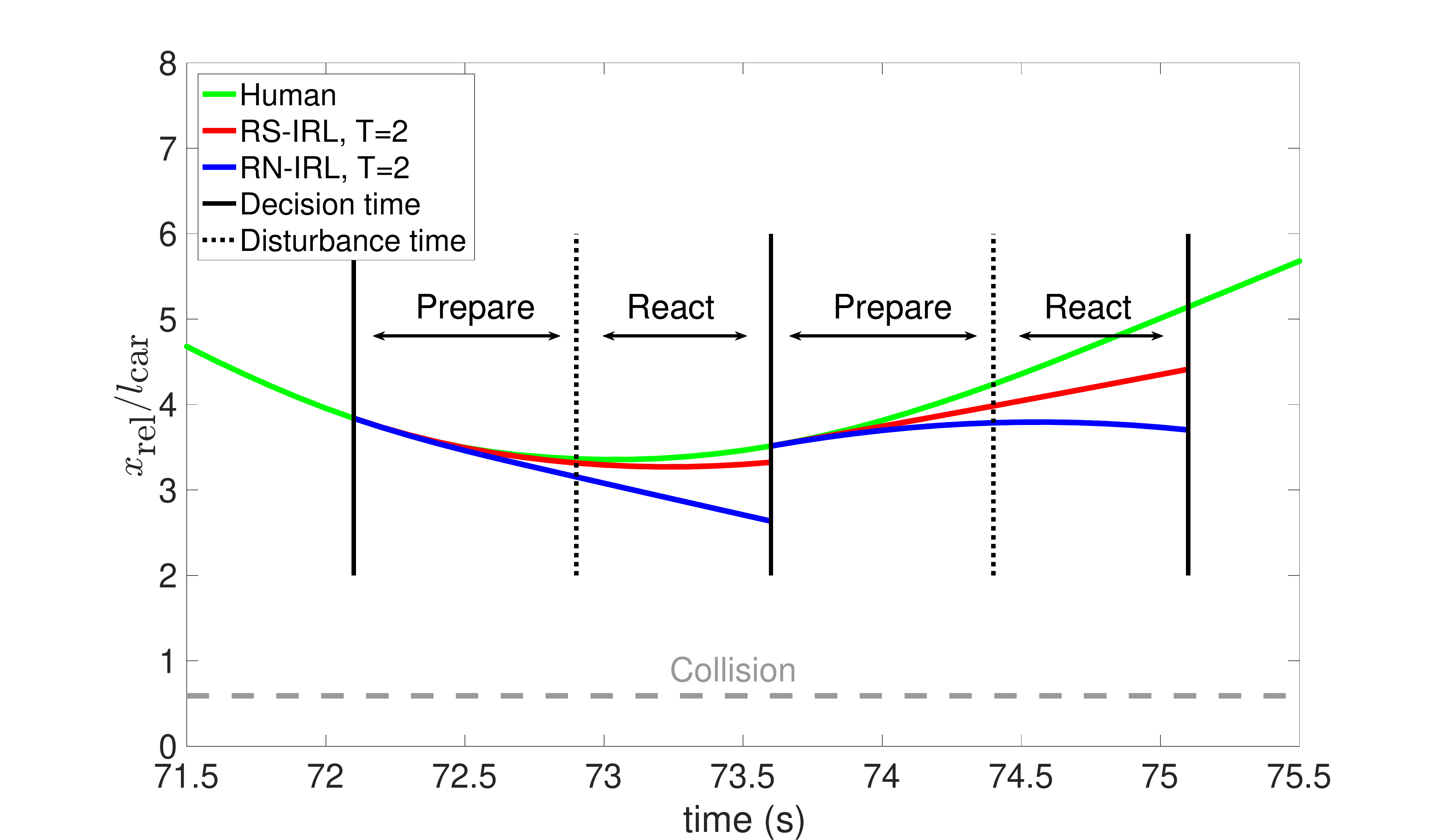}
\caption{Close-up of Sequence 3 in Figure~\ref{fig:full_x_nina} demonstrating the accuracy of RS-IRL over RN-IRL at a critical decision stage when cars are quite close. }
\label{fig:close_up_nina_1}
\end{subfigure}\ \ 
\begin{subfigure}[t]{0.5\textwidth}
\includegraphics[width=1\textwidth]{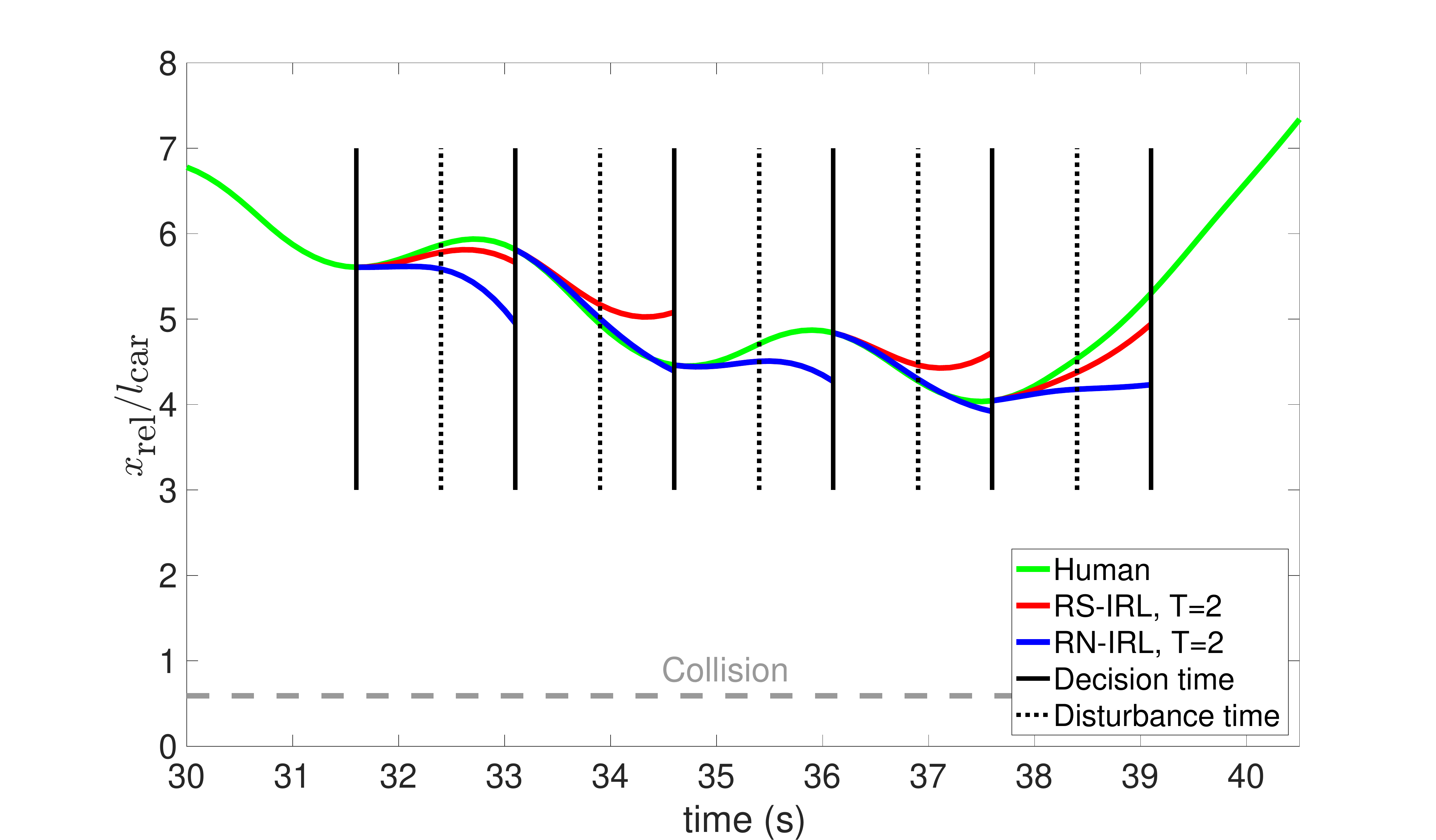}
\caption{Close-up of Sequence 2 in Figure~\ref{fig:full_x_nina} demonstrating the preemptive braking maneuvers predicted by RS-IRL. Notice that the error occurs at a non-critical decision stage when cars are far apart.}
\label{fig:close_up_nina_2}
\end{subfigure}
\caption{Comparisons of the \emph{most-probable} (under the stochastic Boltzmann policy) RS-IRL and RN-IRL trajectory predictions in $x_{\text{rel}}$ as compared with the true data. }
\label{fig:risk averse participant}
\end{figure}

\subsubsection{Case Study \# 2: Risk/Ambiguity-Averse Participant}

We next discuss a participant for whom RS-IRL again significantly outperformed RN-IRL, however, due to a different manifestation of risk. Consider the inferred risk envelopes in Figure~\ref{fig:polytope_federico}.
\begin{figure}[H]
\centering
\begin{subfigure}[t]{0.48\textwidth}
\includegraphics[width=1\textwidth]{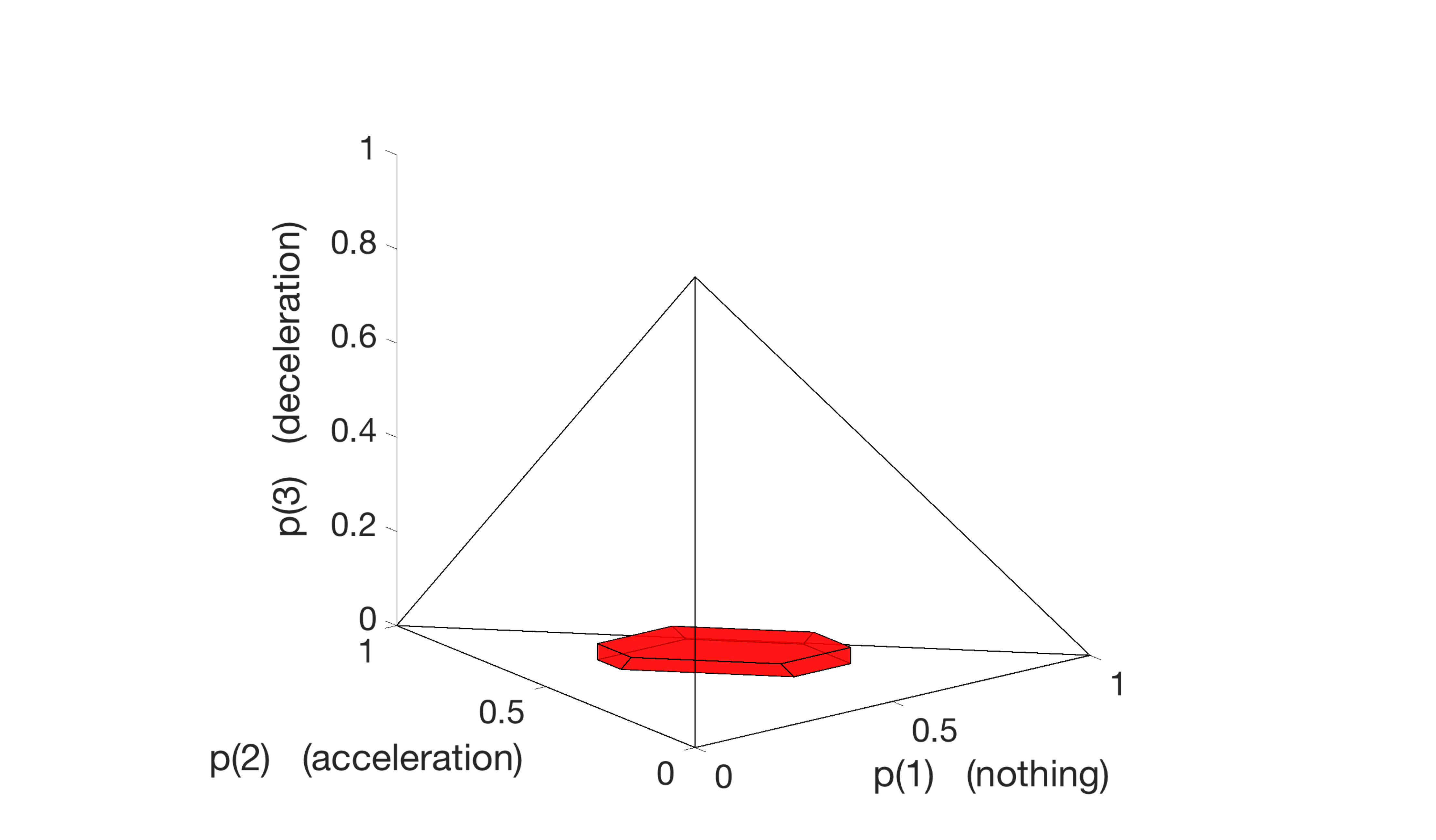}
\caption{Projection of polytope along the $\{ \text{nothing, accelerate, decelerate} \}$ ``dimensions''.}
\label{fig:polytope_1_federico}
\end{subfigure}\ \ 
\begin{subfigure}[t]{0.48\textwidth}
\includegraphics[width=1\textwidth]{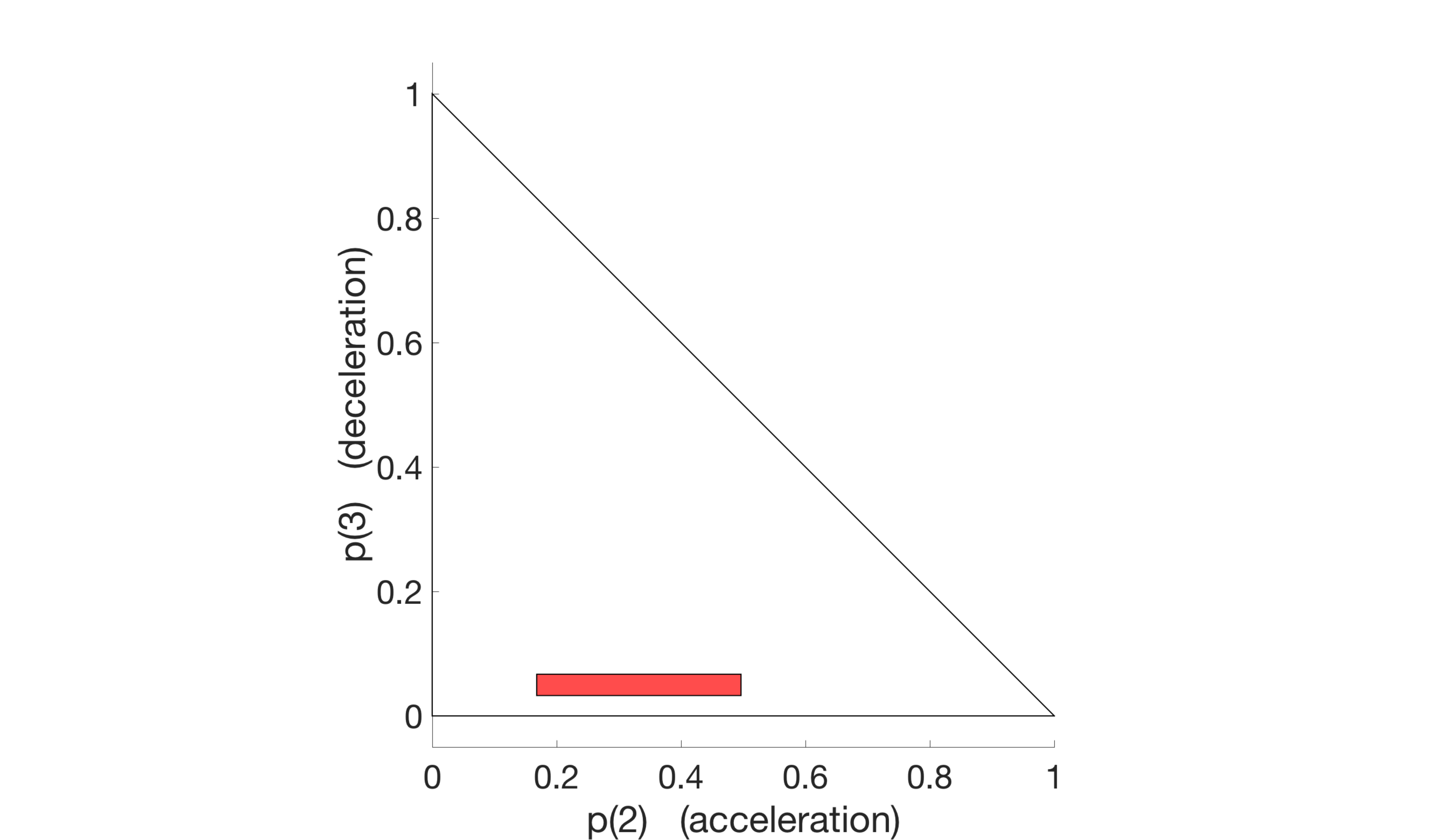}
\caption{Projection of polytope along the $\{ \text{accelerate, decelerate} \}$ ``dimensions''.}
\label{fig:polytope_2_federico}
\end{subfigure}
\caption{Inferred risk envelope for ambiguity-averse participant. Notice the drastic underweighting of probabilities associated with leader's deceleration and significant ambiguity regarding leader's acceleration. }
\label{fig:polytope_federico}
\end{figure}

Notice that the participant places very low probability on deceleration yet is ambiguous regarding the leader's acceleration (i.e., a polar opposite of the first participant). This ambiguity led to several preemptive braking maneuvers in an attempt to maintain a ``safe'' distance to the leader car. Figures~\ref{fig:absolute_x_federico}--\ref{fig:difference_vx_federico} illustrate the consistency of RS-IRL's improvement over RN-IRL in predicting these maneuvers over the entire test trajectory.

\begin{figure}[H] 
\centering
\begin{subfigure}[t]{0.48\textwidth}
\includegraphics[width=1\textwidth]{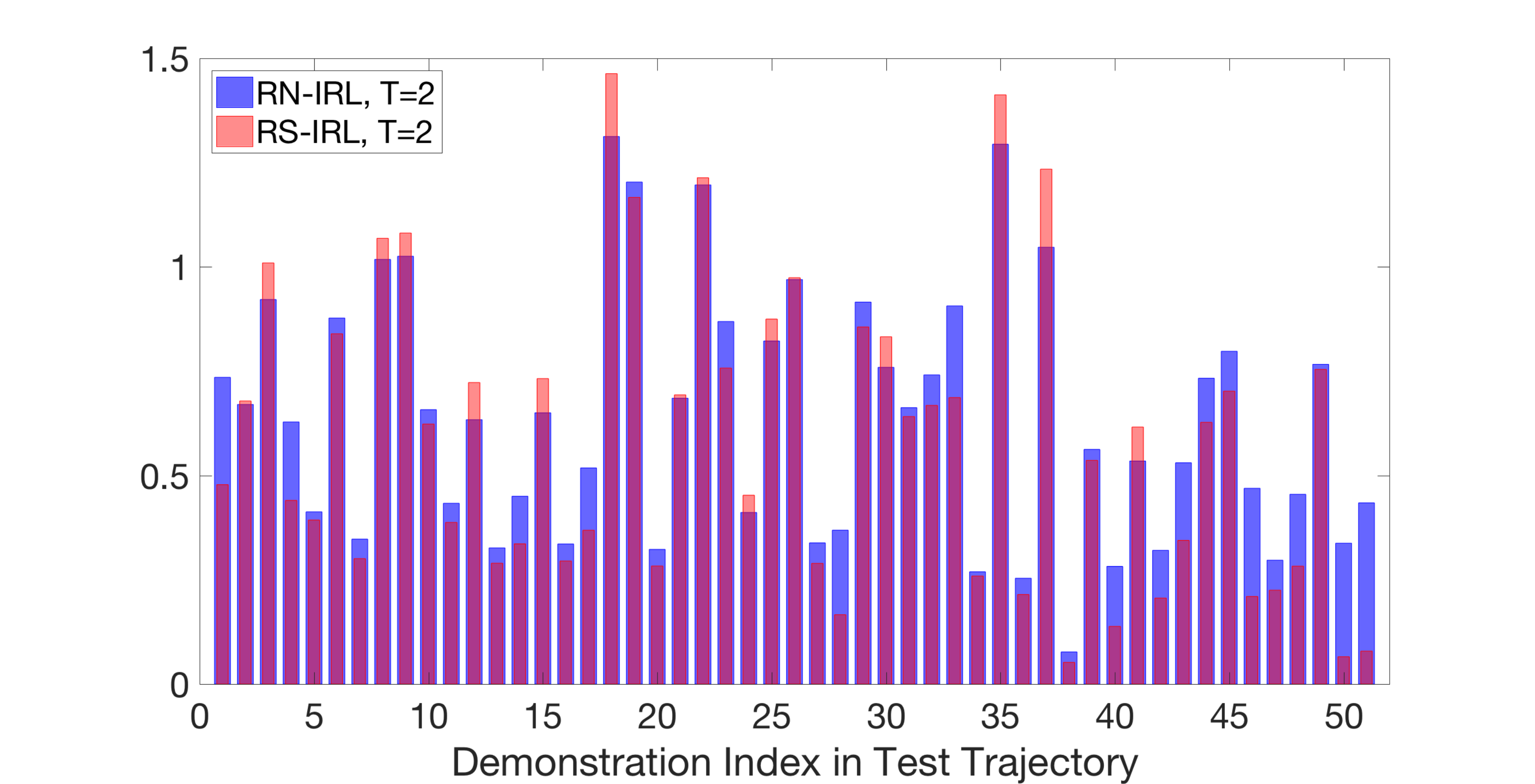}
\caption{Expected (w.r.t. stochastic policy) prediction errors $\Delta x_{\text{rel},t}$ from RS-IRL and RN-IRL for each 1.5 s trajectory segment.}
\label{fig:absolute_x_federico}
\end{subfigure}\ \ 
\begin{subfigure}[t]{0.48\textwidth}
\includegraphics[width=1\textwidth]{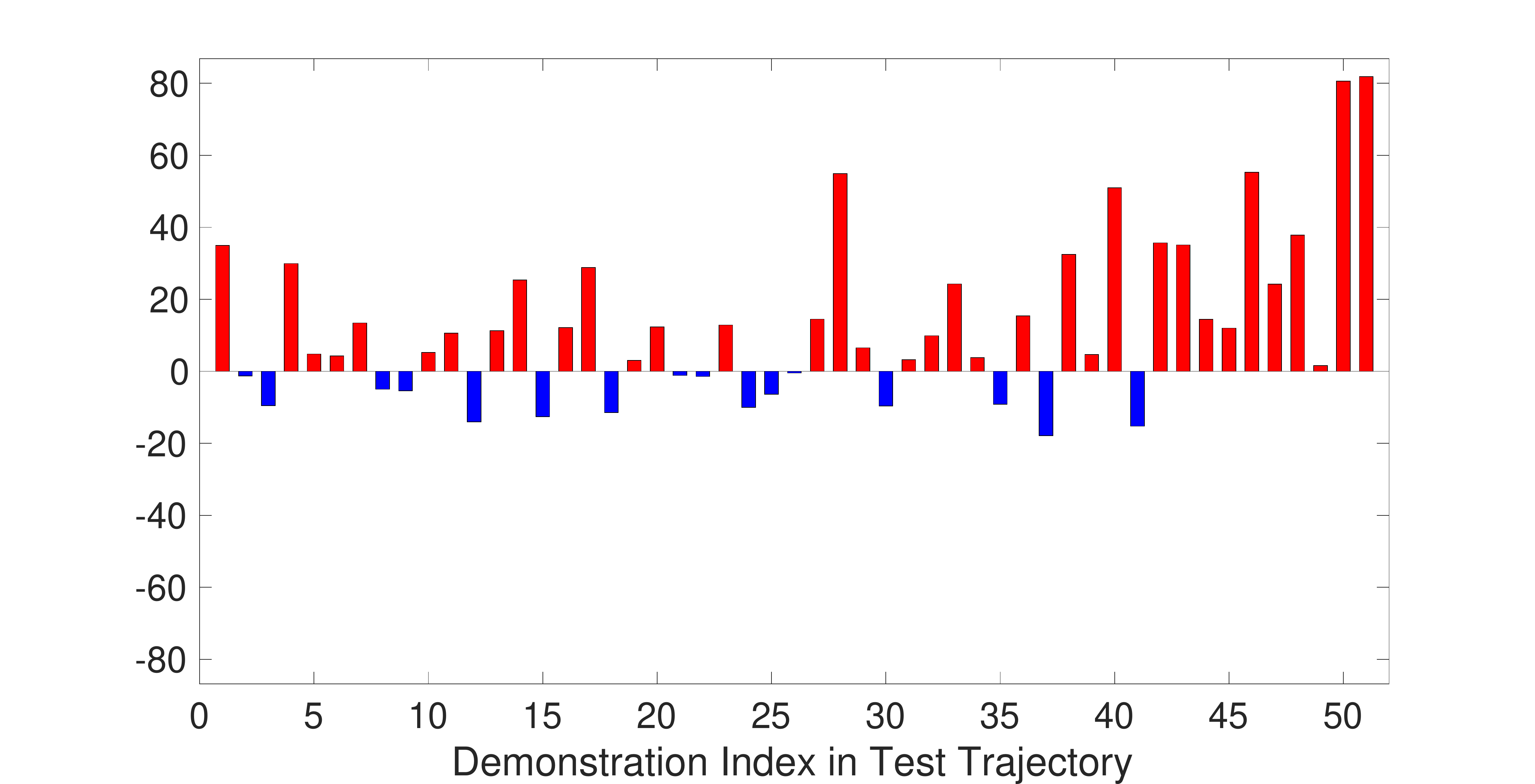}
\caption{Percentage improvement in $\Delta x_{\text{rel},t}$ for the RS-IRL model over RN-IRL for each 1.5 s trajectory segment.}
\label{fig:difference_x_federico}
\end{subfigure}
\begin{subfigure}[t]{0.48\textwidth}
\includegraphics[width=1\textwidth]{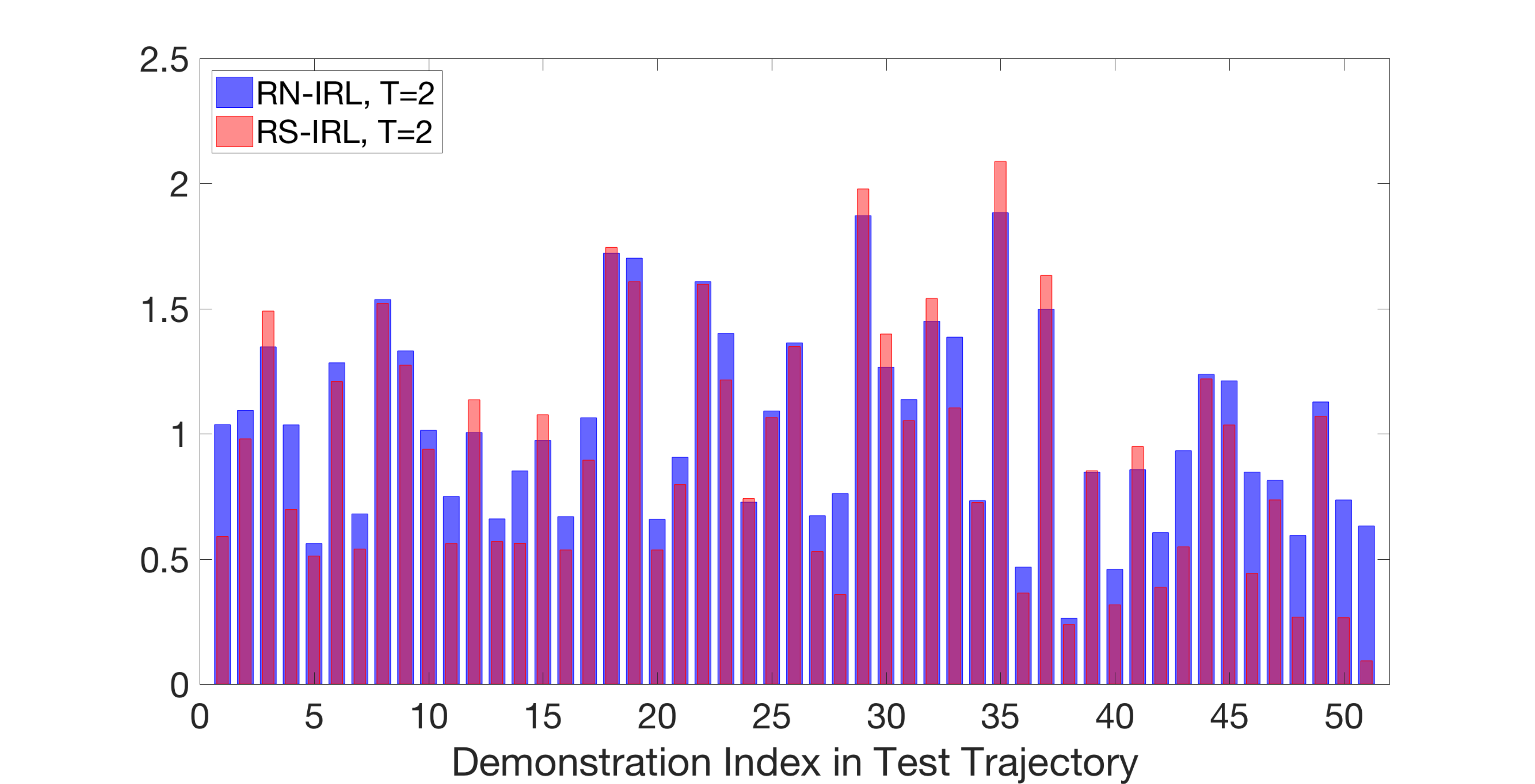}
\caption{Expected (w.r.t. stochastic policy) prediction errors $\Delta v_{x,\text{rel},t}$ from RS-IRL and RN-IRL for each 1.5 s trajectory segment.}
\label{fig:absolute_vx_federico}
\end{subfigure}\ \ 
\begin{subfigure}[t]{0.48\textwidth}
\includegraphics[width=1\textwidth]{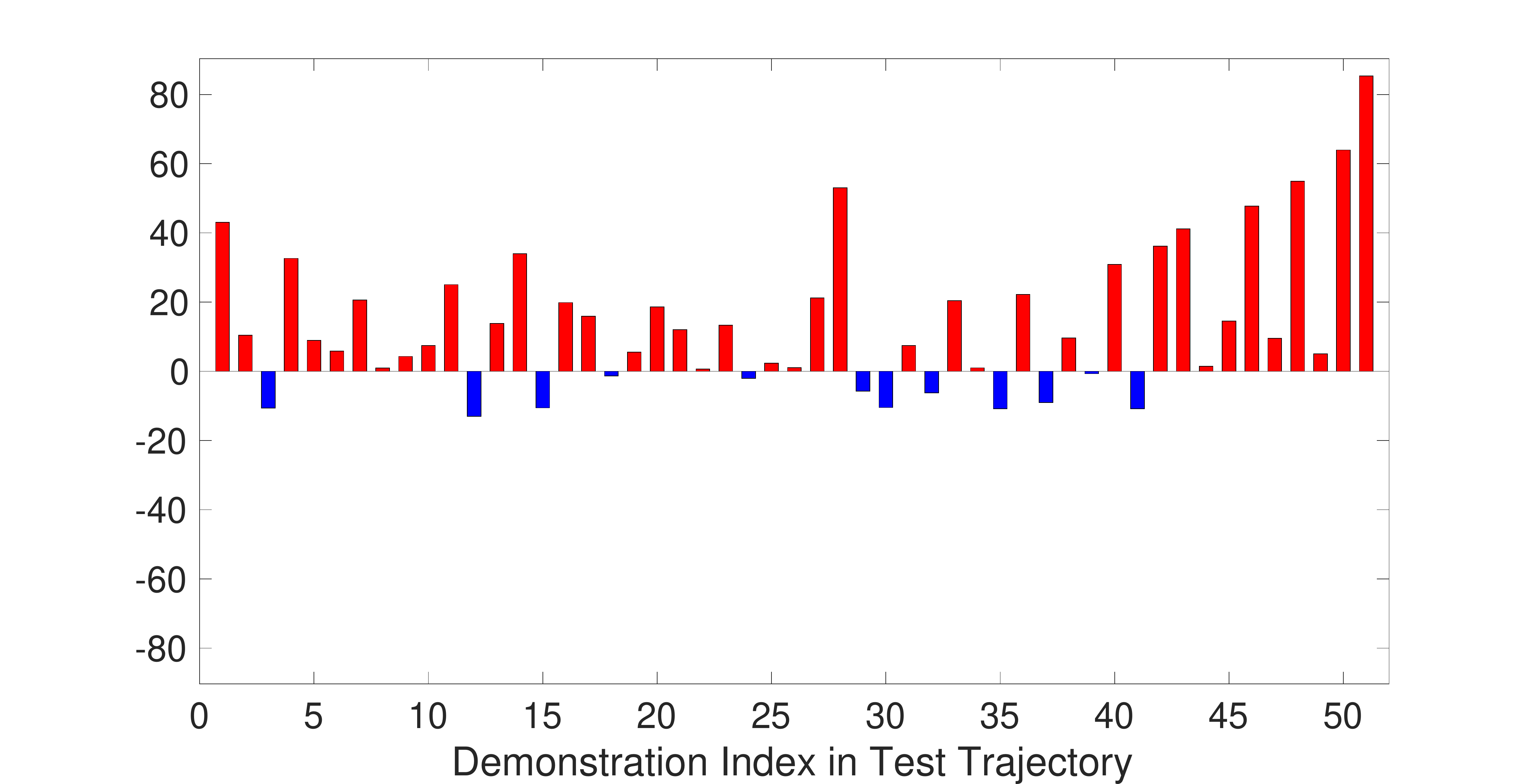}
\caption{Percentage improvement in $\Delta v_{x,\text{rel},t}$ for the RS-IRL model over RN-IRL for each 1.5 s trajectory segment.}
\label{fig:difference_vx_federico}
\end{subfigure}
\caption{Comparison of the $\Delta x_{\text{rel},t}$ and $\Delta v_{x,\text{rel},t}$ prediction errors (normalized by car length) for the RS-IRL and RN-IRL models for an ambiguity-averse participant. The RS-IRL model almost always outperforms RN-IRL, on average providing about 13--14\% improvement. }
\label{fig:risk neutral participant 5}
\end{figure}

\subsubsection{Case Study \# 3: Risk-Neutral Participant}

Finally, we consider a participant for whom both RS-IRL and RN-IRL performed on par with each other. Figure~\ref{fig:full_x_ed} plots the $x_{\text{rel}}$ trajectory while Figure~\ref{fig:polytope_ed} illustrates the inferred risk envelope. 
\begin{figure}[H]
\centering
\includegraphics[width=0.55\textwidth]{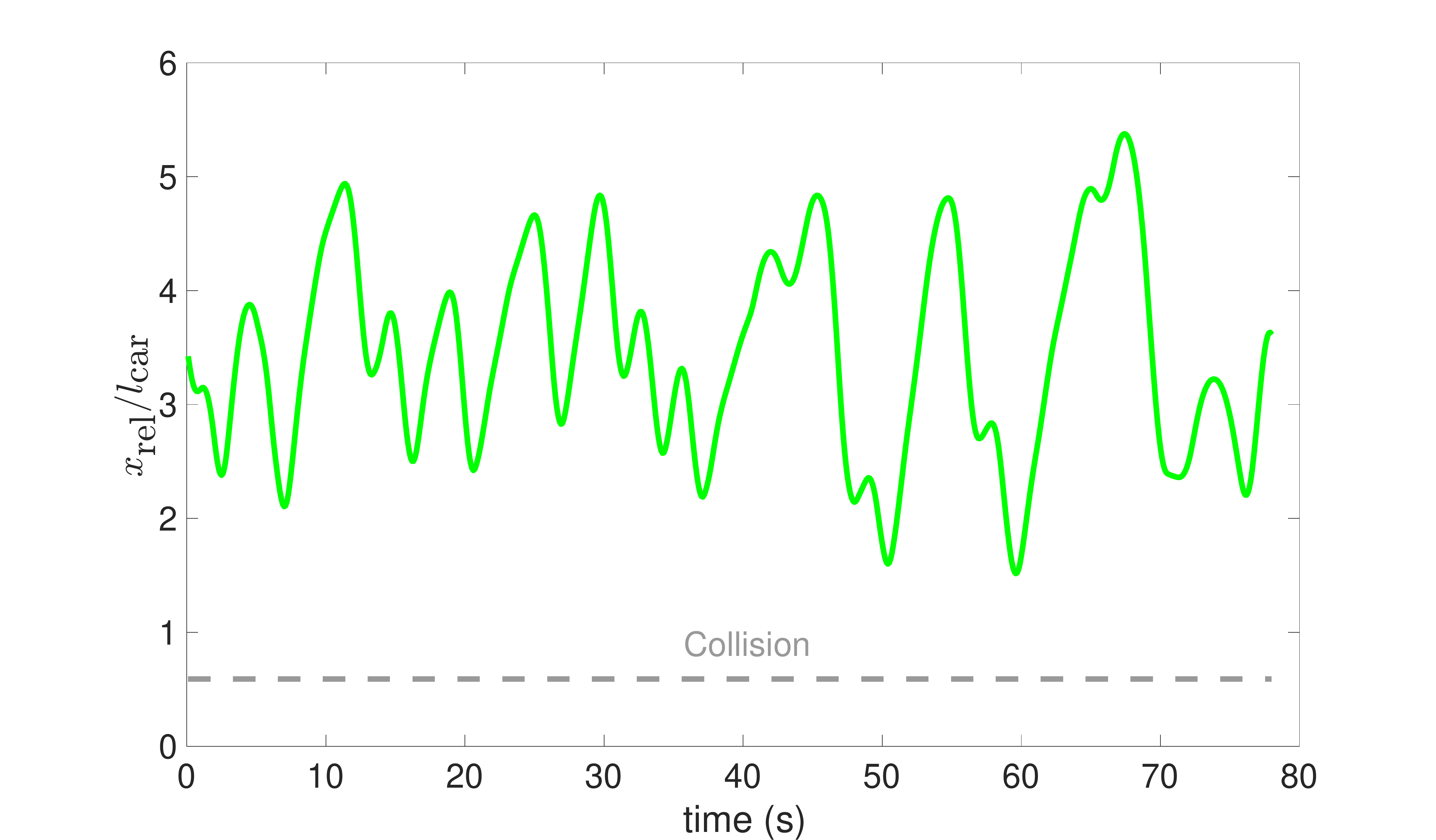}
\caption{Full $x_{\text{rel}}$ trajectory (normalized by car length) for a risk-neutral participant. On average, the relative distance is noticeably smaller, on the order of 3 car-lengths.}
\label{fig:full_x_ed}
\end{figure}

\begin{figure}[H]
\centering
\begin{subfigure}[t]{0.48\textwidth}
\includegraphics[width=1\textwidth]{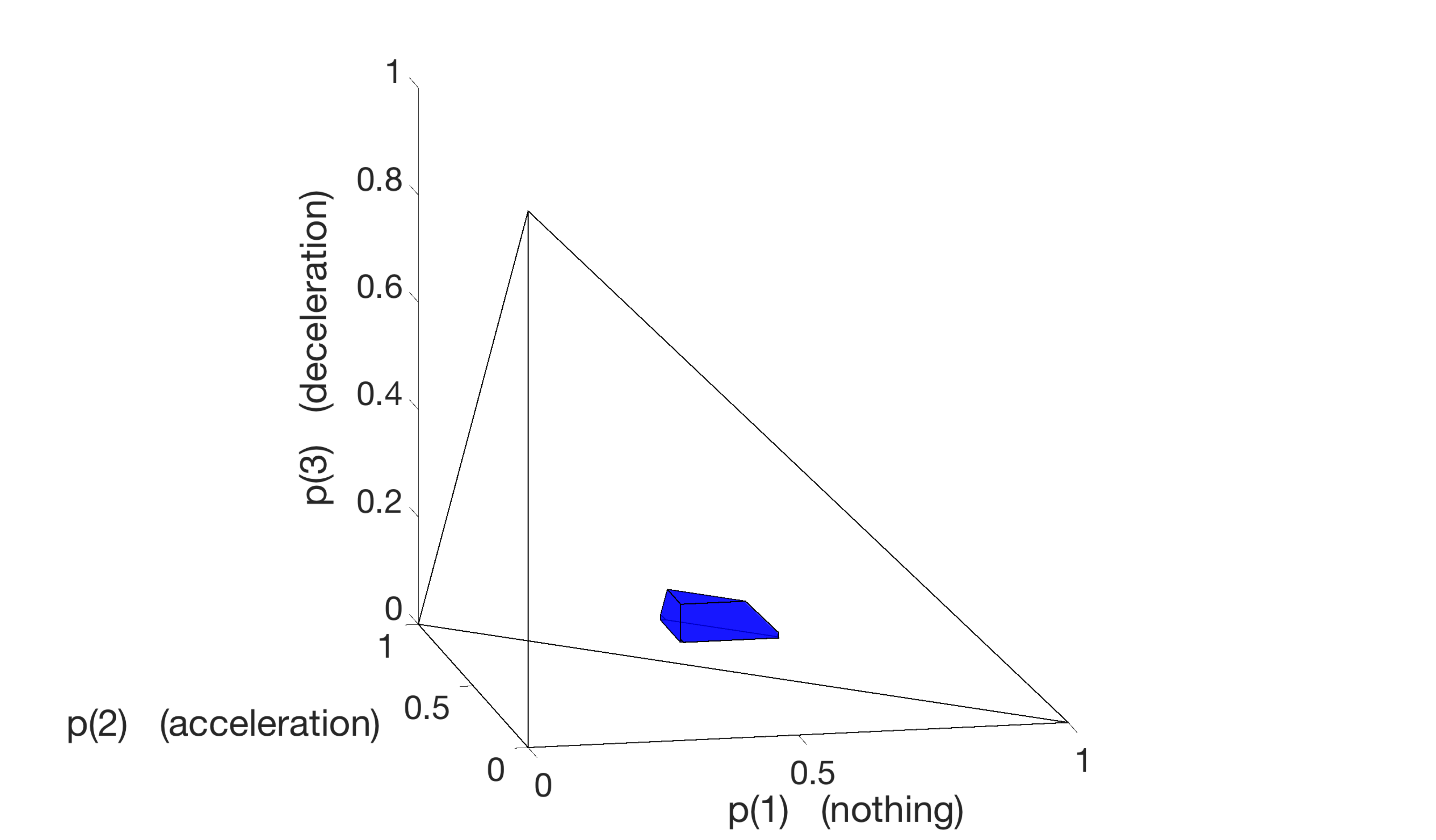}
\caption{Projection of polytope along the $\{ \text{nothing, accelerate, decelerate} \}$ ``dimensions''.}
\label{fig:polytope_1_ed}
\end{subfigure}\ \ 
\begin{subfigure}[t]{0.48\textwidth}
\includegraphics[width=1\textwidth]{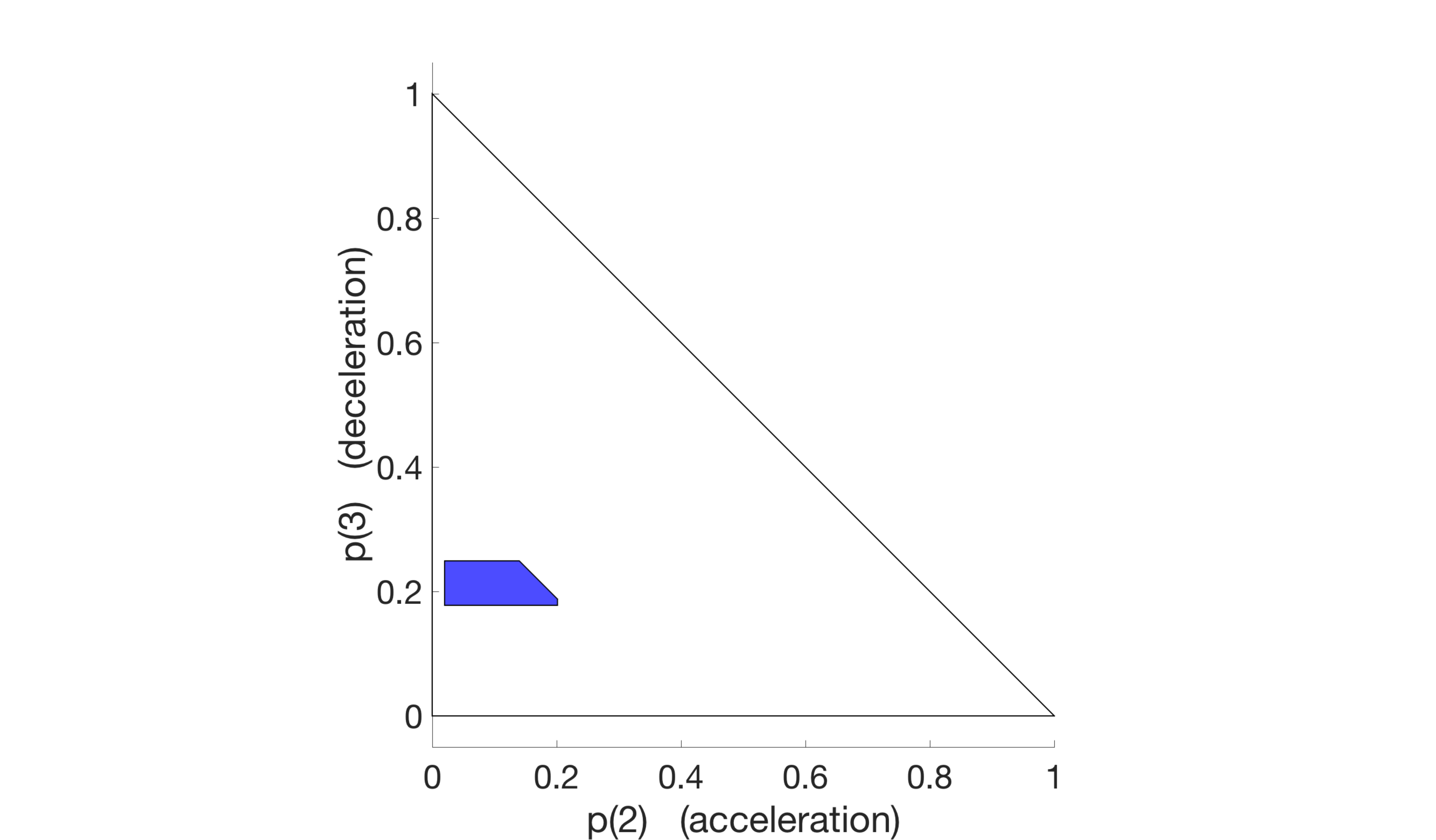}
\caption{Projection of polytope along the $\{ \text{accelerate, decelerate} \}$ ``dimensions''.}
\label{fig:polytope_2_ed}
\end{subfigure}
\caption{Inferred risk envelope for risk-neutral participant. Notice how there is no appreciable level of ambiguity nor is the polytope biased along any dimension; hence suggesting the risk-neutral categorization.}
\label{fig:polytope_ed}
\end{figure}

The inferred risk envelope features no bias along any dimension nor any appreciable level of ambiguity; thereby suggesting a risk-neutral profile for this participant.  Furthermore, notice in Figure~\ref{fig:full_x_ed} that the participant stays quite a bit closer to the leader car than the first participant, a clear indicator of his/her risk-neutral stance. Figures~\ref{fig:ed_comp_x} and~\ref{fig:ed_comp_vx} confirm the two models performing on par with each other. This, however, is to be expected for a strongly risk-neutral participant since the RS-IRL model subsumes the RN-IRL model.

\begin{figure}[H] 
\centering
\begin{subfigure}[t]{0.48\textwidth}
\includegraphics[width=1\textwidth]{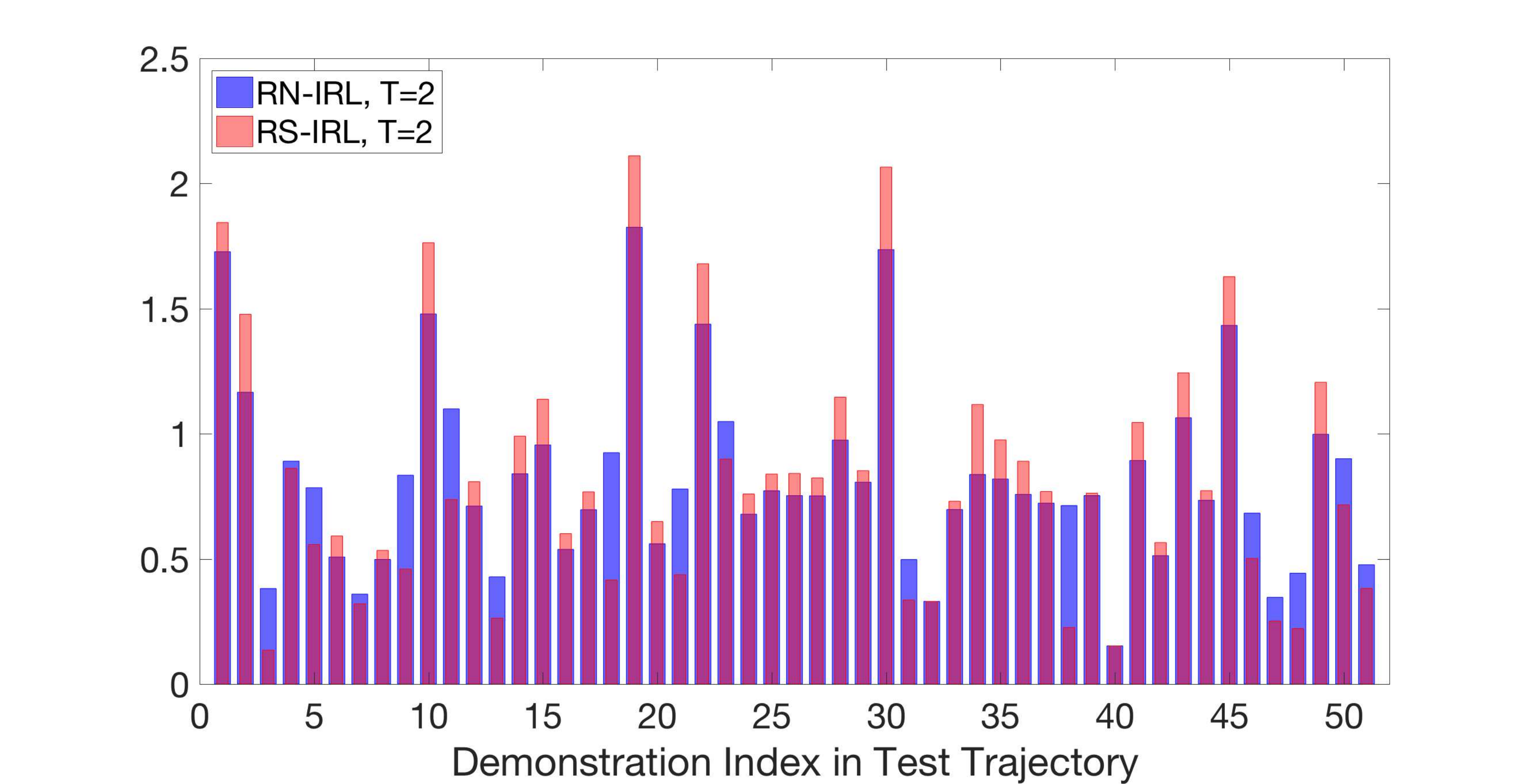}
\caption{Expected (w.r.t. stochastic policy) prediction errors $\Delta x_{\text{rel},t}$ from RS-IRL and RN-IRL for each 1.5 s trajectory segment.}
\label{fig:absolute_x_ed}
\end{subfigure}\ \ 
\begin{subfigure}[t]{0.48\textwidth}
\includegraphics[width=1\textwidth]{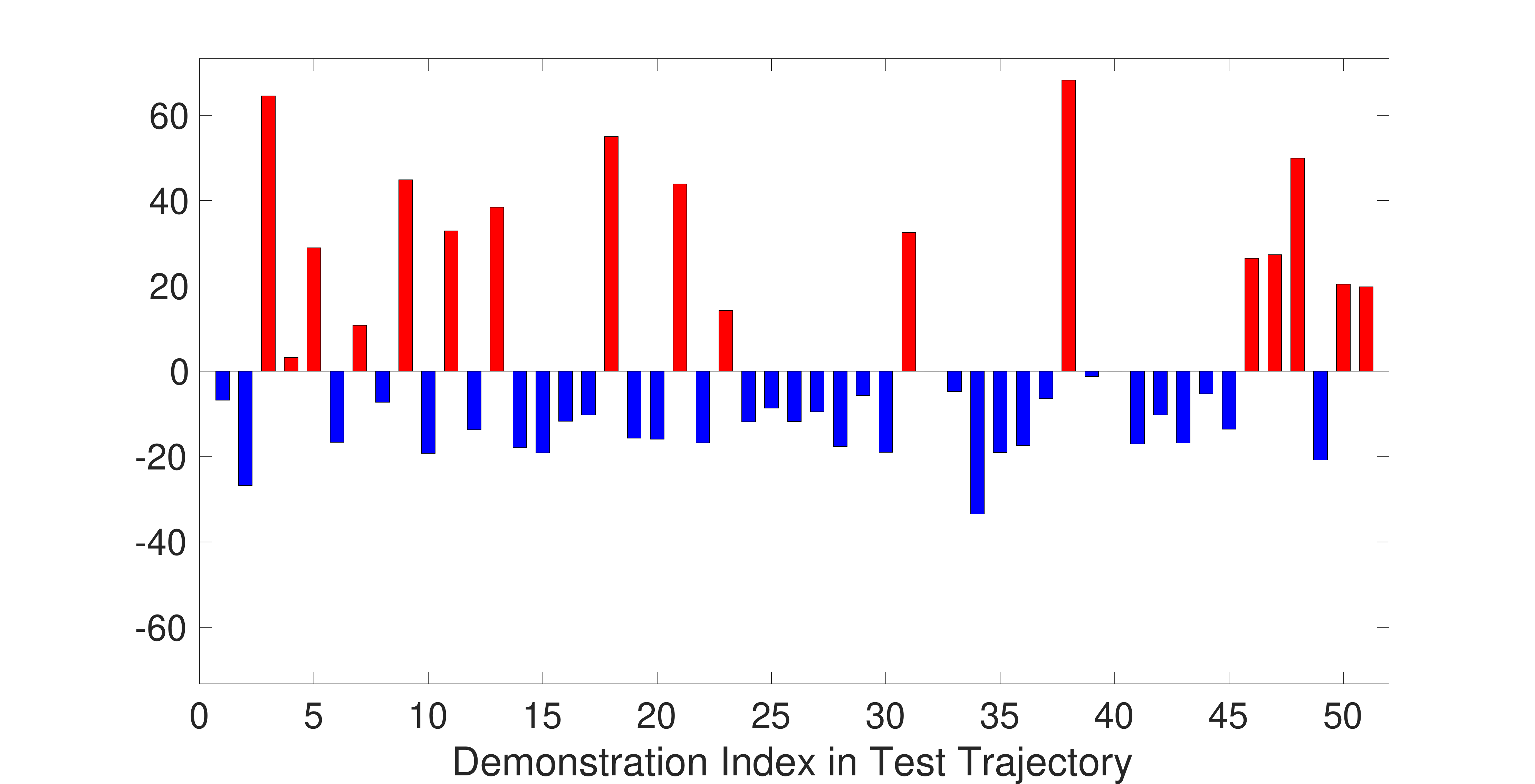}
\caption{Percentage improvement in $\Delta x_{\text{rel},t}$ for the RS-IRL model over RN-IRL for each 1.5 s trajectory segment.}
\label{fig:difference_x_ed}
\end{subfigure}
\caption{Comparison of the $\Delta x_{\text{rel},t}$ prediction errors (normalized by car length) for the RS-IRL and RN-IRL models for a risk-neutral participant. The two models perform on par with each other.}
\label{fig:ed_comp_x}
\end{figure}

\begin{figure}
\begin{subfigure}[t]{0.48\textwidth}
\includegraphics[width=1\textwidth]{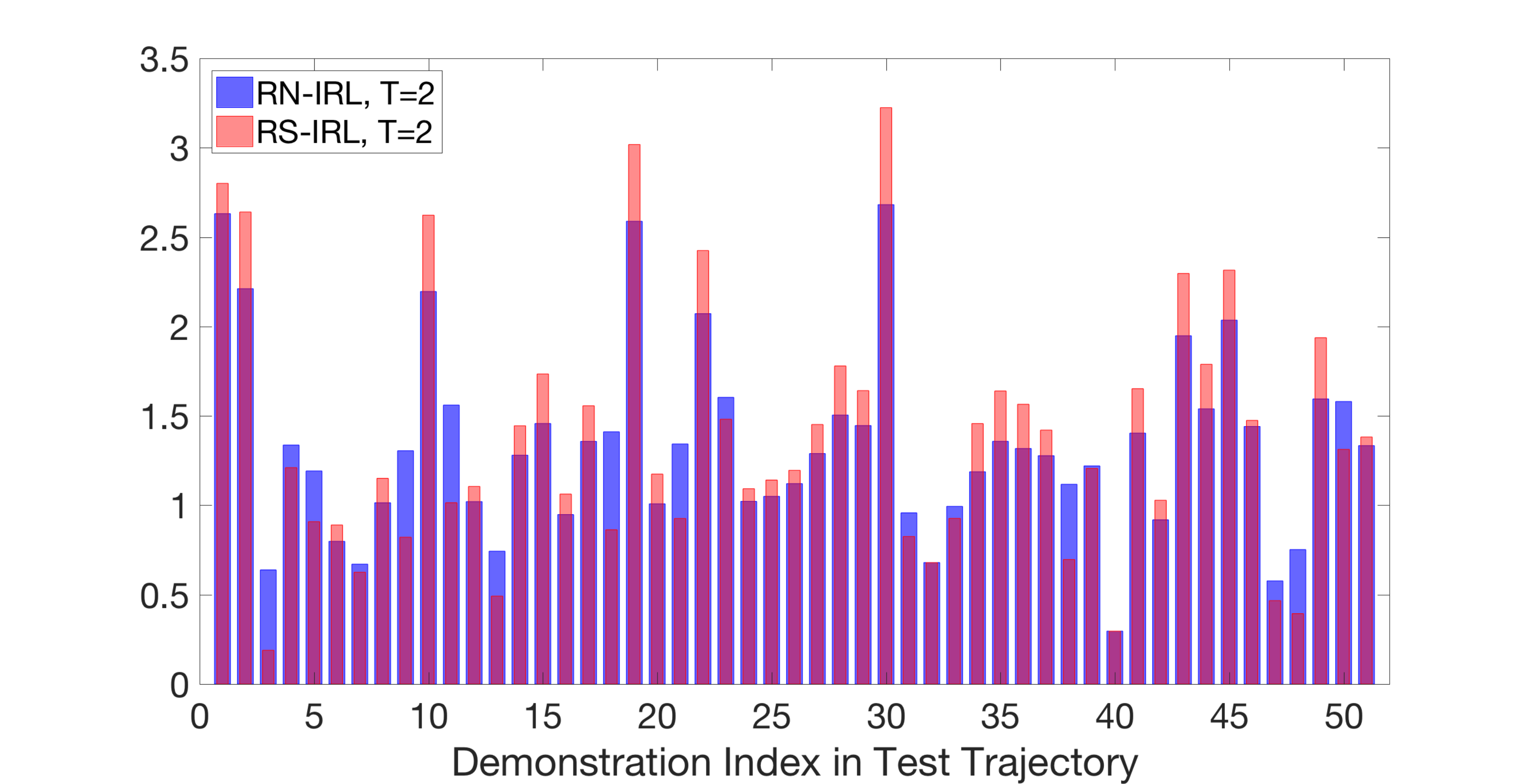}
\caption{Expected (w.r.t. stochastic policy) prediction errors $\Delta v_{x,\text{rel},t}$ from RS-IRL and RN-IRL for each 1.5 s trajectory segment.}
\label{fig:absolute_vx_ed}
\end{subfigure}\ \ 
\begin{subfigure}[t]{0.48\textwidth}
\includegraphics[width=1\textwidth]{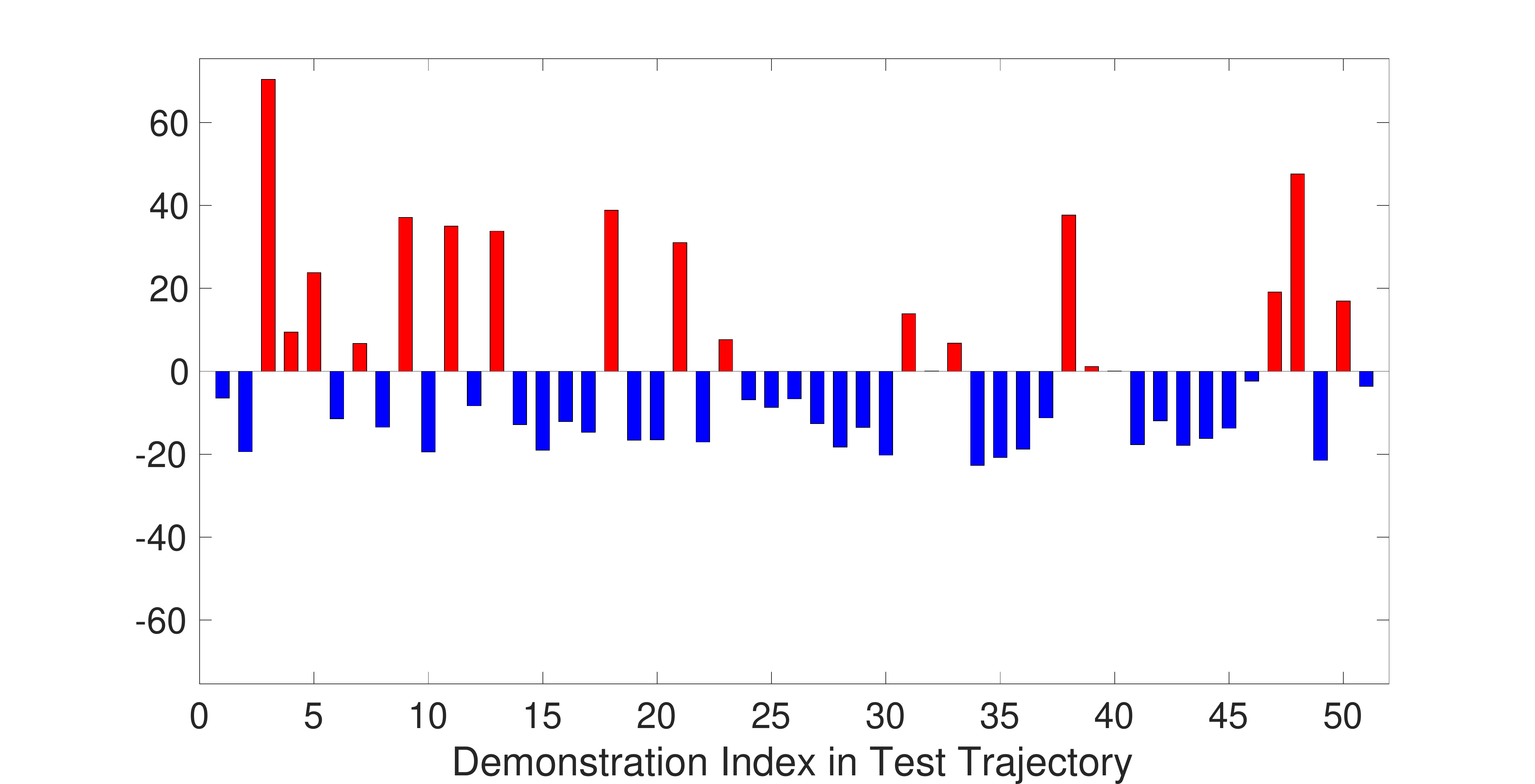}
\caption{Percentage improvement in $\Delta v_{x,\text{rel},t}$ for the RS-IRL model over RN-IRL for each 1.5 s trajectory segment.}
\label{fig:difference_vx_ed}
\end{subfigure}
\caption{Comparison of the $\Delta v_{x,\text{rel},t}$ prediction errors (normalized by car length) for the RS-IRL and RN-IRL models for a risk-neutral participant. The two models perform on par with each other.}
\label{fig:ed_comp_vx}
\end{figure}

\subsubsection{Limitations of Cost Shaping}

In this section we argue that \emph{both} the cost weights $c$ and the risk measure $\rho(\cdot)$ are \emph{necessary} to reasonably approximate diverse risk-sensitive behaviors. \revision{In Table~\ref{tab:rs_rn_comp}, we provide the learned feature weights from RS-IRL and RN-IRL for features $\{\phi_1,\phi_2,\phi_3,\phi_4\}$ for the participants in case studies \#1 and \#2.} These are the four features that dominate the along-track behavior of the participants.

\begin{table}[H]
\centering
 \begin{tabular}{  c | c | c | c | c | }
          & \multicolumn{2}{c}{Case Study \#1} &  \multicolumn{2}{c}{Case Study \#2}\\ \hline
     Feature weight & RS-IRL & RN-IRL & RS-IRL & RN-IRL \\ \hline
    $c(1)$   &   0.0918    &   0.0466   & 0.0748  &  0.1884  \\ \hline
    $c(2)$   &    0.5174   &   0.5589   & 0.6354  & 0.2313  \\ \hline
    $c(3)$   &  0.0993     & 0.1052     & 0.1864  &  0.3170  \\ \hline
    $c(4)$   &  0.2352   &   0.2330   &  0.0562  &  0.0624  \\ \hline
  \end{tabular}
 \caption{Comparison of the inferred cost weights from RS-IRL and RN-IRL for the participants in case studies \#1 and 2. }
 \label{tab:rs_rn_comp}
\end{table}

Notice that in spite of the extremely similar cost weights for the participant in case study \#1, the improvement in performance using RS-IRL is substantial. For the participant in case study \#2, a difference in the cost weights \emph{and} the use of a risk measure were needed to obtain the observed performance boost with RS-IRL. This clearly establishes the benefits of risk sensitive inference, particularly highlighting the deficiency of only using cost shaping (in general, risk-neutral algorithms) due to its inability to cope with more nuanced manifestations of uncertainty in decision-making. 

\subsubsection{Summary}

Table \ref{tab:position velocity errors N2} presents the average (over the 51 1.5 s segments in the test trajectory) percentage improvement (RN-IRL to RS-IRL) in the prediction errors, i.e., $\Delta x_{\text{rel}}$, $\Delta y_{\text{rel}}$, $\Delta v_{x,\text{rel}}$ and $\Delta v_{y,\text{rel}}$. As expected, the RS-IRL predictions are almost always better than those \revision{provided} by RN-IRL, with as much as $22.0\%$ improvement in $x_{\text{rel}}$ and $23.1\%$ improvement in $v_{x,\text{rel}}$. Regarding errors in $y_{\text{rel}}$ and $v_{y,\text{rel}}$, RS-IRL and RN-IRL perform comparably (absolute errors were in the range 0.19 -- 0.42 m) since the primary source of risk-aversion stems from the leader's  acceleration and deceleration rather than its lateral motion. In the cases with noticeable improvement, either in position or velocity (i.e., participants \#2, 5, 6, 9), the better predictions were a result of the RS-IRL model more accurately representing participants with higher levels of risk- and/or ambiguity-aversion. Indeed the first two detailed case studies presented earlier correspond to participants \#2 and \#5. In contrast, for more risk-neutral participants (e.g., the last presented case study corresponding to participant \#4), the performance improvements were either less pronounced or the two models performed comparably. 

\begin{table}[H]
\centering
 \begin{tabular}{ | c | c | c | c | c | c | c | c | c | c | c |}
    \hline 
    Participant \#  & 1 & \color{red}{2}  & 3 & 4 & \color{red}{5} & \color{red}{6} & 7 & 8 & \color{red}{9} & 10 \\ \hline
    $\Delta x_{\text{rel}}$ (T=2) & 7.5  & \color{red}{22.0} & 3.3 & 2.6 & \color{red}{13.3} & \color{red}{14.9} & 3.9 & 9.7 & \color{red}{18.6} & 10.1 \\ \hline
    $\Delta y_{\text{rel}}$ (T=2) & 10.7 & \color{red}{22.2} & -2.7 & 12.0 & \color{red}{4.8} & \color{red}{9.5} & 8.4 & 7.3 &  \color{red}{16.3}  & 21.4 \\ \hline
    \hline
    \hline
    $\Delta v_{x,\text{rel}}$ (T=2) & 12.7 & \color{red}{23.1} & 5.0 & -0.3 & \color{red}{14.3} & \color{red}{13.0} & 8.1 & 10.5 &  \color{red}{17.1} & 10.4 \\ \hline
    $\Delta v_{y,\text{rel}}$ (T=2) & 8.7 & \color{red}{22.9}& -3.0 & 13.7 & \color{red}{0.2} & \color{red}{6.0} & 9.1 & 3.0 &  \color{red}{14.9} & 22.5 \\ \hline
  \end{tabular}
 \caption{Average percentage improvement (over 51 segments in the test trajectory) in prediction errors for RS-IRL over RN-IRL. The RS-IRL predictions with $T=2$ for $x_{\text{rel}}$ and $v_{x,\text{rel}}$ are more accurate than those for RN-IRL for all but one participant, with as much as $23.1\%$ average improvement. The most pronounced improvements (highlighted in red) corresponded to participants with higher levels of risk- and/or ambiguity-aversion, some of whom are studied in detail in the case studies. The improvements in the lateral direction are less significant since the primary source of risk and ambiguity was in the longitudinal dynamics. }
 \label{tab:position velocity errors N2}
\end{table}

\section{Discussion and Conclusions}
\label{sec:conclusion}

We have presented an approach for IRL that explicitly accounts for risk \emph{and} ambiguity sensitivity in experts. We proposed a flexible modeling framework based on coherent risk measures that allows us to capture an entire spectrum of risk assessments from risk-neutral to worst-case for a rich class of static and dynamic decision-making settings. We developed efficient LP based non-parametric algorithms for static, and likelihood based semi-parametric algorithms for dynamic decision making settings. Notably, we significantly improved the modeling framework in~\citep{MajumdarSinghEtAl2017} for dynamic decision making, \revision{and verified the performance improvements from RS-IRL despite transitioning} from the exact LP iteration to semi-parametric likelihood based algorithms. The proposed inference framework was rigorously evaluated on a realistic simulated driving game with ten participants and shown to be able to infer and mimic qualitatively different driving styles ranging from risk-neutral to highly risk-averse in a data-efficient manner, while more accurately capturing participant behavior than \revision{with} a risk-neutral model. Most importantly, by performing inference using the dual representation of coherent risk measures, we retain the generality to be able to recover any risk measure within such a class of risk measures, \emph{without} assuming any a priori knowledge (e.g., fixed disutility function and/or risk measure). 

\revision{Throughout this work, we assumed a \emph{discrete} model of uncertainty for both the static and dynamic decision-making settings. While one would like to be able to address large or continuous sets of disturbances, we believe that a hierarchical representation of uncertainty is a more tractable approach. For instance, at the higher-level, one reasons about various uncertain \emph{modes} of operation (e.g., the random erratic car maneuvers). At the lower-level, conditional on a given mode (e.g., deceleration), one may consider continuous models of uncertainty (e.g., the set of all robot deceleration profiles). The overall framework thus constitutes a mixture model. At the continuous lower-level of uncertainty (e.g., due to the natural variance in demonstrations), aspects such as risk-sensitivity are less relevant. Thus, this work studies risk-sensitivity at the \emph{higher-level} hierarchy of decision making where discrete/modal models of uncertainty induce more nuanced  behavior.}

\revision{This paper opens several directions for future research.} First, as in the majority of IRL literature, we hand-picked features for the driving game. While performance on the test trajectory given $\sim$ 1 minute of training data supported \revision{our} choice of features, incorporating risk-sensitivity in large-scale IRL algorithms requires automatic feature extraction. There has been some recent work on using deep neural nets \revision{within} the MaxEnt IRL framework~\citep{WulfmeierOndruskaEtAl2015} as well as using general non-linear cost representations~\citep{FinnLevineEtAl2016}. A promising area of future research then is to embed the semi-parametric approach \revision{proposed in this paper} within deep cost networks to yield RS-IRL algorithms for high-dimensional systems. 

\revision{Second, our}  inference framework assumes that the human expert is subject to an independent (non-interactive) source of disturbance. The natural extension therefore is to modularize the entire risk-sensitive IRL algorithm within a game-theoretic \emph{interactive} setting involving multiple human agents and robotic systems. The key challenge here is to efficiently balance offline and online learning (see e.g., \citep{SadighSastryEtAl2016b, WaughZiebartEtAl2011}), to enable the autonomous robot to actively infer intent and risk-sensitive preferences for the human agents, and use the resulting information to consequently \emph{influence} the human agents.

\revision{Finally,} as a direct extension of the driving game and the game-theoretic adaptation of this work, we plan on testing our algorithms on an autonomous car testbed.


We believe that the approach described here along with the indicated future directions represent an important step towards endowing \revision{future} robotic systems with the ability to predict, infer, and mimic risk-sensitive behavior, which is crucial for safety-critical applications where humans and robots interact.

\footnotesize{
\section*{Acknowledgments}
The authors were partially supported by the Office of Naval Research, Science of Autonomy Program, under Contract N00014-15-1-2673, and by the Toyota Research Institute (``TRI"). This article solely reflects the opinions and conclusions of its authors and not ONR, TRI or any other Toyota entity.

The authors would also like to acknowledge the contributions of Ajay Mandlekar towards the conference version of this paper presented at RSS 2017.
}


\normalsize

\bibliographystyle{SageH}
\bibliography{../../../bib/main,../../../bib/ASL_papers}

\begin{appendices}
\section{Theoretical Guarantees}\label{app:theoretical}
\subsection{Proof of Theorem \ref{thm: consistency algorithm}}

In order to prove the algorithm's consistency, we need to establish \revision{a few} intermediate results. Since $\Pp_{\infty}$ is an intersection of convex (respectively, compact) sets, it is also convex (respectively, compact). Moreover, since each $\mathcal{P}_D$ contains the expert's polytope $\mathcal{P}$, then $\Pp_{\infty}$ is also an outer approximation of $\mathcal{P}$. In particular, $\Pp_{\infty}$ is not empty. 

Denote with $d_H$ the Hausdorff distance between subsets of $\mathbb{R}^L$ associated with the Euclidean norm $||.||$, i.e., for two subsets $A$ and $B$ of $\mathbb{R}^L$, 
\begin{equation} \label{eq: hausdorff distance} 
d_H(A,B) := \max\left\{ \sup_{b \in B} \inf_{a \in A} \|a-b\|, \, \, \sup_{a \in A} \inf_{b \in B} \|a-b\| \right\}. \end{equation} 
The Hausdorff distance defines a metric on the set of non-empty compact subsets of $\mathbb{R}^L$, that we use to measure the distance between risk envelopes. The sequence $\mathcal{P}_{d}$ is non-increasing (for set inclusion). Therefore, $\left\{d_H\left( \mathcal{P}_{d}, \mathcal{P}_{\infty}\right)\right\}_{d \ge 1}$ is a non-increasing sequence of non-negative real numbers. In particular, we have the following result:
\begin{lemma}[Convergence in Hausdorff metric]
The sequence  $\left\{d_H\left( \mathcal{P}_{d}, \mathcal{P}_{\infty}\right)\right\}$ goes to $0$ when $d \to \infty$.
\end{lemma}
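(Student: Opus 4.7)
The plan is as follows. Since $\mathcal{P}_\infty \subseteq \mathcal{P}_d$ for every $d \geq 1$, one of the two suprema appearing in the definition \eqref{eq: hausdorff distance} of the Hausdorff distance vanishes identically: namely, $\sup_{a \in \mathcal{P}_\infty}\inf_{b \in \mathcal{P}_d}\|a-b\| = 0$. Hence it suffices to show that
\[
    \eta_d := \sup_{v \in \mathcal{P}_d}\; \inf_{w \in \mathcal{P}_\infty} \|v - w\| \;\longrightarrow\; 0 \quad \text{as } d \to \infty.
\]
The sequence $\{\eta_d\}$ is non-increasing by monotonicity of $\mathcal{P}_d$, so it converges to some limit $\eta_\infty \geq 0$, and the goal reduces to showing $\eta_\infty = 0$.

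I would argue by contradiction: suppose $\eta_\infty > 0$. Since each $\mathcal{P}_d$ is compact, the supremum in the definition of $\eta_d$ is attained, so there exists a sequence $\{v_d\}$ with $v_d \in \mathcal{P}_d$ and $\inf_{w \in \mathcal{P}_\infty}\|v_d - w\| \geq \eta_\infty/2 > 0$ for all $d$ sufficiently large. The key step is then to extract a limit point: since $v_d \in \mathcal{P}_d \subseteq \mathcal{P}_1$ for all $d$, and $\mathcal{P}_1$ is compact, there is a subsequence $v_{d_k} \to v^\star$ for some $v^\star \in \mathcal{P}_1$.

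Next, I would show that $v^\star \in \mathcal{P}_\infty$. For any fixed integer $d_0 \geq 1$, the tail of the subsequence satisfies $v_{d_k} \in \mathcal{P}_{d_k} \subseteq \mathcal{P}_{d_0}$ for all $k$ large enough (by the nested property). Since $\mathcal{P}_{d_0}$ is closed, passing to the limit yields $v^\star \in \mathcal{P}_{d_0}$. As $d_0$ was arbitrary, $v^\star \in \bigcap_{d_0 \geq 1}\mathcal{P}_{d_0} = \mathcal{P}_\infty$. This contradicts $\inf_{w \in \mathcal{P}_\infty}\|v_{d_k} - w\| \geq \eta_\infty/2$, since we can take $w = v^\star$ and obtain $\|v_{d_k} - v^\star\| \to 0$. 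Therefore $\eta_\infty = 0$, which establishes the claim.

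I do not expect any serious obstacle in this proof: it is essentially a standard compactness argument for a decreasing sequence of compact sets, exploiting only the facts already noted in the excerpt (that each $\mathcal{P}_d$ is convex, compact, and contains $\mathcal{P}_\infty$). The only subtlety worth highlighting is the necessity of the compactness of $\mathcal{P}_1$ (guaranteed since $\mathcal{P}_d \subseteq \Delta^L$) in order to extract the convergent subsequence; without this, the argument would not go through.
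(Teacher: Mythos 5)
Your proof is correct and complete. Note that the paper itself does not actually prove this lemma: it observes that $\{\mathcal{P}_d\}$ is nested so that $\{d_H(\mathcal{P}_d,\mathcal{P}_\infty)\}$ is a non-increasing sequence of non-negative reals, and then asserts the result. That observation alone only guarantees the sequence converges to \emph{some} limit $\eta_\infty \geq 0$; the substantive content is showing $\eta_\infty = 0$, which is exactly what your compactness argument supplies. Your reduction to the one-sided distance $\sup_{v\in\mathcal{P}_d}\inf_{w\in\mathcal{P}_\infty}\|v-w\|$ (valid since $\mathcal{P}_\infty\subseteq\mathcal{P}_d$), the extraction of a convergent subsequence of maximizers $v_{d_k}\to v^\star$ inside the compact set $\mathcal{P}_1\subseteq\Delta^L$, and the tail argument showing $v^\star\in\mathcal{P}_{d_0}$ for every $d_0$ (hence $v^\star\in\mathcal{P}_\infty$) are all sound; the final contradiction with $\inf_{w\in\mathcal{P}_\infty}\|v_{d_k}-w\|\geq\eta_\infty/2$ is immediate. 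This is the standard argument for a decreasing sequence of nonempty compact sets converging in Hausdorff distance to its intersection, and it is precisely the justification the paper leaves implicit.
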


\begin{lemma}[Compact uniform convergence]
\label{thm: compact uniform convergence}
Consider a sequence $\left\{\mathcal{Q}_n\right\}_{n \ge 1}$ of compact convex subsets of $\Delta^L$ such that for all $n \ge 1$, $\mathcal{P}_{\infty} \subseteq \mathcal{Q}_n$ and $\lim_{n \to \infty }d_H(\mathcal{Q}_n, \mathcal{P}_{\infty}) = 0$. Consider also a sequence of states $\{x_n\}_{n \ge 1}$ such that $x_n \to x^*$ when $n \to \infty$. 
Define the functions $\varphi_n(u) := \max_{v \in \mathcal{Q}_n} v^T g(x_n,u)$, $\varphi_{n,\infty}(u) := \max_{v \in \mathcal{P}_{\infty}} v^T g(x_n,u)$ and $\varphi(u) := \max_{v \in \mathcal{P}_{\infty}} v^T g(x^*,u)$. Then, for any compact \revision{set} $\mathcal{K} \subseteq \mathcal{U}$:
\begin{equation}
\lim_{n \to \infty} \sup_{u \in \mathcal{K}}|\varphi_n(u) - \varphi(u)| = 0.
\end{equation}
\end{lemma}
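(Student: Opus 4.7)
\textbf{Proof plan for Lemma~\ref{thm: compact uniform convergence}.} The plan is to control $|\varphi_n(u) - \varphi(u)|$ by a triangle-inequality decomposition that separates the two sources of variation: the shrinking of the risk envelope ($\mathcal{Q}_n \to \mathcal{P}_\infty$) and the convergence of the state ($x_n \to x^*$). Concretely, I would write
\[
    |\varphi_n(u) - \varphi(u)| \;\le\; \underbrace{|\varphi_n(u) - \varphi_{n,\infty}(u)|}_{(\mathrm{I})} \;+\; \underbrace{|\varphi_{n,\infty}(u) - \varphi(u)|}_{(\mathrm{II})},
\]
and bound each term separately, uniformly in $u \in \mathcal{K}$.

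For term (I), since $\mathcal{P}_\infty \subseteq \mathcal{Q}_n$ we have $\varphi_n(u) \ge \varphi_{n,\infty}(u)$, so for any maximizer $v_n(u) \in \mathcal{Q}_n$ one can, using the Hausdorff-distance hypothesis, pick $\bar v(u) \in \mathcal{P}_\infty$ with $\|v_n(u) - \bar v(u)\| \le d_H(\mathcal{Q}_n, \mathcal{P}_\infty) =: \varepsilon_n \to 0$. Cauchy--Schwarz then yields $(\mathrm{I}) \le \varepsilon_n \cdot \sup_{u \in \mathcal{K}} \|g(x_n,u)\|$. The first task is therefore to prove that $\sup_{u \in \mathcal{K}} \|g(x_n,u)\|$ is bounded uniformly in $n$ for all $n$ large enough. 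This follows from A.1: convexity of $u \mapsto g(x,u)(j)$ (finite-valued, hence continuous on $\mathcal{U}$) combined with continuity in $x$ gives joint continuity of $g$, so $g$ is bounded on the compact set $\mathrm{cl}\{x_n\}_{n\ge n_0} \times \mathcal{K}$ (a compact neighborhood of $x^*$ cross $\mathcal{K}$).

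For term (II), I would write
\[
    |\varphi_{n,\infty}(u) - \varphi(u)| \;\le\; \sup_{v \in \mathcal{P}_\infty} |v^T \bigl(g(x_n,u) - g(x^*,u)\bigr)| \;\le\; \|g(x_n,u) - g(x^*,u)\|_\infty,
\]
using $v \in \Delta^L \Rightarrow \|v\|_1 = 1$. It then remains to show uniform convergence $\sup_{u\in\mathcal{K}} \|g(x_n,u) - g(x^*,u)\|_\infty \to 0$. This is the step I anticipate requires the most care: continuity of $g$ in $x$ is assumed only pointwise in $u$, so to upgrade it to uniform convergence on $\mathcal{K}$ I would invoke joint continuity of $g$ (as argued above, from A.1) and then use the classical fact that a continuous function on a compact set is uniformly continuous. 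Applied to $g$ on $\mathrm{cl}\{x_n\}_{n\ge n_0} \times \mathcal{K}$, this delivers the required uniform estimate.

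Combining the two bounds, $\sup_{u\in\mathcal{K}} |\varphi_n(u) - \varphi(u)| \le \varepsilon_n \cdot M + \omega_n$ where $M$ is the uniform bound on $\|g\|$ and $\omega_n \to 0$ is the modulus of uniform continuity evaluated at $\|x_n - x^*\|$; both vanish as $n \to \infty$, proving the claim. The main obstacle, as noted, is the uniform-in-$u$ continuity of $g(\cdot,u)$, which is not literally stated in A.1 but is obtained ``for free'' from the interplay of convexity in $u$ and continuity in $x$ via the continuity of finite-valued convex functions on $\mathbb{R}^m$.
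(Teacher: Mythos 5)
Your proposal is correct and begins exactly as the paper does: the same decomposition into $|\varphi_n(u)-\varphi_{n,\infty}(u)|$ and $|\varphi_{n,\infty}(u)-\varphi(u)|$, and the same Cauchy--Schwarz/Hausdorff-distance bound on the first term. Where you genuinely diverge is in how uniformity over $\mathcal{K}$ is obtained. The paper only establishes \emph{pointwise} convergence $\varphi_n(u)\to\varphi(u)$ and then invokes Theorem 10.8 of \citep{Rockafellar2007} (pointwise convergence of finite convex functions implies uniform convergence on compact sets) applied to the convex functions $\varphi_n$. You instead make each of the two estimates uniform in $u$ directly, reducing the lemma to two facts about $g$: uniform boundedness of $\sup_{u\in\mathcal{K}}\|g(x_n,u)\|$ in $n$, and uniform convergence of $g(x_n,\cdot)\to g(x^*,\cdot)$ on $\mathcal{K}$, both extracted from joint continuity of $g$ on a compact product set. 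Your version buys a more quantitative conclusion (the explicit bound $\varepsilon_n M+\omega_n$) and makes transparent exactly which properties of $g$ are used; the paper's version avoids having to discuss joint continuity of $g$ at all, at the price of invoking the convex-analysis theorem as a black box at the level of $\varphi_n$.

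One caveat on the step you yourself flag as the delicate one: the claim that convexity in $u$ together with continuity in $x$ yields \emph{joint} continuity of $g$ is true, but it does not follow merely from ``finite convex functions are continuous'' --- separate continuity alone never implies joint continuity. The standard proof of the joint-continuity fact is: for $x_n\to x^*$, the convex functions $g(x_n,\cdot)(j)$ converge pointwise to $g(x^*,\cdot)(j)$, hence uniformly on compact subsets by the local equi-Lipschitz property of pointwise-convergent convex families (i.e., the very same Theorem 10.8), and joint sequential continuity follows from the triangle inequality. So your route is not actually more elementary than the paper's; it applies the same theorem one level down, to the components of $g$ rather than to $\varphi_n$. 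The argument still closes, so this is a matter of supplying the right citation rather than a gap.
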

\begin{proof}
Fix $u \in \mathcal{U}$. For any $n \ge 1$, denote with $v_n \in \mathcal{Q}_n$ a point such that $\varphi_n(u) = v_n^T g(x_n,u)$, and \revision{with} $v_{n,\infty} \in \mathcal{P}_{\infty}$ a point such that $\varphi_{n,\infty}(u) = v_{n,\infty}^T \, g(x_n,u)$. Let $\Gamma_{\infty}$ be the projection operator onto the compact convex set $\mathcal{P}_{\infty}$. Then,
\begin{equation}
\label{eq: some inequalities}
\Gamma_{\infty}(v_n)^T g(x_n,u) \le v_{n, \infty}^T g(x_n,u) \le v_n^T g(x_n,u).
\end{equation}
The second inequality in \eqref{eq: some inequalities} results from the fact that $\mathcal{P}_{\infty} \subseteq \mathcal{Q}_n$. By Cauchy-Schwarz inequality,
\begin{equation}
\label{eq: cs}
|\left(\Gamma_{\infty}(v_n)-v_n\right)^T g(x_n,u)|  \le ||\Gamma_{\infty}(v_n) - v_n||_2 ||g(x_n,u)||_2. 
\end{equation}
Since $||\Gamma_{\infty}(v_n) - v_n||_2 \le d_H(\mathcal{Q}_n, \mathcal{P}_{\infty})$ and $\lim_{n\to\infty}x_n = x^*$, we get that the left-hand side (LHS) in \eqref{eq: cs} tends to $0$ when $n \rightarrow \infty$, which implies that \begin{equation}\label{eq:common limit} \lim_{n \to \infty} \varphi_n(u) - \varphi_{n, \infty}(u) = 0.\end{equation}

\revision{Since $g$ is continuous with respect to the state variable $x$ and $\mathcal{P}_{\infty}$ is compact, we get that $x \mapsto \max_{v \in \mathcal{P}_{\infty}} v^T g(x,u)$ is also continuous with respect to $x$}. Therefore, $\lim_{n \to \infty}\varphi_{n, \infty}(u) = \varphi(u)$. Using \eqref{eq:common limit}, $\lim_{n \to \infty} \varphi_{n}(u) = \varphi(u)$. But, by {Theorem 10.8} in \citep{Rockafellar2007}, pointwise convergence of a sequence of convex functions over $\mathcal{U}$ implies uniform convergence over any compact set $\mathcal{K} \subseteq \mathcal{U}$, which is the desired result.
\end{proof}

\revision{Since we assumed the cost functions to be strictly convex with respect to $u$, the risk-sensitive optimization problem admits a unique optimal control. 
\begin{lemma}[Strict convexity of risk]
\label{thm: strict convexity}
For any compact subset $\mathcal{B} \subseteq \Delta^L$ and at any state $x$, the function $u \mapsto \max_{v \in \mathcal{B}} v^T g(x,u)$ is strictly convex. In particular, it admits a unique minimizer.
\end{lemma}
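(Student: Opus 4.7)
The plan is to exploit the linearity of $v \mapsto v^T g(x,u)$ together with Assumption A.1 (strict convexity of $g(x,\cdot)$ componentwise) and then use the standard fact that the pointwise supremum of strictly convex functions is strictly convex.

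First, I would fix an arbitrary state $x$, two distinct control inputs $u_1, u_2 \in \mathcal{U}$ with $u_1 \neq u_2$, and a scalar $\lambda \in (0,1)$. Let $u_\lambda := \lambda u_1 + (1-\lambda) u_2$ and let $\varphi(u) := \max_{v \in \mathcal{B}} v^T g(x,u)$. Since $\mathcal{B} \subseteq \Delta^L$ is compact and $v \mapsto v^T g(x,u_\lambda)$ is linear (hence continuous) in $v$, Weierstrass's theorem yields the existence of a maximizer $v^\star \in \mathcal{B}$ with $\varphi(u_\lambda) = (v^\star)^T g(x,u_\lambda)$.

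Next, I would establish that $u \mapsto (v^\star)^T g(x,u) = \sum_{j=1}^L v^\star(j) \, g(x,u)(j)$ is strictly convex. By A.1, each coordinate $u \mapsto g(x,u)(j)$ is strictly convex. Since $v^\star \in \Delta^L$, the coefficients $v^\star(j)$ are nonnegative and sum to one, so at least one $v^\star(j_0) > 0$. Expanding each $g(x,u_\lambda)(j) < \lambda g(x,u_1)(j) + (1-\lambda) g(x,u_2)(j)$ and taking the convex combination with the weights $v^\star(j)$, the strict inequality at index $j_0$ survives (the other terms contribute at worst a weak inequality), giving
\begin{equation*}
(v^\star)^T g(x,u_\lambda) \;<\; \lambda (v^\star)^T g(x,u_1) + (1-\lambda) (v^\star)^T g(x,u_2).
\end{equation*}
Then, by definition of $\varphi$, $(v^\star)^T g(x,u_i) \le \varphi(u_i)$ for $i = 1,2$, so chaining with the above strict inequality yields $\varphi(u_\lambda) < \lambda \varphi(u_1) + (1-\lambda)\varphi(u_2)$, proving strict convexity.

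Finally, uniqueness of the minimizer follows from two facts: $\mathcal{U}$ is compact (a closed box in $\mathbb{R}^m$) and $\varphi$ is continuous on $\mathcal{U}$ (being the maximum of the jointly continuous function $(v,u) \mapsto v^T g(x,u)$ over the compact set $\mathcal{B}$, e.g., by the Berge maximum theorem), so $\varphi$ attains its minimum on $\mathcal{U}$. Strict convexity then precludes two distinct minimizers: if $u_1 \neq u_2$ both minimized $\varphi$, the midpoint $u_{1/2}$ would yield $\varphi(u_{1/2}) < \varphi(u_1) = \varphi(u_2)$, a contradiction. No step presents a real obstacle here; the only subtlety worth flagging is ensuring that the strict inequality at one coordinate $j_0$ with $v^\star(j_0) > 0$ truly propagates into the weighted sum, which is why I emphasize the simplex constraint $\sum_j v^\star(j) = 1$ guaranteeing at least one strictly positive weight.
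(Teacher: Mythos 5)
Your proof is correct and follows essentially the same route as the paper's: pick a maximizer $v^\star$ at the convex combination point, use strict convexity of $u \mapsto (v^\star)^T g(x,u)$, and then bound each endpoint term by the max over $\mathcal{B}$. You are in fact slightly more careful than the paper in justifying why the weighted sum is strictly convex (via $\sum_j v^\star(j)=1$ forcing a strictly positive weight), and you invoke compactness of $\mathcal{U}$ for existence where the paper instead appeals to the bounded-level-sets assumption; both are valid.
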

\begin{proof}
Fix a state $x$. Take $u_1, u_2 \in \mathcal{U}$ and $\alpha \in [0,1]$. Denote $u_{\alpha} = (1-\alpha) u_1 + \alpha u_2$. Since $\mathcal{B}$ is compact, there exists $\bar{v}\in \mathcal{B}$ such that: $\bar{v}^T g(x,u_{\alpha}) = \max_{v \in \mathcal{B}} v^T g(x,u_{\alpha})$. By strict convexity of $u \mapsto \bar{v}^T g(x,u)$, it follows that: $\bar{v}^T g(x,u_{\alpha}) < (1-\alpha) \bar{v}^T g(x,u_1) + \alpha \bar{v}^T g(x,u_2)$. Taking the worst-case for both terms of the previous inequality's right-hand side, we get that: $\bar{v}^T g(x,u_{\alpha}) < (1-\alpha) \max_{v \in \mathcal{B}} v^T g(x, u_1) + \alpha \max_{v \in \mathcal{B}} v^T g(x,u_2)$, which proves strict convexity of the function $u \mapsto \max_{v \in \mathcal{B}} v^T g(x,u)$. Since the latter function has, by assumption, bounded level sets, it admits a minimizer, which is unique by strict convexity.
\end{proof}}

\begin{lemma}[Optimal control convergence]
\label{thm: argmin convergence}
Define the sequence of functions $\varphi_n$ and $\varphi$ as in Lemma \ref{thm: compact uniform convergence}. Denote $u_n := \argmin_{u\in \U} \varphi_n(u)$ and $u^* := \argmin_{u\in \U} \varphi(u)$ (each of these minima are unique by strict convexity). Then:
\begin{equation}
\label{eq: argmin limit}
\lim_{n \to \infty} u_n = u^*.
\end{equation}
\end{lemma}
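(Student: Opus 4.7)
The plan is to exploit (i) the fact that the control space $\mathcal{U}$ is a compact box, (ii) the uniform convergence established in Lemma~\ref{thm: compact uniform convergence}, and (iii) the uniqueness of the minimizer guaranteed by Lemma~\ref{thm: strict convexity}. The overall strategy is a standard subsequence / unique-limit argument: first show every subsequential limit of $\{u_n\}$ must be the unique minimizer $u^*$ of $\varphi$, then conclude that the entire sequence converges.

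First, I would observe that since $u_n \in \mathcal{U}$ and $\mathcal{U}$ is compact, the sequence $\{u_n\}$ automatically has convergent subsequences, and it suffices to show that every such convergent subsequence has $u^*$ as its limit. So let $\{u_{n_k}\}$ be any subsequence with $u_{n_k} \to \bar{u} \in \mathcal{U}$.

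Next, I would use uniform convergence on the compact set $\mathcal{U}$ to pass to the limit in the chain of inequalities $\varphi_{n_k}(u_{n_k}) \le \varphi_{n_k}(u^*)$. On the right, $\varphi_{n_k}(u^*) \to \varphi(u^*)$ by Lemma~\ref{thm: compact uniform convergence} applied with $\mathcal{K} = \{u^*\}$. On the left, write
\[
\bigl|\varphi_{n_k}(u_{n_k}) - \varphi(\bar u)\bigr| \le \sup_{u \in \mathcal{U}} \bigl|\varphi_{n_k}(u) - \varphi(u)\bigr| + \bigl|\varphi(u_{n_k}) - \varphi(\bar u)\bigr|,
\]
where the first term vanishes by Lemma~\ref{thm: compact uniform convergence} (with $\mathcal{K} = \mathcal{U}$) and the second vanishes because $\varphi$ is continuous on $\mathcal{U}$ (as a maximum over a compact set $\mathcal{P}_\infty$ of the jointly continuous function $v^T g(x^*,u)$). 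Hence $\varphi_{n_k}(u_{n_k}) \to \varphi(\bar u)$, and taking limits yields $\varphi(\bar u) \le \varphi(u^*)$.

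Finally, by strict convexity of $\varphi$ (Lemma~\ref{thm: strict convexity} with $\mathcal{B} = \mathcal{P}_\infty$ and $x = x^*$), the minimizer $u^*$ of $\varphi$ over $\mathcal{U}$ is unique, so the inequality forces $\bar u = u^*$. Since every convergent subsequence of $\{u_n\}$ extracted from the compact set $\mathcal{U}$ has the same limit $u^*$, the full sequence converges to $u^*$, establishing~\eqref{eq: argmin limit}. The main subtlety in this argument is ensuring that $\varphi$ is continuous on $\mathcal{U}$ — but that follows routinely from Berge's maximum theorem (or a direct $\varepsilon$-argument) given compactness of $\mathcal{P}_\infty$ and continuity of $g$ in $u$; no new tools beyond the two preceding lemmas are required.
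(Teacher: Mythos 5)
Your proof is correct, but it takes a different (and more economical) route than the paper's. The paper first establishes convergence of the optimal \emph{values} $\tau_n \to \tau$ and then argues by contradiction: assuming $\|u_n - u^*\| \ge \eta$ along a subsequence, it constructs interpolated points $u'_n = \alpha_n u_n + (1-\alpha_n)u^*$ at distance $\eta/2$ from $u^*$, uses \emph{convexity of $\varphi_n$} to bound $\varphi_n(u'_n)$ by a convex combination of $\tau_n$ and $\varphi_n(u^*)$, and passes to the limit along a convergent subsequence of $\{u'_n\}$ to contradict uniqueness of the minimizer. You instead extract a convergent subsequence of $\{u_n\}$ directly from the compact set $\mathcal{U}$, pass to the limit in $\varphi_{n_k}(u_{n_k}) \le \varphi_{n_k}(u^*)$ via the uniform convergence of Lemma~\ref{thm: compact uniform convergence} and continuity of $\varphi$, and invoke uniqueness. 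Since $\mathcal{U}$ is a compact box here, your argument is complete as stated and avoids both the separate value-convergence step and the convexity interpolation. What the paper's detour through $u'_n$ buys is robustness to an \emph{unbounded} feasible set: the interpolated points live in a ball of radius $\eta/2$ around $u^*$ and hence always admit a convergent subsequence even when $\{u_n\}$ itself might not, which is consistent with the paper's bounded-level-set assumption \textbf{A.2}; that generality is simply not needed in the stated setting, so both proofs are valid and yours is the more elementary one.
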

\begin{proof}
\revision{
Let $\tau_n := \min_{u\in \U} \varphi_n(u)$ and $\tau := \min_{u \in \U} \varphi(u)$. By construction, the control set $\U$ is a compact convex set. Thus, by Lemma~\ref{thm: compact uniform convergence}, the function $\varphi_n$ converges to $\varphi$ uniformly over $\U$. Thus, for any given $\epsilon > 0$, there exists an $n_0(\epsilon) \in \mathbb{N}$ such that for all $n > n_0(\epsilon)$, 
\[
	| \varphi_n(u) - \varphi(u) | \leq \epsilon \quad \forall u \in \U. 
\]
It follows that for $n > n_0 (\epsilon)$, we have:
\begin{equation}
\begin{split}
	| \varphi_n(u^*) - \varphi(u^*) |  = | \varphi_n(u^*) - \tau | &\leq \epsilon\\
	| \varphi_n(u_n) - \varphi (u_n) | = | \tau_n - \varphi (u_n) | &\leq \epsilon.
\end{split}
\label{eq:var_n_bnd}
\end{equation}
Furthermore, 
\[
	\tau_n - \varphi(u_n) \leq \tau_n - \tau \leq \varphi_n(u^*) - \tau,
\]
since $\varphi(u) \geq \tau$ for all $u\in \U$ and $\tau_n \leq \varphi_n(u)$ for all $u \in \U$. Combining this with eq.~\eqref{eq:var_n_bnd}, we have:
\[
	-\epsilon \leq \tau_n - \tau \leq \epsilon, \quad \forall n  > n_0(\epsilon).
\]
Thus, $\tau_n \rightarrow \tau$ as $n\rightarrow \infty$. }
We proceed by contradiction to prove \eqref{eq: argmin limit}. Assume that $\{u_n\}_{n \ge 1}$ does not converge to $u^*$. Without loss of generality, assume that there exists some $\eta >0$ such that for all $n$, $||u_n - u^*|| \ge \eta$. Define $u^{\prime}_n$ and $\alpha_n \in [0,1]$ such that $u^{\prime}_n = \alpha_n u_n + (1-\alpha_n) u^*$, and $||u^* - u^{\prime}_n || = \frac{\eta}{2}$. By convexity of $\varphi_n$: 
\begin{equation}
\label{eq:convexitymax}
\varphi_n(u^{\prime}_n) \le \alpha_n \varphi_n(u_n) + (1-\alpha_n) \varphi_n(u^*) = \alpha_n \tau_n + (1-\alpha_n) \varphi_n(u^*).
\end{equation}
By the pointwise convergence property of $\varphi_n$, we have that $ \lim_{n\rightarrow \infty} \varphi_n(u^*)  = \varphi(u^*) = \tau$. \revision{Furthermore, we have also established that $\lim_{n\rightarrow \infty} \tau_n = \tau^*$.} Thus, the RHS in eq.~\eqref{eq:convexitymax} converges to $\varphi(u^*)$. 
For the LHS, assume that the sequence $\{u^{\prime}_n\}_{n \ge 1}$ converges to $u^{\prime}$ \revision{(or consider a converging subsequence, which is allowed since $\U$ is compact)}. Then, by continuity of $\varphi$ and uniform convergence of $\varphi_n$, it follows that: $\lim_{n \to \infty} \varphi_n(u^{\prime}_n) = \varphi(u^{\prime}) $. Using \eqref{eq:convexitymax}, we get: $\varphi(u^{\prime}) \le \varphi(u^*)$. But $||u^{\prime} - u^*|| = \frac{\eta}{2}$ and by uniqueness of the minimum, this is a contradiction.
\end{proof}

We are now ready to prove Theorem \ref{thm: consistency algorithm}.

\begin{proof}{(Theorem \ref{thm: consistency algorithm}).}
Fix any $x^* \in \mathcal{S}$ and choose a subsequence of $\left\{\mathcal{P}_d\right\}_{d \ge 1}$ (that we still denote $\left\{\mathcal{P}_d\right\}$ for simplicity) such that $x^{*,d} \to x^*$. For any $d \ge 1$ and corresponding demonstration $(x^{*,d}, u^{*,d})$, according to the update of the outer approximation $\mathcal{P}_d$ in Algorithm \ref{a:bound polytope},
\begin{equation} \label{eq: crucial observation} u(\mathcal{P}_d, x^{*,d}) = u(\mathcal{P}, x^{*,d}).\end{equation}
\revision{We justify \eqref{eq: crucial observation}. Given the demonstration pair $(x^{*,d}, u^{*,d})$ where by definition, $u^{*,d} = u(\mathcal{P}, x^{*,d})$, let $\bar{v} \in \mathcal{P}_{d-1}$,  $\bar{\sigma}_+$, $\bar{\sigma}_-$ be solutions of: 
\begin{alignat}{3}
\label{eq: optimal vertex}
\max\limits_{\substack{v \in \mathcal{P}_{d-1} \\ \sigma_+, \sigma_- \geq 0}}& & \quad &g(x^{*,d},u^{*,d})^T v    \\
s.t.& & &0 = \Eval{\nabla_{u(j)} g(x,u)^Tv}{x^{*,d},u^{*,d}}{} + \sigma_+(j), \forall j \in \J^+  & & \nonumber \\
& & &0 = \Eval{\nabla_{u(j)} g(x,u)^Tv}{x^{*,d},u^{*,d}}{} - \sigma_-(j), \forall j \in \J^- \nonumber \\
& & &0 = \Eval{\nabla_{u(j)} g(x,u)^Tv}{x^{*,d},u^{*,d}}{},  \forall j  \notin \J^+, j \notin \J^- \nonumber  \\
& & & \sigma_{+}(j) = 0 , \  \sigma_{-} (j) = 0, \quad  \forall j \notin \J^+, j \notin \J^- \nonumber
\end{alignat}
where $\J^+ = \{j \in \{1,\dots,m\} \,|\, u^{*,d}(j) = u^+(j) \}$ and $\J^- = \{j \in \{1,\dots,m\} \,|\, u^{*,d}(j) = u^-(j) \}$.
According to Algorithm \ref{a:bound polytope}, $\mathcal{P}_d = \{v \in \mathcal{P}_{d-1}\,|\, v^T g(x^{*,d}, u^{*,d}) \le \bar{v}^T g(x^{*,d}, u^{*,d})\}$. Moreover, $\bar{v} \in \mathcal{P}_d$, which implies: 
\begin{equation}
\label{eq: vbar characterization}
\max_{v \in \mathcal{P}_d} v^T g(x^{*,d}, u^{*,d}) = \bar{v}^T g(x^{*,d}, u^{*,d}).
\end{equation}
The set of equations given by the constraints of Problem \eqref{eq: optimal vertex}, with $v$, $\sigma_+$ and $\sigma_-$ fixed and respectively equal to $\bar{v}$, $\bar{\sigma}_+$ and $\bar{\sigma}_-$, are exactly the optimality conditions for the solution of the following convex optimization problem: 
\begin{equation}
\min_{u \in \mathcal{U}} \bar{v}^T g(x^{*,d}, u).
\end{equation} 
Since $u^{*,d}$ satisfies those optimality conditions and using \eqref{eq: vbar characterization}, we have:
\begin{equation}
\label{eq: saddle point}
\begin{split}
\max_{v \in \mathcal{P}_d} v^T g(x^{*,d}, u^{*,d}) & = \bar{v}^T g(x^{*,d}, u^{*,d})\\
&  = \min_{u \in \mathcal{U}} \bar{v}^T g(x^{*,d}, u) \\
& \le \min_{u \in \mathcal{U}} \max_{v \in \mathcal{P}_d} v^T g(x^{*,d}, u) \\
&\le \max_{v \in \mathcal{P}_d} v^T g(x^{*,d}, u^{*,d})
\end{split}
\end{equation}
Hence, all inequalities in \eqref{eq: saddle point} are equalities. In particular, 
\begin{equation}
\min_{u \in \mathcal{U}} \max_{v \in \mathcal{P}_d}v^T g(x^{*,d}, u) = \max_{v \in \mathcal{P}_d} v^T g(x^{*,d}, u^{*,d})
\end{equation}. 
By uniqueness of the minimum as given by Lemma \ref{thm: strict convexity}, it follows that $u^{*,d} = u(\mathcal{P}_d, x^{*,d})$.}
From Lemma \ref{thm: argmin convergence}, we have that $\lim_{d \to \infty}u\left(\mathcal{P}_d,x^{*,d}\right) = u\left(\mathcal{P}_{\infty}, x^*\right)$. From equation \eqref{eq: crucial observation}, we get that $\lim_{d\to \infty}u\left(\mathcal{P}_d,x^{*,d}\right) = u\left(\mathcal{P}, x^*\right)$. Combining the two previous observations, we get the desired result, i.e., $u\left(\mathcal{P}, x^*\right) = u\left(\mathcal{P}_{\infty},x^*\right)$.
\end{proof}

\section{Derivation of Prepare-React Policy Likelihood}\label{app:full_like}

Recall the multi-stage optimization problem objective, repeated here for convenience:
\[\small
 C_{0:N-n_d} + \rho_0\bigg( C_{N-n_d+1:N-1} + C_{N:2N-n_d} + \rho_1 \big( C_{2N-n_d+1:2N-1} + \cdots + \rho_{T-1}\left( C_{TN-n_d+1:TN-1} \right) \cdots \big) \bigg).  
\]
For a ``prepare'' -- ``react'' policy $\hat{\pi}_t$ at stage $t$, let $\hat{\pi}_{t|\mathfrak{p}}$ denote the ``prepare'' portion and $\hat{\pi}_{t|\mathfrak{r}}$ denote the ``react'' portion. Then, we can re-write the objective above to show explicit dependence as follows:
\[
\begin{split}
 C_{0:N-n_d}(\cdot,\hat{\pi}_{0|\mathfrak{p}}) + \rho_0\bigg( &C_{N-n_d+1:N-1}(\cdot, \hat{\pi}_{0|\mathfrak{r}}) + C_{N:2N-n_d}(\cdot,\hat{\pi}_{1|\mathfrak{p}}) +  \\
 									      &\rho_1 \big( C_{2N-n_d+1:2N-1}(\cdot,\hat{\pi}_{1|\mathfrak{r}}) + \cdots + \rho_{T-1}\left( C_{TN-n_d+1:TN-1}(\cdot,\hat{\pi}_{T-1|\mathfrak{r}}) \right) \cdots \big) \bigg).
 \end{split}  
\]
%
Note that the expression within the large brackets is a random variable in $\RR^L$, indexed by all possible realizations of $w_0'$, and a function of the first ``prepare'' sequence $\hat{\pi}_{0|\mathfrak{p}}$. Thus, for a given ``prepare'' sequence $\hat{\pi}_{0|\mathfrak{p}}$ and first disturbance mode $w_0'$, define the optimal tail cost as a function of $\hat{\pi}_{0|\mathfrak{r}}$:
\[
\begin{split}
	\tau[\hat{\pi}_{0|\mathfrak{p}}, w_0'](\hat{\pi}_{0|\mathfrak{r}}) &:= C_{N-n_d+1:N-1}(\cdot, \hat{\pi}_{0|\mathfrak{r}}) + \\
	&\min\limits_{\substack{\hat{\pi}_t \\ t \in [1,T-1]}} \rho_1 \big( C_{N:2N-1}(\cdot, \hat{\pi}_1) + \cdots + \rho_{T-1}( C_{(T-1)N:TN-1}( \cdot,\hat{\pi}_{T-1}) ) \big).
\end{split}
\]
Then, we define the \emph{conditional} distribution for the ``react'' sequence corresponding to disturbance mode $w_0'$, given the first ``prepare'' sequence, as
\[
	\mathrm{Pr} (\hat{\pi}_{0|\mathfrak{r}} \mid  \hat{\pi}_{0|\mathfrak{p}} \ ; w_0' ) \propto \exp\left( - \tau[\hat{\pi}_{0|\mathfrak{p}}, w_0'](\hat{\pi}_{0|\mathfrak{r}}) \right).
\]
The distribution for the first ``prepare'' sequence is \revision{then given by}
\[
	\mathrm{Pr} (\hat{\pi}_{0|\mathfrak{p}}) \propto \exp\left( - \left[ C_{0:N-n_d}(\cdot,\hat{\pi}_{0|\mathfrak{p}}) + \rho^r \left( \softmin_{\hat{\pi}_{0|\mathfrak{r}}} \tau[\hat{\pi}_{0|\mathfrak{p}}, w_0'](\hat{\pi}_{0|\mathfrak{r}}) \right) \right] \right),
\]
where we use $\softmin$ in place of $\min$ to ensure differentiability. Thus, for an observed ``prepare'' -- ``react'' sequence $\hat{\pi}_{0}^*$ associated with the observed disturbance mode $w^*_0$, we obtain
\[
\mathrm{Pr}(\hat{\pi}^*_0) = 	\mathrm{Pr}(\hat{\pi}_{0|\mathfrak{p}}^*) \cdot \mathrm{Pr}(\hat{\pi}_{0|\mathfrak{r}}^* \mid \hat{\pi}_{0|\mathfrak{p}}^* \ ; w_{0}^* ).
\]

\section{Likelihood Gradient Computations}\label{app:gradients}

Define
\begin{equation}
	\sigma [w^*_{-1|tN}](\hat{u}) := \dfrac{\exp \left(-\beta \tilde{\tau}[w_{-1|tN}^*] (\hat{u}) \right)}{\sum_{\hat{u}'} \exp\left(-\beta  \tilde{\tau}[w_{-1|tN}^*] (\hat{u}')\right)}
\label{boltz}
\end{equation}
to be the probability of choosing action trajectory $\hat{u}$ at time-step $tN$ as assumed by the Boltzmann likelihood model in~\eqref{boltz_def}. Then, the gradient of the log-likelihood in~\eqref{like_traj} with respect to parameter $s \in \{r,c\}$ is given by
\begin{equation}
	\dfrac{\beta}{|\mathcal{T}^*|} \sum_{\hat{u}^*_t \in \mathcal{T}^*} \left[ \sum_{\hat{u} \neq \hat{u}^*_t} \sigma[w^*_{-1|tN}](\hat{u}) \nabla_s \tilde{\tau}[w^*_{-1|tN}](\hat{u}) + \left(\sigma[w^*_{-1|tN}](\hat{u}^*_t) -1 \right) \nabla_s \tilde{\tau}[w^*_{-1|tN}](\hat{u}^*_t)  \right].
\label{like_deriv}
\end{equation}

For notational clarity, we use $t$ to denote the $t^{\text{th}}$ $N$-step segment in the demonstrated trajectory $\mathcal{T}^*$ and $t'$ as the stage-wise index within the multi-step planning problem. From equations~\eqref{tau_term_soft},~\eqref{tau_int_soft}, and~\eqref{tau_soft}, we see that the derivative of $\tilde{\tau}[w^*_{-1|tN}](\hat{u})$ can be computed through a recursive implementation of the chain rule, starting from the terminal stage. In the event that all nested LPs are non-degenerate, we obtain the following recursive set of equations for computing the gradients.
\vspace{\baselineskip}

\noindent \emph{Terminal Stage}: From eq.~\eqref{tau_term_soft}, $\tilde{\tau}[\bm{u}_{T-2}, \bm{\omega}_{T-2}](\hat{u})$ is the optimal value of the LP:
\begin{equation}
	\max_{v\in \Pp_r} \left( g(w_{T}', \hat{u} ; c) \right)^T v,
\label{term_LP}
\end{equation}
where $g \in \RR^L$ is the accumulated cost vector over the terminal stage, $C_{(T-1)N: TN-1}$, indexed by the terminal disturbance mode $w_T'$. Let $v^*$ denote the optimal primal solution and $\lambda^*$ the optimal dual variables for the constraints defined in~\eqref{P_param}. Then,
\begin{align}
\nabla_r \tilde{\tau}[\bm{u}_{T-2}, \bm{\omega}_{T-2}](\hat{u}) &=  - \lambda^*, \label{left_r_term} \\
\nabla_c \tilde{\tau}[\bm{u}_{T-2}, \bm{\omega}_{T-2}](\hat{u}) &= \sum_{j = 1}^{L} v^*(j) \, \left[ \Phi^{[j]} (\hat{u}) \right], \label{right_c_term}
\end{align}
where $\Phi^{[j]}(\hat{u})$ is the feature vector sum over $N$ time steps corresponding to $C_{(T-1)N:TN-1}$, given\footnote{For notational convenience, we omit the obvious dependence on state.} action trajectory $\hat{u}$ and disturbance $w^{[j]}$.
\vspace{\baselineskip}

\noindent \emph{Recursion}: For  $t' \in \{0,\ldots,T-2\}$, $\tilde{\tau}[\bm{u}_{t'-1}, \bm{\omega}_{t'-1}](\hat{u})$ is the optimal value of the LP\footnote{For notational simplicity, take $\bm{u}_{-1} = \{\}$.}:
\[
	\max_{v \in \Pp_r} \left( g(w_{t}', \hat{u} ; c) + \tilde{g}(w_{t}', \hat{u} ; c, r) \right)^T v,
\]
where $g \in \RR^L$ is the accumulated cost vector $C_{t'N:(t'+1)N-1}$, and $\tilde{g} \in \RR^L$ is the vector of $\softmin_{\hat{u}'}$ over the tail risk-sensitive costs, i.e., $\tilde{\tau}[\{\bm{u}_{t'-1}, \hat{u} \}, \{\bm{\omega}_{t'-1}, w_t' \}] (\hat{u}')$, indexed by the next disturbance mode $w_t'$, and parameterized with respect to $r,c$. Recall that $w_{-1}' = w^*_{-1|tN}$. Let $v^*$ denote the optimal primal solution and $\lambda^*$ the optimal dual variables for the constraints in eq.~\eqref{P_param}. Then,
\begin{align}
\nabla_r \tilde{\tau}[\bm{u}_{t'-1}, \bm{\omega}_{t'-1}](\hat{u}) &= \sum_{j = 1}^{L}  v^*(j) \, \nabla_r \tilde{g} (w^{[j]}, \hat{u} ; c, r) - \lambda^*, \label{left_r} \\
\nabla_c\, \tilde{\tau}[\bm{u}_{t'-1}, \bm{\omega}_{t'-1}](\hat{u}) &= \sum_{j = 1}^{L} v^*(j)\, \left[ \Phi^{[j]} (\hat{u}) + \nabla_c\, \tilde{g} (w^{[j]}, \hat{u} ; c, r) \right], \label{right_c}
\end{align}
where $\Phi^{[j]}$ is the feature vector sum corresponding to $C_{t'N: (t'+1)N-1}$. The gradients of the softmin vector are given by:
\begin{align}
\nabla_r \tilde{g} (w^{[j]}, \hat{u} ; c, r) &= \mathbb{E}_{\hat{u}' \sim \sigma\left[\tilde{\tau}[\{\bm{u}_{t'-1}, \hat{u} \}, \{\bm{\omega}_{t'-1}, w^{[j]} \}]\right]}  \left[ \nabla_r \tilde{\tau}[\{\bm{u}_{t'-1}, \hat{u} \}, \{\bm{\omega}_{t'-1}, w^{[j]} \}] (\hat{u}') \right], \label{left_r_cont} \\
\nabla_c \tilde{g} (w^{[j]}, \hat{u} ; c, r) &= \mathbb{E}_{\hat{u}' \sim \sigma\left[\tilde{\tau}[\{\bm{u}_{t'-1}, \hat{u} \}, \{\bm{\omega}_{t'-1}, w^{[j]} \}]\right]}  \left[ \nabla_c \tilde{\tau}[\{\bm{u}_{t'-1}, \hat{u} \}, \{\bm{\omega}_{t'-1}, w^{[j]} \}] (\hat{u}') \right] \label{right_c_cont},
\end{align}
where $\sigma\left[\tilde{\tau}[ \bm{u}_{t'} , \bm{\omega}_{t'} ]\right]$ is the discrete (cost-based) Boltzmann distribution, i.e., 
\[
	\sigma\left[ \tilde{\tau}[ \bm{u}_{t'} , \bm{\omega}_{t'}  ]\right] (\hat{u}') \propto \exp\left( -\tilde{\tau}[ \bm{u}_{t'} , \bm{\omega}_{t'}]  (\hat{u}') \right) . 
\]
For our experiments, we found a different form of the $\softmin$ function to be more numerically stable. In particular, we used
\[
	\softmin_{\beta} f(x) = \mathbb{E}_{x \sim \sigma_{\beta}[f] } f(x),
\]
where $\sigma_{\beta}[f]$ is the Boltzmann distribution defined with inverse temperature $\beta$ as:
\[
	\sigma_{\beta}[f](x) \propto \exp\left( -\beta f(x) \right).
\]
The gradients take the exact same form as~\eqref{left_r_cont} and~\eqref{right_c_cont} with $\sigma$ replaced by $\sigma_{\beta}$.

Thus, to compute the gradient in~\eqref{like_deriv} for the $t^{\text{th}}$ demonstration, one would start with the terminal stage derivatives in~\eqref{left_r_term} and~\eqref{right_c_term} for the planning problem defined at time step $tN$, and proceed backwards inductively using~\eqref{left_r}--\eqref{right_c_cont} to arrive at $\nabla \tilde{\tau}[w^*_{-1|tN}](\hat{u})$.

Note that the derivation above assumes non-degeneracy of the LPs. It is readily observed that by the piecewise linearity of LPs with respect to \revision{the} objective coefficients and constraint right-hand-sides, and the Lipschitz property of the $\softmin$ function, $\tilde{\tau}$ is locally Lipschitz. Consequently, the log-likelihood too is locally Lipschitz. Thus, by the Rademacher theorem, the log-likelihood is non-differentiable only over a Lebesgue set of measure zero. If, however, during the updates, any (nested) LP is primal degenerate (multiple dual optimal solutions) or dual degenerate (multiple primal optimal solutions), the log-likelihood is non-differentiable (despite directional-derivatives existing in all directions). Thus, in its full generality, the max likelihood problem corresponds to a non-convex, non-smooth optimization and at points of non-differentiability, one must do extra work to compute a suitable descent direction. Some proposed approaches in the literature include penalized smoothing~\citep{Chen2012}, sampling-based estimation~\citep{BurkeLewisEtAl2005} to approximate the Clarke generalized subdifferential and compute a descent direction, and majorization-minimization~\citep{LanzaMorigiEtAl2017} to iteratively optimize an upper-bound on a minimization problem. Arguably, the field is an active area of research. 

In our implementation, we implemented the following two heuristics to avoid non-degeneracy (and consequent non-differentiability of the log-likelihood): 
\begin{itemize}
\item During the projection step of the projected gradient method for $r$, if a constraint $a_j^T v \leq b(j) - r(j)$ is found to be redundant, the parameter $r(j)$ is re-adjusted as $r(j) \leftarrow r(j) + 0.01$, provided that the resulting polytope is not empty. This has the effect of eliminating the possibility of redundant constraints at primal optimal vertices (thereby eliminating primal degeneracy).
\item In the event that the objective vector is parallel to one of the bounding hyperplanes of the region $\Pp_r$ (i.e., dual degeneracy), we added a small distortion to the objective vector.
\end{itemize}

\end{appendices}\end{document}